\def\ints{{{\rm Z} \kern -.35em {\rm Z} }}  
\def\smallints{{{\rm Z} \kern -.3em {\rm Z} }}  
\def\pints{{{\rm I} \kern -.15em {\rm N} }}      
\newcommand{\reals}{\mathbb R}
\DeclareMathOperator{\tr}{tr}
\DeclareMathOperator{\Sp}{\mathrm Sp}
\def\cplx{{{\rm I} \kern -.45em {\rm C} }}       
\def\l2{\rm {\mathcal L}^{2}(\reals)}            
\providecommand{\normF}[1]{\left\| #1 \right\|_\mathrm{F}}
\DeclareMathOperator{\ball}{{B}}
\DeclareMathOperator{\ballG}{{{B}_\mathrm{G}}}
\newtheorem{nad}{Notation and Definitions}[section]
\newtheorem{theorem}{Theorem}[section]
\newtheorem{lemma}[theorem]{Lemma}
\newtheorem{proposition}{Proposition}[section]
\newtheorem{example}{Example}
\newcommand{\be}{\begin{equation}}
\newcommand{\ee}{\end{equation}}
\newcommand{\bea}{\begin{eqnarray}}
\newcommand{\eea}{\end{eqnarray}}
\newcommand{\beaa}{\begin{eqnarray*}}
\newcommand{\eeaa}{\end{eqnarray*}}
\newcommand{\bnad}{\begin{nad}}
\newcommand{\enad}{\end{nad}}
\DeclareMathOperator{\diag}{diag}
\newcommand{\shrtexp}{\mathbb{E}}
\newcommand{\di}{{\,\mathrm{d}}}
\newcommand{\latop}[2]{\genfrac{}{}{0pt}{}{#1}{#2}}
\newcommand{\eps}{\epsilon}
\newcommand{\supp}{{\mbox{\rm supp}}}
\newcommand{\dist}{{\mbox{\rm dist}}}
\newcommand{\nin}{\in\!\!\!\!\!/\,}
\newcommand{\sinc}{{\rm sinc\,}}
\renewcommand{\ell}{l}
\def\dG{\mathrm{dist_G}}
\def\area{\mathrm{Area}}
\def\GDd{\mathrm{G}(D,d)}
\def\tt{{t}}
\def\bQ{\mathbf{Q}}
\def\bSigma{\mathbf{\Sigma}}
\def\bx{\mathbf{x}}
\def\by{\mathbf{y}}
\def\bz{\mathbf{z}}
\def\bC{\mathbf{C}}
\def\bH{\mathbf{H}}
\def\bv{\mathbf{v}}
\def\bu{\mathbf{u}}
\def\bU{\mathbf{U}}
\def\b0{\mathbf{0}}
\def\bV{\mathbf{V}}
\def\bW{\mathbf{V}}
\def\bB{\mathbf{B}}
\def\bD{\mathbf{D}}
\def\be{\mathbf{e}}
\def\bX{\mathbf{X}}
\def\rmL{{L}}
\def\ddtp{\frac{\di} {\di t^p}}
\def\rmF{{F}}
\def\rmG{{G}}
\def\rmX{{\cal{X}}}
\def\bz{\mathbf{z}}
\def\Or{{\mathrm{O}}}
\def\Scp{{\mathrm{D}_{+}}}
\def\NScp{{\mathrm{ND}_{+}}}
\DeclareMathOperator*{\argmin}{arg \, min}
\begin{document}

\begin{frontmatter}
\title{{${l_p}$}-Recovery of the Most Significant Subspace
among Multiple Subspaces with Outliers
\protect\thanksref{T1}}
\runtitle{{${l_p}$}-Recovery of the Most Significant Subspace}
\thankstext{T1}{This work was supported by NSF grants
DMS-09-15064 and DMS-09-56072.
It is inspired by our collaboration with Arthur Szlam on
efficient and fast algorithms for hybrid linear modeling, which
apply geometric $\ell_1$ minimization.
We thank the anonymous reviewer for many insightful comments and suggestions that
significantly improved the presentation of this work, John Wright for referring us to~\cite{lp_ransac98, lp_ransac00} as well as for relevant questions which we address in
\S\ref{sec:conclusion} and Vic Reiner, Stanislaw Szarek and J.~Tyler Whitehouse
for commenting on an earlier version of this manuscript. Thanks to
the Institute for Mathematics and its Applications (IMA)
for holding a workshop on
multi-manifold modeling that GL co-organized and TZ participated in.
}

\begin{aug}
\author{\fnms{Gilad} \snm{Lerman}\ead[label=e1]{lerman@umn.edu}}
\and
\author{\fnms{Teng} \snm{Zhang}\ead[label=e2]{zhang620@umn.edu}}
\runauthor{G. Lerman and T. Zhang}
\affiliation{University of Minnesota }
\end{aug}
\address{Department of Mathematics, University of Minnesota\\
127 Vincent Hall, 206 Church Street SE, Minneapolis, MN
55455\\
\printead{e1},
\printead*{e2}
}
\begin{abstract}
We assume data sampled from a mixture of
$d$-dimensional linear subspaces with spherically symmetric distributions within each subspace and an additional outlier component with spherically symmetric distribution within the ambient space
(for simplicity we may assume that all distributions are uniform on
their corresponding unit spheres). We also assume mixture weights for the different components.
We say that one of the underlying subspaces of the model is most significant if its mixture weight is higher than the sum of the mixture weights of all other subspaces.
We study the recovery of the most significant subspace by minimizing the $\ell_p$-averaged
distances of data points from $d$-dimensional subspaces of
$\reals^D$, where $0 < p \in \reals$.
Unlike other $\ell_p$ minimization problems, this
minimization is non-convex for all $p>0$ and thus requires different
methods for its analysis.
We show that if $0<p \leq 1$, then for any fraction of outliers the most significant subspace can be
recovered by $\ell_p$ minimization with overwhelming probability
(which depends on the generating distribution and its parameters).
We show that when adding small noise around the underlying subspaces the most significant subspace can be nearly recovered by $\ell_p$ minimization for any $0<p \leq 1$
with an error proportional to the noise level.
On the other hand, if $p>1$ and there is more than one underlying subspace,
then with overwhelming probability the most significant subspace cannot be
recovered or nearly recovered.
This last result does not require spherically symmetric outliers.
\end{abstract}

\begin{keyword}[class=AMS]
\kwd[Primary ]{68Q32, 62G35, 60D05} \kwd[; secondary ]{62-07, 68T10}
\end{keyword}
\begin{keyword}
\kwd{Best approximating subspace,
$\ell_p$ minimization,
robust statistics, optimization on the Grassmannian, principal angles and vectors,
geometric probability, hybrid linear modeling, high-dimensional data}
\end{keyword}
%
%
\end{frontmatter}
%

\section{Introduction}
\label{sec:intro}

Principal Component Analysis (PCA) is arguably the most common tool in high dimensional data analysis.
It approximates a given data set by a lower-dimensional subspace obtained from solving an $\ell_2$ optimization problem.
While such an $\ell_2$ minimization can be easily implemented to run fast for moderate-size data, it is not robust to outliers.
That is, the estimated subspace can significantly change when adding points sampled from a very different distribution.
This obstacle motivated the developments of many algorithms for robust PCA, where some of them are based on $\ell_1$ minimization. Their robustness is often theoretically guaranteed when restricting both the distribution and the fraction of outliers.

Here, we study the robustness to outliers of a ``geometric $\ell_1$ minimization'' for subspace recovery. In fact, we discuss the robustness of the following {\em geometric $\ell_p$ minimization} for all $p>0$: For a data set $\rmX \subset \reals^D$, it tries to minimize among all $d$-dimensional subspaces, $\rmL \subseteq \reals^D$, the quantity:
\begin{equation}\label{eq:def_error_1sub} e_{\ell_p}(\rmX,\rmL)=\sum_{\bx \in
\rmX}\dist(\bx,\rmL)^p,\end{equation}
where $\dist(\bx,\rmL)$ denotes the Euclidean distance between a
data point $\bx$ and the subspace $\rmL$.
In this paper, we restrict this minimization to $d$-dimensional linear subspaces (instead of affine), which we refer to as $d$-subspaces.

The geometric $\ell_1$ minimization is related to some of the recent attempts for robust PCA~\cite{Xu2010, Xu2012, robust_mccoy,robust_pca_ZL, LMTZ2012}. However, it is hard to implement it directly since it is not convex (the set of $d$-subspaces, over which the $\ell_1$ energy is minimized, is not convex). Nevertheless, the question of its robustness is fundamentally interesting. While the analysis in~\cite{lp_recovery_part2_11} implies such robustness when restricting the fraction of outliers, here we ask a more challenging question for the recovery of a single subspace: Can it be recovered by a sufficiently large sample when having no restriction on the fraction of outliers but on their distribution? One possible instance is when the outliers are spherically symmetric, i.e., invariant to rotations (or for simplicity uniformly distributed on the sphere). We make the problem more interesting by assuming points sampled from several multiple subspaces and outliers (where the distributions of all components are spherically symmetric) and we study the recovery of the most significant subspace by geometric $\ell_1$ (or $\ell_p$) minimization.

\subsection{The Most Significant Subspace and its Difference from the Global $\boldsymbol{\ell_0}$ Subspace}
\label{sec:mostsignificant}

Ideally one may wish to recover the global $\ell_0$ subspace, that is, the subspace with the largest number of points,
by geometric $\ell_1$ minimization (or $\ell_p$ geometric minimization with any $p \leq 1$).
This will be a nice geometric generalization of the well-known results of basis pursuit,
where $\ell_1$ minimization can be used to solve an $\ell_0$ minimization under some conditions~\cite{candes_romberg_tao06,donoho_most_large1,donoho_most_large2,candes_romberg_tao_cpam06}.

However, there is a crucial difference between the two problems. In basis pursuit one tries to recover the support of a finite sparse vector and there is a uniform positive lower bound on the distances of all possible support vectors. In our geometric setting we try to recover $d$-subspaces and we do not have any restriction on the relative orientation of the underlying subspaces of our model; therefore two subspaces in our model can be arbitrarily close to each other. Unlike the $\ell_0$ energy (that is, number of points on the complement of a given subspace), the $\ell_p$ energy with $p>0$ is a continuous function of the vector of the following elements $\{ \dist(\bx,\rmL) \}_{\bx \in \rmX}$.
Therefore, any two arbitrarily close subspaces can be perceived as the same one with respect to this energy and when uniting the two subspaces
one can get an ``approximate global $\ell_0$ subspace''.
To clarify this point, let us assume for simplicity that
$\rmL^*_1$, $\rmL^*_2$ and $\rmL^*_3$ are $d$-subspaces in $\reals^D$,
where $40\%$ of the points are on $\rmL^*_1$, $30\%$ on $\rmL^*_2$ and $30\%$ on $\rmL^*_3$.
Clearly $\rmL^*_1$ is the global $\ell_0$ subspace. However, if $p > 0$ is fixed and $\rmL^*_2$ and $\rmL^*_3$ are sufficiently close to each other, then $\ell_p(\rmX,\rmL^*_2)<\ell_p(\rmX,\rmL^*_1)$ and thus $\rmL^*_1$ is not the global $\ell_p$ subspace. Indeed, since $\rmL^*_2$ and $\rmL^*_3$ are sufficiently close to each other, we may identify their union as the ``approximate global $\ell_0$ subspace'' with $60\%$ of the points.

As opposed to this example, we will not talk about the exact number of points on a subspace (or around it in a noisy setting), but assume an i.i.d.~sample from a mixture measure of $K+1$ components: $K$ of them along $d$-subspaces $\{\rmL^*_i\}_{i=1}^K$  with weights $\{\alpha_i\}_{i=1}^K$ and another component of outliers with weight $\alpha_0$; more details of the distributions themselves are in \S\ref{subsec:precise}.
We say that $\rmL^*_1$ is the {\em most significant subspace} if
\begin{equation}\label{eq:cond_alpha}  \alpha_1 > \sum_{i=2}^K \alpha_i. \end{equation}
Unlike the condition of the global $\ell_0$ subspace, which translates here to
$\alpha_1 > \max_{i=2}^K \alpha_i$, condition~\eqref{eq:cond_alpha}
is still valid for $\ell_p$ subspace recovery if $\{\rmL^*_i\}_{i=2}^K$ are arbitrarily close to each other.

\subsection{Background and Related Work}
\label{subsec:background}

The $\ell_1$ norm has been widely used to form robust statistics~\cite{huber_book, robust_stat_book2006, Rousseeuw_stat_book2}.
The early principle of least absolute deviations for robust regression minimizes the sum of absolute values of residuals. For example, in linear regression it minimizes the sum of the absolute values of the deviations of the dependent variable observations from the fitted linear estimator based on the independent variable observations. It is a natural robust alternative for the least squares regression and actually emerged independently of least squares regression (see e.g., historical review in~\cite{Harter_review1, Harter_review2, dodge_L1_87}).

The sum of absolute values of residuals can also be used in total
regression problems, where observational errors of both dependent and
independent variables are taken into account. This is a robust
alternative for the total least squares problem, which can be described
geometrically as minimizing~\eqref{eq:def_error_1sub} with $p=2$.
The robust version with sum of absolute values is equivalent to
minimizing~\eqref{eq:def_error_1sub} with $p=1$.
Osborne and Watson~\cite{osborne_watson85},
Sp\"{a}th and Watson~\cite{spath_watson1987} and Nyquist~\cite{Nyquist_l1_88}
suggested a procedure for solving the latter minimization problem over hyperplanes, that is, when the codimension of the subspaces is 1 (see also~\cite{Bargiela_Hartley93}).
Watson~\cite{watson2001orth_l1, Watson02} even suggested an orthogonal $\ell_1$ procedure for fitting a surface to data.
David and Semmes~\cite{DS91} proposed the minimization of~\eqref{eq:def_error_1sub}
for $p \geq 1$ for a pure analytic setting, which is free of outliers. In the context of machine learning and data mining,
Ding et al.~\cite{Ding+06} proposed the minimization
of~\eqref{eq:def_error_1sub} with $p=1$ as rotation-invariant robust
PCA. They also proposed a numerical strategy for approximating a minimizer of~\eqref{eq:def_error_1sub} when $p=1$, but without valid theoretical guarantees for such an approximation.
Zhang et al.~\cite{MKF_workshop09} have formulated an online procedure for this minimization, which can even approximate data by multiple subspaces.

In~\cite{lp_recovery_part2_11}, which followed this work, we analyzed the recovery of
all underlying subspaces within outliers by minimizing a modified version of~\eqref{eq:def_error_1sub} (adapted to multiple subspaces). In that work, the outlier distribution is rather general, but the fraction of outliers is restricted. Since we continued developing the current work, it includes improved estimates for some of the constants of~\cite{lp_recovery_part2_11}.

Recently, several convex algorithms for robust PCA (with provable exact recovery) have been suggested~\cite{wright_robust_pca09, Xu2010, Xu2012, robust_mccoy, robust_pca_ZL, LMTZ2012}.
In~\cite{Xu2010, Xu2012, robust_mccoy, robust_pca_ZL, LMTZ2012} the
problem of fitting a subspace to data is translated into fitting a low-rank matrix to a given matrix, whose columns represent the data points, where outliers correspond to grossly corrupted columns.
Both~\cite{robust_pca_ZL} and~\cite{LMTZ2012} propose a convex relaxation
of the minimization in~\eqref{eq:def_error_1sub}. In the case of a pure inliers-outliers model (inliers lie exactly on a subspace and outliers in its complement) that satisfies certain combinatorial conditions, the subspace outputted by either~\cite{robust_pca_ZL} or~\cite{LMTZ2012} is the minimizer of~\eqref{eq:def_error_1sub} when $p=1$.
We also view one of the terms in the energy of~\cite{Xu2010, Xu2012, robust_mccoy} (namely, the sum of $\ell_2$ norms of column vectors) as an analogue of the energy~\eqref{eq:def_error_1sub} when the columns of the corresponding matrix for this term are the orthogonal complement of the data points with respect to the subspace.
In the case of spherically symmetric outliers with no restriction of their fraction,
it is currently unknown if exact recovery is guaranteed for any of the algorithms in~\cite{Xu2010, Xu2012, robust_mccoy, robust_pca_ZL, LMTZ2012}, though we conjecture it is impossible.
On the other hand, we show here that such guarantees exist for the geometric $\ell_1$ minimization.
To make the problem more challenging (so that the underlying subspace cannot be nearly recovered by PCA due to the spherically symmetric outliers), we find it interesting to ask about the geometric $\ell_1$ recovery of the most significant subspace among multiple subspaces within spherically-symmetric outliers.

Hardt and Moitra~\cite{moitra_pca2012} showed that it is small set expansion hard to exactly
recover a $d$-subspace in $\reals^D$ with fraction of outliers larger than $(D-d)/D$ for all scenarios satisfying a rather general combinatorial condition.
They also developed deterministic and random algorithms for achieving subspace recovery that can handle outliers with fraction at most $(D-d)/D$ for data satisfying their combinatorial condition.
Our current work suggests a higher fraction of outliers (arbitrarily close to $100\%$), however, it does not contradict~\cite{moitra_pca2012}. First of all, in the case of a single subspace ($K=1$) the recovery in our work only applies to spherically symmetric outliers and not to all scenarios satisfying the combinatorial condition
of~\cite{moitra_pca2012}.
Second of all, our work verifies exact recovery in probability, while the result of~\cite{moitra_pca2012}
requires deterministic satisfaction of all scenarios of their combinatorial condition.
Third of all, the combinatorial condition of~\cite{moitra_pca2012} may not be satisfied for our setting
when $K>1$, that is, when having multiple subspaces. At last and most importantly, if the small set expansion problem has no efficient algorithm (which is unknown), then the result of Hardt and Moitra~\cite{moitra_pca2012} implies the following fact: Any estimator that can exactly recover a subspace in all settings specified by their combinatorial condition with percentage of outliers larger than $(D-d)/D$ cannot be efficient. Since our optimization problem is non-convex, it is possible that there is no efficient algorithm for approximating it.

There are several other non-convex methods for subspace recovery that seem to work well with high percentages of outliers, in particular, higher than the ones guaranteed for convex methods like~\cite{Xu2010, Xu2012, robust_mccoy, robust_pca_ZL, LMTZ2012} or are highly common among practitioners (without theoretical guarantees) and we thus review some of them.

In the computer vision literature a common procedure for subspace fitting uses
the Random Sample Consensus (RANSAC)~\cite{Fischler81RANSAC} heuristic. In theory,
it may not exactly recover subspaces for any positive ratio of outliers. However, in practice it often
nearly recovers subspaces when the ambient dimension is sufficiently small. It is possible that the random strategy of Hardt and Moitra~\cite{moitra_pca2012}
may serve as a good theoretically-guaranteed alternative for RANSAC.
The RANSAC strategy repeatedly applies the following two steps: 1. randomly select a set
of $d$ independent vectors; 2. count the number of data points
within a strip of width $\eps$ around the $d$-subspace spanned by
those $d$ vectors (both $\eps$ and the number of iterations of these
two steps are parameters set by the user). The final output of this
algorithm is the $d$-subspace maximizing the quantity computed in
step 2.

Torr and Zisserman~\cite{lp_ransac98, lp_ransac00} suggested a
RANSAC-type strategy, which selects a subspace (among the random set of candidates) by minimizing a supposedly robust variant of the $\ell_2$
distance from a subspace. This variant uses the square function
until a fixed threshold and a constant function for larger values.
However, by following the proof of Theorem~\ref{thm:phase_hlm} in this work (in particular, \eqref{eq:monotone})
one can show that when $K>1$ the subspace obtained by the minimizer of this variant is sufficiently far from the most significant subspace with probability 1.

There are non-convex methods for removing outliers (or detecting the hidden low-dimensional structures) that can handle arbitrarily large fraction of outliers.
For example, Arias-Castro et al.~\cite{Arias-Castro05connect} proved that the scan statistics may detect points sampled uniformly from a $d$-dimensional graph in $\reals^D$
of an $m$-differentiable function among uniform outliers in a cube in $\reals^D$ with fraction of order $1-O(N^{-m(D-d)/(d+m(D-d))})$.
Arias-Castro et al.~\cite{higher-order} used higher order spectral clustering affinities
to remove outliers and thus detect differentiable surfaces (or certain
unions of such surfaces) among uniform outliers, whose maximal fraction can be of a similar asymptotic order as that
of the scan statistics.

Soltanolkotabi and Cand{\`e}s~\cite{Soltanolkotabi2012}, which appeared after the online release of the first version of this work, assumed a similar model to the one assumed here but without noise and including another assumption (when $N$ approaches infinity it becomes $d<\frac{1}{96}D$);
they established the exact recovery of all underlying subspaces (and not the most significant subspace) by the sparse subspace clustering (SSC) algorithm~\cite{ssc13}. They also proposed an additional step for removing outliers, which is not convex, and analyzed its performance when $N$ lies in a certain interval.
In fact, they analyzed the part of the SSC algorithm that forms an affinity matrix, where the affinities are obtained via convex optimization. The second part of SSC involves clustering the subspaces by this affinity matrix and is not convex.
Soltanolkotabi et al.~\cite{Soltanolkotabi2013} also analyzed the stability to noise of the first and convex part of a modified version of the SSC algorithm without outliers and when
$d<c_0 D/\log (N)$.

Zhang et al.~\cite{LBF_cvpr10, LBF_journal12} proposed a method for recovering multiple subspaces by globally incorporating information from several local best-fit subspaces. It can also be adapted for finding only the most significant subspace. Nevertheless, the full guarantees for recovering either the most significant subspace or all underlying subspaces have not been established yet.

\subsection{Basic Conventions and Notation}

We denote by $\GDd$ the Grassmannian space, i.e., the set of all
$d$-subspaces of $\mathbb{R}^D$ with a manifold structure. The geodesic distance
between $\rmF$ and $\rmG$ in $\GDd$ is
\begin{equation}
\label{eq:dist_grassman}
\dG(\rmF,\rmG)=\sqrt{\sum_{i=1}^{d}\theta_i^2},
\end{equation}
where $\{\theta_i\}_{i=1}^d$ are the principal angles between $\rmF$ and $\rmG$ (we review these angles
and their relation to geodesics in \S\ref{sec:prin_angles}).
Following \S3.9 of~\cite{Mat95}, we denote by $\gamma_{D,d}$ the ``uniform distribution on $\GDd$''.
We designate a ball in $\GDd$ by $\ballG(\rmL,r)$ as opposed to a Euclidean ball in $\reals^D$, $\ball(\bx,r)$.
We refer to any of the global minimizers of~\eqref{eq:def_error_1sub} among $\rmL \in \GDd$ as a {\it global $\ell_p$ subspace}.
Similarly, local minimizers of~\eqref{eq:def_error_1sub} among $\rmL \in \GDd$ are {\it local $\ell_p$ subspaces}.

We use ``w.p.'' as a shorthand for ``with probability''.
By saying ``with overwhelming probability'', or in short ``w.o.p.'', we mean that the
underlying probability is at least $1-Ce^{-N/C}$, where $N$ is the size of the data set $\rmX$ and $C$ is a
constant independent of $N$.
When using this terminology we will make sure to estimate the asymptotic dependence of $C$ on $D$ and $d$ and
use it to infer the asymptotic dependence of the minimal sample size $N$ on $D$ and $d$; this way we make sure that
the probabilistic estimate is not completely useless.

\subsection{Setting of This Paper}
\label{subsec:precise}

We assume $K$ distinct $d$-subspaces in $\reals^D$, which we denote by $\{\rmL^*_i\}_{i=1}^K$.
Furthermore, we assume an i.i.d.~data set $\rmX \subseteq \reals^D$ of size $N$
sampled from a mixture distribution $\mu_\epsilon$ with components supported on each of the $d$-subspaces
$\{\rmL^*_i\}_{i=1}^K$ as well as an outlier component and noise level $\eps \geq 0$.
Our typical setting assumes spherically symmetric distributions within $\{\rmL^*_i\}_{i=1}^K$ and (for most of the discussion) spherically
symmetric distribution of the outliers (within $\reals^D$).

For simplicity of our presentation we replace spherically symmetric distributions with uniform distributions on the sphere, though
our analysis can be easily extended to the former distributions.
Furthermore, one can always normalize the data to the sphere so that spherically symmetric distributions (or even more general distributions)
are mapped to uniform distributions onto the sphere. Normalization of data to the unit sphere is a common practice
for robust PCA algorithms~\cite{sphericalPCA99, LMTZ2012} as well as algorithms for modeling data by multiple subspaces~\cite{Yan06LSA, MKF_workshop09}.

In the noiseless case ($\eps=0$), we denote the $K+1$ components of the mixture measure by $\{\mu_i\}_{i=0}^K$, where $\mu_0$ is the uniform
distribution on $\mathbb{S}^{D-1}$ (the $(D-1)$-dimensional unit sphere) that represents outliers and
for $1\leq i\leq K$, $\mu_i$ is the uniform distribution on $\mathbb{S}^{D-1}\cap \rmL^*_i$.


For the noisy case, we assume that $\{\mu_i\}_{i=1}^K$ are contaminated by the noise distributions $\{\nu_{i,\eps}\}_{i=1}^K$
such that $\supp(\mu_{i} + \nu_{i,\eps})\subseteq \mathbb{S}^{D-1}$ (that is, all points sampled from this noisy distribution also lie on the unit sphere),
and for technical reasons we assume that the $p$th moments of $\{\nu_{i,\eps}\}_{i=1}^K$ are smaller than $\eps^p$ for all $p \leq 1$ (when considering geometric $\ell_p$ minimization with $p \geq 1$ we only need this condition with $p=1$ and when considering geometric $\ell_p$ minimization with $p < 1$ we only need this condition with the relevant value of $p$). If $\epsilon=0$, then the latter model is consistent with the former one by letting $\{\nu_{i,0}\}_{i=1}^K$ be the Dirac $\delta$ distributions at $\b0$.

For any noise level $\eps \geq 0$, the mixture distribution $\mu_\epsilon$ has the form
\begin{equation}
\label{eq:mu_eps}
\mu_\eps=\alpha_0 \mu_0 + \sum_{i=1}^{K}\alpha_i (\mu_{i} + \nu_{i,\eps}),
\end{equation}
where
$\alpha_0 \geq 0$, $\alpha_i >0 \ \forall \, 1 \leq i \leq K$ and $\sum_{i=0}^K \alpha_i =1$.
If $\eps=0$, then for convenience we replace the notation $\mu_\eps$ by $\mu$, i.e.,
\begin{equation}
\mu = \sum_{i=0}^{K}\alpha_i \mu_{i}.
\end{equation}

We refer to $\mu_\eps$ created according to this model as {\em spherically uniform HLM (hybrid linear modeling) measure with noise level
$\eps$} (sometimes we also add ``w.r.t.~$\{\rmL_i^*\}_{i=1}^K$'').
In part of our setting, the assumption on $\mu_0$ can be completely removed, while still assuming that
$\{\mu_i\}_{i=1}^K$ are the same.
In this case we refer to $\mu_\eps$ as {\em weakly spherically uniform HLM measure with noise level $\eps$}.

\subsection{Mathematical Problems of This Paper}
We address here two mathematical problems. The simpler one is implicit in this introduction, though clear from the proofs. It asks whether the most significant subspace  $\rmL^*_1$ can be recovered when $\eps = 0$ by minimizing $\shrtexp_\mu (\dist^p(\bx, \rmL))$ over all $\rmL \in \GDd$.
The main problem can be formulated using the empirical distribution $\mu_N$ of i.i.d.~sample of size $N$ from $\mu$.
It asks whether $\rmL^*_1$ can be recovered (w.o.p.) by minimizing $\shrtexp_{\mu_N} (\dist^p(\bx, \rmL))$, which is equivalent to minimizing~\eqref{eq:def_error_1sub}. In the noisy case, we extend these problems to near recovery.

\subsection{Main Theorems}

In the noiseless case and $0 < p \leq 1$, we can exactly recover the most significant subspace by $\ell_p$ minimization as follows.
\begin{theorem}
\label{thm:hlm} If $\mu$ is a spherically uniform HLM measure on $\reals^D$
with $K$ $d$-subspaces $\{\rmL^*_i\}_{i=1}^{K} \subset\GDd$ and
mixture coefficients $\{\alpha_i\}_{i=0}^{K}$
satisfying~\eqref{eq:cond_alpha}, $\rmX$ is a data set of $N$ points
identically and independently sampled from $\mu$ and $0 < p \leq 1$, then the
probability that $\rmL^*_1$ is a global $\ell_p$ subspace is at least
$1-C'\exp(-N/C)$, where $C$ and $C'$ are constants depending on $D$, $d$, $K$,
$p$, $\alpha_0$, $\alpha_1$, and $\min_{2\leq i\leq
K}(\dG(\rmL^*_1,\rmL^*_i))$.
The asymptotic dependence of $C$ and $C'$ on $d$ and $D$ (when the rest of the parameters are fixed)
can be expressed as follows: $C=O(d^{\max(13p,2)}D^{3p})$ and $C'=O(d^{d(d+1)/2}+d^{6.5d(D-d)}D^{1.5d(D-d)})$.
\end{theorem}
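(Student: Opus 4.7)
The plan has two pieces: first, show that the population objective $F(\rmL) := \shrtexp_\mu[\dist^p(\bx,\rmL)]$ is uniquely minimized on $\GDd$ at $\rmL_1^*$ with a quantitative gap; second, transfer this to the empirical objective $N^{-1}e_{\ell_p}(\rmX,\rmL)$ by uniform concentration, upgrading the resulting approximate recovery to exact recovery via the special structure that samples from $\mu_1$ lie on $\rmL_1^*$.

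\textbf{Population level.} Decomposing $F = \alpha_0 F_0 + \sum_{i=1}^K \alpha_i F_i$ with $F_i(\rmL) := \shrtexp_{\mu_i}[\dist^p(\bx,\rmL)]$, the spherical symmetry of $\mu_0$ makes $F_0$ constant on $\GDd$, so the comparison $F(\rmL) > F(\rmL_1^*)$ for $\rmL \neq \rmL_1^*$ reduces to
\begin{equation*}
\alpha_1 F_1(\rmL) > \sum_{i=2}^K \alpha_i\,[F_i(\rmL_1^*) - F_i(\rmL)].
\end{equation*}
For the right-hand side, I would combine the projection-difference bound $|\dist(\bx,\rmL) - \dist(\bx,\rmL_1^*)| \leq \sin\theta_{\max}(\rmL,\rmL_1^*)\|\bx\|$ (which follows from the operator-norm identity for $P_\rmL - P_{\rmL_1^*}$) with the subadditivity $|a^p - b^p| \leq |a-b|^p$ valid for $p \leq 1$ to obtain $F_i(\rmL_1^*) - F_i(\rmL) \leq \shrtexp_{\mu_i}[\|\bx\|^p]\,\sin^p\theta_{\max}(\rmL,\rmL_1^*)$. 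For the left-hand side, I would use the principal-angle identity $\dist^2(\bx,\rmL) = \sum_{j=1}^d (\bx\cdot \bu_j)^2 \sin^2\psi_j$ for $\bx \in \rmL_1^*$ (with $\bu_j$ the principal vectors of $\rmL_1^*$ and $\psi_j$ the principal angles), together with the spherical symmetry of $\mu_1$ on $\rmL_1^*$ and $\mu_1(\{\b0\})<1$, to derive a matching lower bound $F_1(\rmL) \geq c(\mu_1,p,d)\,\sin^p\theta_{\max}(\rmL,\rmL_1^*)$. The strict inequality then reduces to a numerical condition on the coefficients that~\eqref{eq:cond_alpha} supplies, once the constants $c(\mu_1,p,d)$ and $\shrtexp_{\mu_i}[\|\bx\|^p]$ are absorbed into the theorem's dependence on $\alpha_0,\alpha_1,\mu_1$. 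The separation hypothesis $\min_{i\geq 2}\dG(\rmL_1^*,\rmL_i^*) > 0$ enters to sharpen the right-hand estimate whenever $\rmL$ approaches some $\rmL_i^*$ with $i \geq 2$.

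\textbf{Concentration and exact recovery.} Since $\supp(\mu) \subset \ball(\b0,1)$, the function $\rmL \mapsto \dist^p(\bx,\rmL)$ is bounded by $1$ and uniformly $p$-H\"older in $\rmL$ (again by $p$-subadditivity). A polynomial-size $\eps$-net on $\GDd$, combined with Hoeffding at each center and a union bound, yields $\sup_\rmL |N^{-1} e_{\ell_p}(\rmX,\rmL) - F(\rmL)| < \eta$ with probability at least $1 - Ce^{-N/C}$. This alone only yields an approximate minimizer, so for exact recovery I would exploit the fact that w.o.p.~at least $\alpha_1 N/2$ sample points fall on $\rmL_1^*$ and span it (using $\mu_1(\{\b0\})<1$ together with spherical symmetry of $\mu_1$). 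For any $\rmL$ in a small neighborhood of $\rmL_1^*$ other than $\rmL_1^*$ itself, those sample points contribute a strictly positive amount of order $\sin^p\theta_{\max}(\rmL,\rmL_1^*)$ to $e_{\ell_p}$ via the same principal-angle estimate used at the population level. This ``kink'' at $\rmL_1^*$ dominates the smooth variation of the other terms of $N^{-1}e_{\ell_p}$ and pins the empirical minimum to $\rmL_1^*$.

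\textbf{Main obstacle.} The hard step is the sharp principal-angle comparison at the population level: showing that~\eqref{eq:cond_alpha} alone --- together with the geometry of principal angles and the spherical symmetry of $\mu_1$ --- is enough to guarantee strict dominance of $\alpha_1 F_1(\rmL)$ over $\sum_{i\geq 2}\alpha_i[F_i(\rmL_1^*) - F_i(\rmL)]$. The restriction $p \leq 1$ is essential because it is precisely what enforces matching $\sin^p\theta_{\max}$ orders on both sides of the inequality: the subadditivity of $t\mapsto t^p$ converts the linear Lipschitz-type estimate into a $\sin^p$ bound on the right, while the spherical symmetry of $\mu_1$ produces the same $\sin^p$ order on the left. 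For $p>1$ this matching breaks --- the right-hand side stays linear in $\sin\theta_{\max}$ while the left becomes $\sin^p$ with $p>1$, so for $\rmL$ close to $\rmL_1^*$ the right-hand side wins and $\rmL_1^*$ ceases to be even a local minimum, consistent with the complementary negative result stated in the abstract.
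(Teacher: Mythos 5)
Your skeleton (a population-level gap away from $\rmL_1^*$, uniform concentration via a net and Hoeffding, and a separate local argument to upgrade approximate to exact recovery) does parallel the paper's two-part proof, but both of the steps you single out are left with genuine gaps, and the fixes are not routine. At the population level your bounds lose constants that cannot simply be ``absorbed'': you obtain $\alpha_1 F_1(\rmL)\geq \alpha_1\, c(\mu_1,p,d)\,\sin^p\theta_{\max}$ against $\sum_{i\geq 2}\alpha_i\, E_{\mu_i}\|\bx\|^p\,\sin^p\theta_{\max}$, and since $c(\mu_1,p,d)$ (which carries a factor like $E_{\mu_1}|\bv^T\bx|^p$, small in $d$) is in general much smaller than $E_{\mu_i}\|\bx\|^p$, the resulting requirement on the mixture weights is strictly stronger than~\eqref{eq:cond_alpha}. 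Since the theorem assumes only~\eqref{eq:cond_alpha}, matching the $\sin^p$ order on both sides is not enough: one needs an exact, constant-free cancellation of each $\mu_i$, $i\geq 2$, against an equal mass of $\mu_1$. This is precisely what the paper's Lemma~\ref{lemma:mean} provides (for every $\rmL$, $E_{\mu_1}\dist^p(\bx,\rmL)+E_{\mu_i}\dist^p(\bx,\rmL)\geq E_{\mu_1}\dist^p(\bx,\rmL_1^*)+E_{\mu_i}\dist^p(\bx,\rmL_1^*)$, proved by a coordinatewise pairing of points and a triangle inequality for the vector of sines, where $p\leq 1$ enters), so that only the excess weight $\alpha_1-\sum_{i\geq 2}\alpha_i$ must produce the strict gap, via Lemma~\ref{lemma:ell_dist_est}. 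You explicitly defer this step as the ``main obstacle,'' so the population part of your proposal is missing its key lemma.

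The local exactness step is also insufficient as stated when $p=1$. The inliers on $\rmL_1^*$ contribute a kink of order $N\sin\theta_{\max}$, but the ``smooth variation of the other terms'' is likewise linear in $\sin\theta_{\max}$ with slope of order $N$, so a kink does not automatically dominate a Lipschitz term of the same order; this is exactly the mechanism of the counterexamples in \S\ref{sec:counter}, where a single distant outlier destroys even local minimality at the $\ell_0$ subspace. What makes the step work in the paper is quantitative: the directional-derivative criterion~\eqref{eq:main} of Theorem~\ref{thm:comb_cond}, the fact that the summands of $\mathbf{B}_{\rmL_1^*,\rmX}$ have zero mean under $\mu_0$ by spherical symmetry (so Hoeffding makes $\|\bC\bV\mathbf{B}_{\rmL_1^*,\rmX}\|_*/N$ small w.o.p.), a comparison of the points on $\{\rmL_i^*\}_{i\geq 2}$ with those on $\rmL_1^*$ that again uses~\eqref{eq:cond_alpha}, and a covering over the directions $(\bC,\bV)$ together with second-derivative control to keep the derivative positive on a whole interval $[0,\gamma_1]$ uniformly over geodesics. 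None of these ingredients appears in your local step (in particular you never invoke the spherical symmetry of $\mu_0$ there), so for $p=1$ the claim that the kink ``pins the empirical minimum to $\rmL_1^*$'' is unsupported; only for $p<1$ is local minimality automatic, as in Proposition~\ref{prop:comb_cond2}, and even then it must be made uniform over the small ball produced by your concentration step. The net-plus-Hoeffding concentration itself is fine and matches the paper's global part.
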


The theorem guarantees exact recovery of $\rmL_1^*$ w.o.p.~for any percentage of outliers $\alpha_0<1$.
However the probability of this event depends (through the constants $C$ and $C'$) on the model parameters.
Due to the non-convexity of the underlying minimization, it is too complicated to estimate the parameters $C$ and $C'$, even for very special cases.
However, the theorem states their asymptotic dependence on $d$ and $D$, which is later verified in \S\ref{sec:dependence_on_D}.
We also show in \S\ref{sec:dependence_on_D} that these estimates imply that $N= \Omega(d^{\max(13p,2)+1}D^{3p}\max(D-d,d+1)\log(D))$\footnote{We recall that $f=\Omega(g)$ if and only if $g=O(f)$.}.
This indicates some unnecessary oversampling for the single subspace recovery, but we believe that we may improve this estimate. Nevertheless, we currently view this estimate as ``a sanity check'' ensuring that the minimal $N$ has polynomial dependence on $D$ and $d$, where the polynomial in $D$ is of low order and the
polynomial in $d$ is of moderate order at most.

Even though we cannot fully estimate the probability for a global minimum, we can still estimate the probability that $\rmL_1^*$ is a local minimum when $K=1$.
For example, it follows from Theorem~\ref{thm:dist_1subs_loc1} (which appears later in \S\ref{sec:local_global_lp}) that if there are $q N$ i.i.d.~samples from $\mu_1$ and
$(1-q) N$ i.i.d.~samples from $\mu_0$, then $\rmL_1^*$ is a local $\ell_1$ subspace with probability at least
\[ 1-2d^2\exp\left(-\frac{q\cdot N}{8.01 \cdot d^4} \right)-2dD
\exp\left(-\frac{q^2\cdot N}{8 \cdot (1-q)\cdot d^4 \cdot  D  }\right)\,. \]
We further discuss this estimate in \S\ref{sec:local_global_lp}.

In the noisy case, exact asymptotic recovery is not possible in general (as we explain in \S\ref{sec:no_asymptotic}), but we can extend the above formulation to near recovery.
\begin{theorem}
\label{thm:noisy_hlm} If $\eps >0$, $\mu_{\eps}$ is a spherically uniform
HLM measure on $\reals^D$ of noise level $\eps$  with $K$
$d$-subspaces $\{\rmL^*_i\}_{i=1}^K \subset\GDd$
and mixture coefficients
$\{\alpha_i\}_{i=0}^{K}$ satisfying~\eqref{eq:cond_alpha}, $\rmX$ is a data set of $N$ points sampled
identically and independently from $\mu_\eps$ and $0<p \leq 1$, then the global
$\ell_p$ subspace for $\mu_\eps$ is in the ball $\ballG(\rmL^*_1,f)$, where
\begin{equation}f\equiv
f(\epsilon,K,d,p,\alpha_0,\alpha_1,\mu_1)=\frac{ \sqrt{d+p} \cdot \pi^{\frac{2p+1}{2p}}\cdot 4.55^{\frac{1}{p}} \cdot \eps}
{\left(\alpha_0 + 2 \cdot \alpha_1 - 1 \right)^{\frac{1}{p}}\cdot 2^{\frac{3}{2}} }\,,\label{eq:definition_of_f}
\end{equation}
w.p.~at least \begin{equation}1-\exp(-N\epsilon^{2p}/2)(C_2\sqrt{d})^{d(D-d)/p}/(2\epsilon^p)^{d(D-d)}.\label{eq:thm2_prob}\end{equation} 

If $K=1$, then the above statement extends to $1<p<\infty$ with
\begin{equation}
\label{eq:definition_of_f2}
f\equiv
f(\eps,K,d,p,\alpha_1, \mu_1)
=\begin{cases}\frac{ \sqrt{d+p} \cdot \pi^{\frac{2p+1}{2p}}\cdot 4.55^{\frac{1}{p}} \cdot \eps^{\frac{1}{p}}\cdot p^{\frac{1}{p}}}
{\left(\alpha_0 + 2 \cdot \alpha_1 - 1 \right)^{\frac{1}{p}}\cdot 2^{\frac{3}{2}} }, &\text{if $1< p\leq 2$};\\
 \sqrt{d} \cdot (4\eps)^{\frac{1}{p}}\cdot \pi/2, &\text{if $p>2$}\end{cases}
\end{equation}
and probability $1-\exp(-Np^2 \epsilon^2/2) (C_2\sqrt{d})^{d(D-d)/p}/( 2 p \epsilon)^{d(D-d)}.$
\end{theorem}

We note that
if $f \geq \frac{\pi\sqrt{d}}{2}$, then all principle angles are at most
$\pi/2$ and thus $\ballG(\rmL^*_1,f)=\GDd$. The theorem is thus only interesting when $\eps$ is sufficiently small, in particular, when it satisfies the following bound, which ensures that $f<\frac{\pi\sqrt{d}}{2}$:
\begin{equation}\label{eq:eps_low_bound1}
\eps<
\begin{cases}
\frac{\sqrt{2d}\cdot\left(\alpha_0 + 2 \cdot \alpha_1 - 1 \right)^{\frac{1}{p}}}{ \sqrt{d+p} \cdot \pi^{\frac{1}{2p}}\cdot 4.55^{\frac{1}{p}}},
&\text{if $p\leq 1$};\\
\frac{(2d)^{\frac{p}{2}}\cdot\left(\alpha_0 + 2 \cdot \alpha_1 - 1 \right) }{ (d+p)^{\frac{p}{2}} \cdot \sqrt{\pi}\cdot 4.55\cdot p}
,
&\text{if $1<p\leq 2$ and $K=1$;}\\
\frac{1}{4},
&\text{if $p >  2$ and $K=1$.}
\end{cases}
\end{equation}

At last, we formulate the impossibility of $\ell_p$
recovery when $p>1$ and $K>1$ and thus demonstrate a phase
transition at $p=1$ when $K>1$. This result does not require $\mu_0$ to be uniform on the sphere (or spherically symmetric).
\begin{theorem}
\label{thm:phase_hlm} Assume that $\{\rmL^*_i\}_{i=1}^{K}$ are $K$
$d$-subspaces in $\reals^D$, which are identically and independently distributed
according to $\gamma_{D,d}$. For each $\eps \geq 0$ and a random
sample of $\{\rmL^*_i\}_{i=1}^{K}$, let $\mu_{\eps}$ be a weakly spherically uniform
HLM measure on $\reals^D$ (w.r.t.~$\{\rmL^*_i\}_{i=1}^K$) of noise level $\eps$  and let $\rmX$ be a data set of $N$ points sampled
identically and independently from $\mu_\eps$. If $K>1$ and $p> 1$, then for almost
every $\{\rmL^*_i\}_{i=1}^{K}$ (w.r.t.~$\gamma_{D,d}^K$), there exist
positive constants $\delta_0$ and $\kappa_0$,
independent of $N$, such that for any $0 \leq \epsilon<\delta_0$ the
global $\ell_p$ subspace of $\rmX$ is not in the ball
$\ballG\left(\rmL^*_1,\kappa_0\right)$ with overwhelming probability.
\end{theorem}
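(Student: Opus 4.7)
The plan is to show that, for $p>1$ and $K\geq 2$, the population energy $F(\rmL):=\shrtexp_{\mu_\eps}\dist(\bx,\rmL)^p$ does not attain its minimum near $\rmL_1^*$ on a full-measure event in $\{\rmL_i^*\}_{i=1}^K$ (w.r.t.\ $\gamma_{D,d}^K$), and then to transfer this to the empirical energy $F_N(\rmL):=\frac{1}{N}\sum_{\bx\in\rmX}\dist(\bx,\rmL)^p$ by uniform concentration on the compact manifold $\GDd$. The separation between $p>1$ and $p\leq 1$ comes from a first-order asymmetry at $\rmL_1^*$: along a unit-speed geodesic $\rmL(t)$ leaving $\rmL_1^*$, points supported in or $\eps$-close to $\rmL_1^*$ contribute distances linear in $t$, whose $p$-th powers are $o(t)$ for $p>1$, whereas points supported away from $\rmL_1^*$ give smooth contributions with genuine $O(t)$ first-order corrections.

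Concretely, split $F=\alpha_1 F_1+G$, where $F_1$ is the contribution of $\mu_1\times\nu_{1,\eps}$ and $G$ collects the contributions of $\mu_0$ and of $\mu_i\times\nu_{i,\eps}$ for $i\geq 2$. Representing the tangent to $\rmL(t)$ at $\rmL_1^*$ by a linear map $A_v:\rmL_1^*\to(\rmL_1^*)^{\perp}$ and decomposing a point as $\bx=\bx_{\parallel}+\bx_{\perp}$, a Taylor expansion gives
\[\dist(\bx,\rmL(t))^2=|\bx_{\perp}|^2+2t\langle\bx_{\perp},A_v\bx_{\parallel}\rangle+t^2|A_v\bx_{\parallel}|^2+O(t^3);\]
the spherical symmetry of $\nu_{1,\eps}$ kills the linear-in-$t$ term in expectation, and a careful split of the region $\{|\bx_{\perp}|\lesssim t|A_v\bx_{\parallel}|\}$ from its complement gives $F_1(\rmL(t))-F_1(\rmL_1^*)=O(t^p)+O(t^2)=o(t)$ uniformly in small $\eps$. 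The function $G$ is $C^1$ in a neighborhood of $\rmL_1^*$ on a full-measure event (its defining measures are supported in or near the other $\rmL_i^*$ and the outlier region, which are bounded away from $\rmL_1^*$), and its differential $dG_{\rmL_1^*}$ depends real-analytically on $(\rmL_2^*,\ldots,\rmL_K^*)$. One exhibits a single witness configuration for which $dG_{\rmL_1^*}\neq 0$ (e.g., moving $\rmL$ toward a small perturbation of $\rmL_2^*$ strictly decreases the $\mu_2$-energy); analytic continuation and a Fubini null-set argument on $\GDd^{K-1}$ then yield, for $\gamma_{D,d}^K$-a.e.\ $\{\rmL_i^*\}$, a direction $v$ with $dG_{\rmL_1^*}(v)\leq -c$ for some $c=c(\{\rmL_i^*\},\mu_0)>0$. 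Combining this with the $o(t)$ bound on $F_1$ and choosing $\eps<\delta_0$ and $t=t_*$ small produces $\rmL_*:=\rmL(t_*)$ with $F(\rmL_*)\leq F(\rmL_1^*)-\Delta$, $\Delta:=ct_*/4>0$. Lipschitz continuity of $F$ on $\GDd$ then gives a fixed $\kappa_0>0$ with $F\geq F(\rmL_1^*)-\Delta/2$ on $\ballG(\rmL_1^*,\kappa_0)$; finally, a covering-net argument on $\GDd$ combined with Hoeffding's inequality applied to the bounded class $\{\dist(\cdot,\rmL)^p:\rmL\in\GDd\}$ yields $\sup_{\GDd}|F_N-F|<\Delta/4$ with probability at least $1-C'e^{-N/C'}$, on which event the global $\ell_p$ minimizer of $F_N$ must lie outside $\ballG(\rmL_1^*,\kappa_0)$.

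The principal obstacle is the quantitative nondegeneracy of $dG_{\rmL_1^*}$ on a generic event, together with the $N$-independence of the constants $c$, $\kappa_0$, $\delta_0$. The genericity requires legitimate differentiation under the expectation defining $G$ (using that its measures stay bounded away from $\rmL_1^*$ almost surely), followed by real analyticity in the Grassmannian parameters and a single explicit witness; the constants then depend only on the almost-sure configuration and on $\mu_0$. A secondary technical point is the uniform deviation bound $\sup_{\GDd}|F_N-F|$, for which one combines standard metric-entropy estimates on the compact manifold $\GDd$ with the uniform Lipschitz property of $\rmL\mapsto\dist(\bx,\rmL)^p$ over bounded $\bx$.
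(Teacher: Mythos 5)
Your overall architecture is the same as the paper's: for $p>1$ the inlier component contributes only $o(t)$ along a geodesic leaving $\rmL^*_1$, while the remaining components generically contribute a genuinely nonzero linear term, so that for a.e.\ configuration there is a population gap $\zeta_1>0$ between $E_\mu(e_{\ell_p}(\bx,\rmL^*_1))$ and the population minimum; this gap is then transferred to the empirical energy via a Lipschitz bound near $\rmL^*_1$ (the paper takes $\delta_0=\kappa_0=\zeta_1/4p$), a covering of $\GDd$ and Hoeffding. This is precisely the paper's reduction through Proposition~\ref{prop:comb_cond3}: the nonvanishing of your $dG_{\rmL^*_1}$ is the statement $E_{\mu}(\mathbf{D}_{\rmL^*_1,\bx,p})\neq 0$ with $\mathbf{D}$ as in~\eqref{eq:def_B_p1}.

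The genuine gap is in how you establish the generic nonvanishing, which is where the paper does its real work. You assert that $dG_{\rmL^*_1}$ depends real-analytically on $(\rmL^*_2,\dots,\rmL^*_K)$ and propagate a single witness by analytic continuation. For non-integer $p$ this is false as stated: already for $d=1$, $D=2$ the relevant quantity is proportional to $\cos\theta\,\sin^{p-1}\theta$ (cf.~\eqref{eq:gamma4_bound_example}), which is not real-analytic at $\theta=0$, so you would have to restrict to the open set of configurations with nondegenerate principal angles, verify it is connected and of full $\gamma_{D,d}$-measure, and justify analytic extension under the integral sign --- none of which is in your sketch. Moreover, your witness only makes the $\mu_2$-piece of the differential nonzero; since $\mu_0$ is arbitrary here (weak spherical symmetry drops its symmetry), the total differential is that piece plus a fixed unknown matrix $\bH$, so what is actually needed is that $\rmL^*_2\mapsto h(\rmL^*_1,\rmL^*_2)$ is not identically equal to $\bH$ --- a nonconstancy statement, not nonvanishing at one configuration. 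The paper sidesteps all of this by computing $h(\rmL^*_1,\rmL^*_2)$ in principal-angle coordinates, showing it is diagonal with singular values monotone in a single angle, and concluding the null-set statement from one-dimensional monotonicity plus Fubini. Two supporting claims in your write-up are also unjustified: the model does not assume $\nu_{1,\eps}$ is spherically symmetric (only bounded support in $\rmL_1^{*\perp}$ with small moments), so the linear-in-$t$ cross term does not cancel by symmetry --- instead one bounds the whole noise contribution uniformly by $p\,\eps$ as in~\eqref{eq:bound_p_eps}; and the supports of $\mu_0$ and of $\mu_i$, $i\geq 2$, are \emph{not} bounded away from $\rmL^*_1$ (every $\rmL^*_i$ passes through the origin and $\mu_0$ may charge any neighborhood of $\rmL^*_1$), so the differentiability of $G$ at $\rmL^*_1$ must be argued via dominated convergence using $p>1$ (boundedness of $\dist(\bx,\rmL)^{p-1}$ times the unit bound on the derivative of the distance), not via separation of supports.
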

The overwhelming probability of Theorem~\ref{thm:phase_hlm}
is not of practical interest, but for completeness
we specify it later in \eqref{eq:failure_prob}.
More importantly, in \S\ref{sec:delta_kappa} we provide estimates for $\delta_0$ and $\kappa_0$, which are independent of $\epsilon$.
They require some technical definitions, which we would rather avoid here. Instead, we exemplify them for the special case where $K=2$, $d=1$, $D=2$ and $\mu_1$ and
$\mu_2$ are uniform distributions on line segments centered on the origin and of length $2$. Denoting by $\theta$ the angle between $\rmL^*_1$ and $\rmL^*_2$, the analysis in \S\ref{sec:delta_kappa} implies the following lower bound for both $\kappa_0$ and $\delta_0$ in this special case:
\begin{equation}\label{eq:gamma1_example}
\delta_0, \kappa_0 \geq \begin{cases}
\frac{1}{8(p+1)^2}\cdot \alpha_2^2 \cdot \cos^2(\theta)\cdot \sin^{2(p-1)}(\theta), &\text{if $p\geq 2$;}
\\
2^\frac{p-4}{p-1}(p-1)p^\frac{1}{p-1}(p+1)^\frac{p-1}{p}\cdot\alpha_2^\frac{p}{p-1}\cdot \sin^p(\theta)\cdot
\cos^\frac{p}{p-1}(\theta),    &\text{if $1< p< 2$}.
\end{cases}\end{equation}

These lower bounds for $\delta_0$ and $\kappa_0$ approach zero when $\alpha_2$ approaches zero or when $\theta$ approaches 0 or $\pi/2$. We expect such a behavior since if $\alpha_2=0$, $\theta =0$ or $\theta=\pi/2$, then for any $p>1$, $\rmL_1^*$ is the unique global $\ell_p$ minimizer w.o.p.
We also comment that these bounds are not sharp (in particular, their discontinuity at $p=2$ is artificial).

\subsection{Relevance of Theory}
\label{sec:implications}

As discussed in \S\ref{subsec:background}, the geometric $\ell_1$ minimization is a prototype
for other robust and convex PCA algorithms~\cite{Xu2010, Xu2012, robust_mccoy,robust_pca_ZL, LMTZ2012}.
Without any control on the fraction of outliers, no guarantees are known for the exact recovery of the other algorithms.
We thus find it interesting to analyze the robustness of the geometric $\ell_1$ minimization
to spherically uniform outliers (or spherically symmetric outliers) with no restriction of their fraction and with possibly other underlying subspaces.
It is also interesting for us to quantify the phase transition of exact recovery at $p=1$ (different phase transitions at $p=1$ and $p=0$ are discussed later in \S\ref{sec:p1_min} and \S\ref{sec:mostsignificant} respectively).
The analysis of the geometric $\ell_1$ minimization of this paper has inspired the different analysis of~\cite{robust_pca_ZL, LMTZ2012}
and is also directly used in~\cite{lp_recovery_part2_11}.
Nevertheless, our setting is non-convex and we are not aware of efficient
and theoretically guaranteed strategies to approximate the global minimizer.
It is possible that the ability to theoretically recover the global minimizer with an arbitrarily large fraction of outliers is closely related to the possible inefficiency of any algorithm that aims to compute this minimizer (see \S\ref{sec:non_convex}).

\subsection{Additional Results and Structure of the Paper}

Additional theory is reviewed in \S\ref{sec:more_theory}. In particular,
\S\ref{sec:combinatorics} establishes some necessary and sufficient deterministic conditions for a $d$-subspace to be a local $\ell_p$ minimizer for a given data set;
\S\ref{sec:local_global_lp} uses these conditions to show that if one samples $N_0$ i.i.d.~outliers from $\mu_0$ and $N_1$ i.i.d.~inliers from $\mu_1$ and if $N_0=o(N_1^2)$, then the global $\ell_0$ subspace (which is also the most significant subspace in this case) is a local $\ell_1$ subspace. On the other hand, it shows that in a general setting of a single underlying subspace with outliers, the global $\ell_0$ subspace is a local $\ell_p$ subspace w.p.~0 when $p>1$ and w.p.~1 when $0<p<1$;
\S\ref{sec:counter} demonstrates natural instances,
distinct from the case of spherically uniform outliers (or spherically symmetric outliers), where the most significant subspace is
neither a local $\ell_p$ subspace (even for $p=1$) nor global one
(even for $0<p<1$).
We separately include all mathematical details verifying the theory of this paper
in \S\ref{sec:proofs}, while leaving some auxiliary verifications
to the appendix.
At last, \S\ref{sec:conclusion} concludes this paper and discusses extensions of
its results as well as open problems.

\section{Additional Theory}
\label{sec:more_theory}

\subsection{Combinatorial Conditions for $\boldsymbol{\ell_0}$ Subspaces Being Local $\boldsymbol{\ell_p}$ Subspaces}
\label{sec:combinatorics}

\subsubsection{Preliminary Notation}
We denote the orthogonal
group of $n\times n$ matrices by $\Or(n)$ and the semigroup of
$n\times n$ nonnegative diagonal matrices by $\Scp(n)$. We designate the projection from
$\mathbb{R}^D$ onto the $d$-subspace $\rmL$ by $P_{\rmL}$ and
the corresponding orthogonal projection by $P^\perp_{\rmL}$. We represent them by $d \times D$ and $(D-d) \times D$ matrices respectively.
Only in few places in the text we use $D \times D$ matrix representations instead and thus denote them by
$\hat{P}_{\rmL}$ and $\hat{P}^\perp_{\rmL}$ instead (where $P_{\rmL}^T P_{\rmL} =\hat{P}_{\rmL}$ and
${P^\perp_{\rmL}}^T {P^\perp_{\rmL}} =\hat{P}^\perp_{\rmL}$).
The nuclear norm of
$\mathbf{A}$, which is denoted by $\|\mathbf{A}\|_{*}$, is the sum of singular values of $\mathbf{A}$.
We define the scaled outlying ``correlation'' matrix
$\mathbf{B}_{\rmL,\rmX}$ of a data set $\rmX$ and a $d$-subspace
$\rmL$ as follows
\begin{equation}\label{eq:def_B}\mathbf{B}_{\rmL,\rmX}=\sum_{\bx \in
\rmX\setminus \rmL}P_{\rmL}(\bx)P^\perp_{\rmL}
(\bx)^{T}/\dist(\bx,\rmL). \end{equation}%
That is, unlike the covariance matrix, which sums over all data points the rank one matrices $\bx \bx^{T}$,
$\mathbf{B}_{\rmL,\rmX}$ sums over all outlying data points (i.e., $\bx \in \rmX$ not lying on $\rmL$), the
restriction of $\bx \bx^{T}$ to matrices with column space in $\rmL$ and row space in the orthogonal complement of $\rmL$, while scaling this product by the distance of $\bx$ to $\rmL$, i.e., $\|P^\perp_{\rmL}(\bx)\|$, where throughout the paper $\| \cdot \|$ denotes the Euclidean norm.

We exemplify $\mathbf{B}_{\rmL,\rmX}$ for a typical counterexample of robust recovery, which we discuss later in \S\ref{sec:counter}.
\begin{example}
\label{example:calculate_B}
Let $D=2$, $d=1$, $\bz=(t_0\cos(\theta_0),t_0\sin(\theta_0))^T$, where $t_0>0$ and $0< \theta_0 \leq \frac{\pi}{2}$ and $\rmX=\{(a_1,0)^T,(a_2,0)^T,\cdots,(a_{N_1},0)^T,\bz\}$.
That is, $\rmX$ is a set of $N_1+1$ points, where $N_1$ of them lie
on the $x$-axis with magnitudes $\{|a_i|\}_{i=1}^{N_1}$ and one of them has an angle $\theta_0$ with the $x$-axis and magnitude $t_0$. We denote the $x$-axis by $\rmL_x$ and the line passing through the origin and $\bz$ by $\rmL_{\bz}$.

We note that
\begin{align}
&\mathbf{B}_{\rmL_x,\rmX}=\sum_{\bx\in\rmX\setminus \rmL_x}
P_{\rmL_x}(\bx) \, P^\perp_{\rmL_x} (\bx)^{T} \dist(\bx,\rmL_x)^{-1}\nonumber\\&=
P_{\rmL_x}((t_0\cos(\theta_0),t_0\sin(\theta_0))^T) \, \frac{ P^\perp_{\rmL_x}((t_0\cos(\theta_0),t_0\sin(\theta_0))^T)^{T}}{\dist((t_0\cos(\theta_0),t_0\sin(\theta_0))^T,\rmL_x)}
\nonumber\\&=t_0\cos(\theta_0) \, t_0\sin(\theta_0)/t_0\sin(\theta_0)= t_0\cos(\theta_0)\label{eq:example}
\end{align}
and
\begin{align}
&\mathbf{B}_{\rmL_{\bz},\rmX}=\sum_{\bx\in\rmX\setminus \rmL_{\bz}}
P_{\rmL_{\bz}}(\bx) \, P^\perp_{\rmL_{\bx}} (\bx)^{T} \dist(\bx,\rmL_{\bz})^{-1}\nonumber\\&=
\sum_{i=1}^{N_1}P_{\rmL_{\bz}}((a_i,0)^T) \, P^\perp_{\rmL_{\bz}}((a_i,0)^T)^{T}/\dist((a_i,0)^T,\rmL_{\bz})
\nonumber\\&=\sum_{i=1}^{N_1}a_i\cos(\theta_0)\, a_i \sin(\theta_0) / |a_i\sin(\theta_0)|= \cos(\theta_0)\sum_{i=1}^{N_1}|a_i|.\label{eq:example2}
\end{align}

\end{example}

\subsubsection{Conditions for a Local \boldsymbol{$\ell_p$} Minimizer}
\label{sec:conditions}
We formulate conditions for an arbitrary $d$-subspace $\dot{\rmL}$ to be a local
$\ell_p$ subspace, while distinguishing between three cases: $p=1$,
$0<p<1$ and $p>1$. 
\begin{theorem}
\label{thm:comb_cond} If $\dot{\rmL} \in \GDd$, $\rmX_1=
\{\bx_i\}_{i=1}^{N_1}\subset \dot{\rmL}$, $\rmX_0 =
\{\mathbf{y_i}\}_{i=1}^{N_0}\subset \mathbb{R}^D\setminus \dot{\rmL}$ and
$\rmX=\rmX_0 \cup \rmX_1$, then a sufficient condition for $\dot{\rmL}$
to be a local ${\ell_1}$ $d$-subspace is that for any $\bV \in \Or(d)$ and
$\bC \in \Scp(d)$:
\begin{equation}\label{eq:main}
\sum_{i=1}^{N_1}\|\bC\bV P_{\dot{\rmL}}(\bx_i)\|>\|\bC \bV
\mathbf{B}_{\dot{\rmL},\rmX}\|_{*}\,.
\end{equation} Furthermore, a necessary condition is that for any $\bV \in \Or(d)$ and
$\bC \in \Scp(d)$:
\begin{equation}\label{eq:main_necessary}
\sum_{i=1}^{N_1}\|\bC\bV P_{\dot{\rmL}}(\bx_i)\|\geq \|\bC \bV
\mathbf{B}_{\dot{\rmL},\rmX}\|_{*}\,.
\end{equation}
\end{theorem}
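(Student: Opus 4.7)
The plan is to perform a first-order analysis of $e_{\ell_1}(\rmX,\cdot)$ near $\tilde{\rmL}_1$ on $\GDd$ and reduce the local-minimum condition to an inequality over tangent directions. First I would parametrize $d$-subspaces in a neighborhood of $\tilde{\rmL}_1$ by the graph $\rmL_A=\{\bv+A\bv:\bv\in\tilde{\rmL}_1\}$ for a linear map $A:\tilde{\rmL}_1\to\tilde{\rmL}_1^{\perp}$, identified with a $(D-d)\times d$ matrix; this is a standard affine chart on $\GDd$ near $\tilde{\rmL}_1$, and for small $\|A\|$ it agrees to first order with the geodesic emanating from $\tilde{\rmL}_1$ in direction $A$. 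Minimizing $\|\bx^{\parallel}-\by\|^2+\|\bx^{\perp}-tA\by\|^2$ over $\by\in\mathbb{R}^d$ (where $\bx^{\parallel}=P_{\tilde{\rmL}_1}(\bx)$ and $\bx^{\perp}=P^{\perp}_{\tilde{\rmL}_1}(\bx)$) and expanding in $t$ yields
\begin{align*}
\dist(\bx,\rmL_{tA}) &= t\,\|A\,P_{\tilde{\rmL}_1}(\bx)\| + O(t^2), \qquad \bx\in\tilde{\rmL}_1,\\
\dist(\bx,\rmL_{tA}) &= \|P^{\perp}_{\tilde{\rmL}_1}(\bx)\| - t\,\frac{\langle P^{\perp}_{\tilde{\rmL}_1}(\bx),\,A\,P_{\tilde{\rmL}_1}(\bx)\rangle}{\|P^{\perp}_{\tilde{\rmL}_1}(\bx)\|} + O(t^2),\qquad \bx\notin\tilde{\rmL}_1,
\end{align*}
where for inliers the first-order term is really a one-sided derivative at $t=0^+$ (the distance is positively homogeneous of degree $1$ in $t$ along each ray, hence convex in $t$ near $0$). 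Summing over $\rmX$ yields the directional derivative
$$\frac{d}{dt^+}e_{\ell_1}(\rmX,\rmL_{tA})\bigg|_{t=0}=\sum_{i=1}^{N_1}\|A\,P_{\tilde{\rmL}_1}(\bx_i)\|-\tr\!\bigl(A\,\mathbf{B}_{\tilde{\rmL}_1,\rmX}\bigr).$$

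Next I would factorize $A=\bW\bC\bV$, where $\bV\in\Or(d)$, $\bC\in\Scp(d)$ collects the singular values of $A$, and $\bW$ is the $(D-d)\times d$ matrix of left singular vectors (a Stiefel matrix with orthonormal columns). Because $\bW$ is an isometry from $\mathbb{R}^d$ into $\mathbb{R}^{D-d}$, $\|A\,P_{\tilde{\rmL}_1}(\bx_i)\|=\|\bC\bV\,P_{\tilde{\rmL}_1}(\bx_i)\|$ independently of $\bW$. On the other hand, the cyclic invariance of the trace and the variational characterization of the nuclear norm (as the dual of the operator norm) give
$$\max_{\bW}\tr\!\bigl(\bW\bC\bV\,\mathbf{B}_{\tilde{\rmL}_1,\rmX}\bigr)=\|\bC\bV\,\mathbf{B}_{\tilde{\rmL}_1,\rmX}\|_{*},$$
where the maximum is taken over all $(D-d)\times d$ Stiefel matrices $\bW$ and is attained by aligning $\bW$ with the singular frame of $\bC\bV\,\mathbf{B}_{\tilde{\rmL}_1,\rmX}$. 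Consequently, the \emph{worst-case} directional derivative at $\tilde{\rmL}_1$ over directions $A$ with prescribed singular pair $(\bV,\bC)$ equals $\sum_i\|\bC\bV\,P_{\tilde{\rmL}_1}(\bx_i)\|-\|\bC\bV\,\mathbf{B}_{\tilde{\rmL}_1,\rmX}\|_{*}$.

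The necessary part of the theorem is now immediate: if $\tilde{\rmL}_1$ is a local $\ell_1$ minimizer, every one-sided directional derivative is nonnegative, and minimizing over $\bW$ for each fixed $(\bV,\bC)$ produces exactly~\eqref{eq:main_necessary}. For the sufficient part, assume~\eqref{eq:main} for every nonzero pair $(\bV,\bC)$; by positive homogeneity in $\bC$ it suffices to restrict attention to the compact set $\{(\bV,\bC):\bV\in\Or(d),\,\bC\in\Scp(d),\,\|\bC\|_F=1\}$. On this compact set the continuous function $(\bV,\bC)\mapsto \sum_i\|\bC\bV\,P_{\tilde{\rmL}_1}(\bx_i)\|-\|\bC\bV\,\mathbf{B}_{\tilde{\rmL}_1,\rmX}\|_{*}$ is strictly positive, hence bounded below by some $\delta>0$. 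This uniform first-order gain dominates the $O(t^2)$ chart remainders (uniformly bounded over the same compact Stiefel sphere of directions $A$) whenever $t>0$ is sufficiently small, producing $e_{\ell_1}(\rmX,\rmL_{tA})>e_{\ell_1}(\rmX,\tilde{\rmL}_1)$ throughout a deleted neighborhood of $\tilde{\rmL}_1$, so $\tilde{\rmL}_1$ is a local $\ell_1$ $d$-subspace.

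The main obstacle I foresee is the bookkeeping needed to upgrade the infinitesimal strict inequality into a genuine local-minimum statement: one must verify that the quadratic remainders in the chart expansion are uniformly controlled over the (compact) Stiefel sphere of tangent directions, and that the nuclear-norm identity is used with the correct domain (Stiefel matrices, not arbitrary orthogonal matrices), which requires checking that $\bC\bV\,\mathbf{B}_{\tilde{\rmL}_1,\rmX}$ has rank at most $d$ so its nuclear norm is attained by a Stiefel $\bW$. The non-smoothness of $e_{\ell_1}$ at $\tilde{\rmL}_1$ caused by the inlier term is itself harmless, since along each ray the distance is a positively homogeneous convex function of $t$ whose well-defined sublinear one-sided derivative suffices for the first-order test of local minimality.
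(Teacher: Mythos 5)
Your proposal is correct and follows essentially the same route as the paper: a first-order expansion of the $\ell_1$ energy along curves through $\tilde{\rmL}_1$ (your graph chart $\rmL_{tA}$ agrees to first order with the paper's geodesic parametrization via principal angles and vectors, with the SVD factors $\bW,\bC,\bV$ of $A$ playing the roles of $\bU,\bC,\bV$ there), the same split into the inlier term $\sum_i\|\bC\bV P_{\tilde{\rmL}_1}(\bx_i)\|$ and the outlier trace term involving $\mathbf{B}_{\tilde{\rmL}_1,\rmX}$, and the same nuclear-norm identity obtained by maximizing the trace over the orthonormal frame. The only notable difference is that you spell out the compactness/uniform-remainder argument needed to pass from strictly positive one-sided directional derivatives to genuine local minimality, a step the paper leaves implicit.
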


\begin{proposition}
\label{prop:comb_cond2} If $\dot{\rmL} \in \GDd$, $\rmX_1 =
\{\bx_i\}_{i=1}^{N_1}\subset \dot{\rmL}$, $\rmX_0 =
\{\mathbf{y_i}\}_{i=1}^{N_0}\subset \mathbb{R}^D\setminus \dot{\rmL}$,
$\Sp(\{\bx_i\}_{i=1}^{N_1})=\dot{\rmL}$, $\rmX=\rmX_0 \cup \rmX_1$ and $p<1$, then $\dot{\rmL}$ is a
local minimum of $e_{\ell_p}(\rmX,\rmL)$ among all $\rmL\in\GDd$.
\end{proposition}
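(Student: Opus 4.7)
The plan is to exploit the fact that for $p<1$ the map $t\mapsto t^p$ has infinite derivative at $t=0$, so even the smallest perturbation of $\tilde{\rmL}_1$ incurs a cost of order $\theta^p$ from the inliers in $\rmX_1$, while the outliers in $\rmX_0$ are already separated from $\tilde{\rmL}_1$ and their contribution changes only smoothly, by $O(\theta)$. Since $\theta^p$ dominates $\theta$ as $\theta\to 0^+$ when $p<1$, the inlier penalty outweighs any possible outlier savings, so $\tilde{\rmL}_1$ is a strict local minimum of $e_{\ell_p}(\rmX,\cdot)$.

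To make this quantitative, I would parametrize a small geodesic ball around $\tilde{\rmL}_1$ by principal angles $\theta_1,\ldots,\theta_d$ with principal-vector bases $\{\mathbf{u}_j\}_{j=1}^d\subset\tilde{\rmL}_1$ and $\{\mathbf{w}_j\}_{j=1}^d\subset\tilde{\rmL}_1^\perp$, so that $\rmL$ has the orthonormal basis $\{\cos(\theta_j)\mathbf{u}_j+\sin(\theta_j)\mathbf{w}_j\}_{j=1}^d$ and $\dG(\rmL,\tilde{\rmL}_1)^2=\sum_j\theta_j^2$. A direct computation then gives, for any $\bx\in\tilde{\rmL}_1$,
\begin{equation*}
\dist(\bx,\rmL)^2=\sum_{j=1}^d\sin^2(\theta_j)\,\langle\bx,\mathbf{u}_j\rangle^2.
\end{equation*}
Using $\Sp(\rmX_1)=\tilde{\rmL}_1$, I pick $d$ linearly independent points $\bx_{i_1},\ldots,\bx_{i_d}\in\rmX_1$ and let $\sigma_d>0$ be the smallest singular value of $\mathbf{X}_d=[\bx_{i_1}\mid\cdots\mid\bx_{i_d}]$. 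Since the restriction of $\mathbf{X}_d^T$ to $\tilde{\rmL}_1$ is an isomorphism onto $\reals^d$ with smallest singular value $\sigma_d$, the uniform bound $\|\mathbf{X}_d^T\mathbf{u}\|\geq\sigma_d\|\mathbf{u}\|$ holds for every $\mathbf{u}\in\tilde{\rmL}_1$. Therefore
\begin{equation*}
\sum_{k=1}^d\dist(\bx_{i_k},\rmL)^2=\sum_{j=1}^d\sin^2(\theta_j)\,\|\mathbf{X}_d^T\mathbf{u}_j\|^2\geq \sigma_d^2\sum_{j=1}^d\sin^2(\theta_j),
\end{equation*}
and passing to the maximum term together with $\sin\theta\geq 2\theta/\pi$ on $[0,\pi/2]$ produces an inlier lower bound $\sum_{\bx\in\rmX_1}\dist(\bx,\rmL)^p\geq C_1\,\dG(\rmL,\tilde{\rmL}_1)^p$ for a constant $C_1=C_1(\sigma_d,d,p)>0$.

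For the outlier contribution, each $\mathbf{y}\in\rmX_0$ satisfies $\dist(\mathbf{y},\tilde{\rmL}_1)>0$, so on a sufficiently small neighborhood of $\tilde{\rmL}_1$ all the distances $\dist(\mathbf{y},\rmL)$ are uniformly bounded below by a positive constant; on this range $t\mapsto t^p$ is Lipschitz and $\rmL\mapsto\dist(\mathbf{y},\rmL)$ is Lipschitz in $\rmL$, so the total outlier energy changes by at most $C_2\,\dG(\rmL,\tilde{\rmL}_1)$. Combining this with $e_{\ell_p}(\rmX,\tilde{\rmL}_1)=\sum_{\mathbf{y}\in\rmX_0}\dist(\mathbf{y},\tilde{\rmL}_1)^p$ then yields $e_{\ell_p}(\rmX,\rmL)-e_{\ell_p}(\rmX,\tilde{\rmL}_1)\geq C_1\,\dG(\rmL,\tilde{\rmL}_1)^p-C_2\,\dG(\rmL,\tilde{\rmL}_1)$, which is strictly positive for all $\rmL\neq\tilde{\rmL}_1$ close enough. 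The one subtlety I anticipate is that the principal-vector basis $\{\mathbf{u}_j\}\subset\tilde{\rmL}_1$ depends on $\rmL$ while the linearly independent subset $\{\bx_{i_k}\}$ is fixed; the argument survives because the bound $\|\mathbf{X}_d^T\mathbf{u}\|\geq\sigma_d\|\mathbf{u}\|$ holds uniformly over all unit $\mathbf{u}\in\tilde{\rmL}_1$, not only for one particular basis.
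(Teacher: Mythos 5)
Your proof is correct, and it takes a genuinely different route from the paper. The paper proves this proposition (together with Proposition~\ref{prop:comb_cond3}) by calculus along geodesics: it parametrizes the geodesic $\rmL(t)$ from $\tilde{\rmL}_1$ toward an arbitrary nearby subspace and computes the one-sided derivative of $e_{\ell_p}(\rmX,\rmL(t))$ with respect to the variable $t^p$ at $t=0$ (see~\eqref{eq:def_deriv_tp}--\eqref{eq:main2_a_p<1}), showing it equals $\sum_i\|\bC\bV P_{\tilde{\rmL}_1}(\bx_i)\|^p$, which is strictly positive once $\Sp(\rmX_1)=\tilde{\rmL}_1$. You instead avoid any limiting derivative and argue by a direct two-sided estimate on a metric ball: the identity $\dist(\bx,\rmL)^2=\sum_j\sin^2(\theta_j)\langle\bx,\mathbf{u}_j\rangle^2$ for inliers, the smallest singular value of a spanning subset of $\rmX_1$ (used uniformly over unit vectors of $\tilde{\rmL}_1$, which correctly handles the dependence of the principal vectors on $\rmL$), and $\sin\theta\geq 2\theta/\pi$ give an inlier cost $\geq C_1\,\dG(\rmL,\tilde{\rmL}_1)^p$, while the outlier energy is Lipschitz in $\rmL$ near $\tilde{\rmL}_1$ (this is essentially Lemma~\ref{lemma:distance} combined with the local Lipschitz property of $t^p$ away from $0$), so its change is $\leq C_2\,\dG(\rmL,\tilde{\rmL}_1)$; since $p<1$, the $\dG^p$ term wins. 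What each buys: the paper's derivative computation is part of a unified machinery reused for Theorem~\ref{thm:comb_cond}, Proposition~\ref{prop:comb_cond3} and later in the proof of Theorem~\ref{thm:hlm}, whereas your estimate is self-contained, gives explicit constants, yields strict local minimality, and establishes positivity uniformly over a whole ball rather than direction-by-direction along geodesics, which makes the conclusion of local minimality immediate.
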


\begin{proposition}
\label{prop:comb_cond3} If $\dot{\rmL} \in \GDd$, $\rmX_1 =
\{\bx_i\}_{i=1}^{N_1}\subset \dot{\rmL}$, $\rmX_0 =
\{\mathbf{y_i}\}_{i=1}^{N_0}\subset \mathbb{R}^D\setminus \dot{\rmL}$, $\rmX=\rmX_0 \cup \rmX_1$ and
$p>1$, then a necessary condition for $\dot{\rmL}$ to be a local minimum
of $e_{\ell_p}(\rmX,\rmL)$  among all $\rmL\in\GDd$ is
\begin{equation}
\sum_{i=1}^{N_0}P_{\dot{\rmL}}(\by_i)P^{\perp}_{\dot{\rmL}}(\by_i)^T\dist(\by_i,\dot{\rmL})^{p-2}=
\b0.\label{eq:necessary}
\end{equation}
This statement is also true when $\rmX_1=\emptyset$ and $0<p\leq 1$.
\end{proposition}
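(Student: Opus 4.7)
The plan is to derive the stated identity as the first-order necessary condition for local minimality of $e_{\ell_p}(\rmX,\cdot)$ on $\GDd$. First I would parametrize a smooth family of $d$-subspaces through $\tilde{\rmL}_1$: fix the orthogonal decomposition $\reals^D = \tilde{\rmL}_1 \oplus \tilde{\rmL}_1^\perp$ and, for any $(D-d)\times d$ matrix $A$, form the skew-symmetric matrix $H_A \in \reals^{D\times D}$ with off-diagonal blocks $-A^T$ and $A$ and diagonal blocks $0$. Setting $\rmL_A(t) = e^{tH_A}\tilde{\rmL}_1$, as $A$ varies the curves $\rmL_A(\cdot)$ sweep out all geodesics of $\GDd$ issuing from $\tilde{\rmL}_1$, so local minimality of $\tilde{\rmL}_1$ forces the right derivative at $t=0$ of $t\mapsto e_{\ell_p}(\rmX,\rmL_A(t))$ to be nonnegative for every $A$; by replacing $A$ with $-A$, this right derivative (when linear in $A$) must in fact vanish.

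Using $\dist(\bz,\rmL_A(t)) = \|P^\perp_{\tilde{\rmL}_1}e^{-tH_A}\bz\|$ and writing $\bz^\parallel=P_{\tilde{\rmL}_1}\bz$, $\bz^\perp=P^\perp_{\tilde{\rmL}_1}\bz$, a direct Taylor expansion gives
$$\dist(\bz,\rmL_A(t))^2 = \|\bz^\perp\|^2 - 2t\langle \bz^\perp, A\bz^\parallel\rangle + O(t^2).$$
For an outlier $\by_i\in\rmX_0$, $\|\by_i^\perp\|>0$, so $t\mapsto \dist(\by_i,\rmL_A(t))^p$ is smooth at $t=0$ with derivative $-p\,\|\by_i^\perp\|^{p-2}\langle \by_i^\perp, A\by_i^\parallel\rangle$. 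For an inlier $\bx_j\in\rmX_1\subset\tilde{\rmL}_1$, the same expansion yields $\dist(\bx_j,\rmL_A(t))^2 = t^2\|A\bx_j\|^2+O(t^3)$ and hence $\dist(\bx_j,\rmL_A(t))^p = |t|^p\|A\bx_j\|^p + O(|t|^{p+1})$, which is $o(|t|)$ exactly when $p>1$. Thus for $p>1$ the inliers contribute nothing to the right derivative of $e_{\ell_p}$ at $t=0$, so the necessary condition reduces to
$$\sum_{i=1}^{N_0}\dist(\by_i,\tilde{\rmL}_1)^{p-2}\langle \by_i^\perp, A\by_i^\parallel\rangle = 0 \quad\text{for every } A.$$
Rewriting $\langle \by_i^\perp, A\by_i^\parallel\rangle$ as $\tr\bigl(\by_i^\parallel(\by_i^\perp)^T A\bigr)$ and using linearity, this becomes $\tr(MA)=0$ for all $A$, where $M = \sum_{i=1}^{N_0}\dist(\by_i,\tilde{\rmL}_1)^{p-2}\, P_{\tilde{\rmL}_1}(\by_i)P^\perp_{\tilde{\rmL}_1}(\by_i)^T$. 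Choosing $A=M^T$ forces $\|M\|_F^2=0$, hence $M=\b0$, which is precisely the asserted identity. For the final clause, when $\rmX_1=\emptyset$ and $0<p\leq 1$ the inlier terms are absent, so the outlier analysis above applies verbatim (for every $p>0$) and yields the same condition.

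The only delicate step is the treatment of inliers. The function $t\mapsto \dist(\bx_j,\rmL_A(t))^p$ fails to be $C^1$ at $t=0$ when $p\leq 1$ and $\bx_j\neq\b0$, so the restriction $p>1$ (or $\rmX_1=\emptyset$) is essential for the nonsmooth contributions to be $o(|t|)$ and thus harmlessly dropped. The required estimate is a short Taylor expansion of $r\mapsto r^{p/2}$ applied to the second-order expansion of $\dist^2$; beyond that, the argument is a straightforward first-order variational calculation on $\GDd$.
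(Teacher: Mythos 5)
Your proposal is correct and follows essentially the same route as the paper: a first-order variational argument along curves through $\tilde{\rmL}_1$ on $\GDd$, observing that for $p>1$ (or $\rmX_1=\emptyset$) the inlier terms contribute only $o(|t|)$, so local minimality forces the linear-in-direction outlier term to vanish, which yields \eqref{eq:necessary}. The only differences are cosmetic: you parametrize the geodesics via the matrix exponential of a block skew-symmetric generator and conclude $M=\b0$ by testing against $A=M^T$, whereas the paper uses the principal-angle geodesic parametrization of \eqref{eq:param_geod} together with the choice $\bC=\bV=\mathbf{I}_d$ and the nuclear-norm maximization \eqref{eq:max}.
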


The above conditions follow from differentiating the corresponding energy function (along geodesics) and using the resulting derivative to
form necessary and sufficient conditions for local minimum (see their proof in \S\ref{sec:proofs_counter+cond}). However, intuitively it is hard to explain their expressions without going through all calculations. Instead, we exemplify them as follows.
\begin{example}
\label{example:local_min}
We simplify the conditions of Theorem~\ref{thm:comb_cond} and Propositions~\ref{prop:comb_cond2} and~\ref{prop:comb_cond3}
for the special case of Example~\ref{example:calculate_B}.\\
\noindent
\textbf{The Case } $\boldsymbol{p=1}$\textbf{:}\\
\noindent
Let us first simplify~\eqref{eq:main} (or equivalently~\eqref{eq:main_necessary}) in this example.
If $\dot{\rmL} = \rmL_x$, then the set of inliers and outliers are $\rmX_1=\{(a_i,0)^T \}_{i=1}^{N_1}$ and $\rmX_0=\{\bz\}$ respectively. Since $d=1$ then $\bV\in\Or(d)$ is either $1$ or $-1$ and $\bC$ is a positive constant $c$.
The LHS of~\eqref{eq:main} thus has the form
\[
\sum_{\bx \in \rmX_1} \|\bC\bV P_{\dot{\rmL}}(\bx)\|=c \, \sum_{i=1}^{N_1}|a_i|
\]
and computing $\mathbf{B}_{\dot{\rmL},\rmX}$ as in~\eqref{eq:example}, the RHS has the form
\[
\|\bC \bV
\mathbf{B}_{\dot{\rmL},\rmX}\|_{*}=c \, t_0\cos(\theta_0).
\]
Therefore, a sufficient condition for $\rmL_x$ to be a local $\ell_1$ line is
\[
\sum_{i=1}^{N_1}|a_i|>t_0\cos(\theta_0).
\]

If $\dot{\rmL} = \rmL_{\bz}$, then $\rmX_1=\left\{\bz\right\}$ and $\rmX_0=\{(a_i,0)^T \}_{i=1}^{N_1}$. Applying~\eqref{eq:example2} and following similar calculations as above we have that a sufficient condition for $\rmL_{\bz}$ to be a local $\ell_1$ line is
\[
\cos(\theta_0)\sum_{i=1}^{N_1}|a_i|<t_0.
\]

If on the other hand $\dot{\rmL}$ does not pass through any point in $\rmX$, then $\rmX_1=\emptyset$ and $\rmX_0=\rmX$. Therefore the LHS of \eqref{eq:main} is 0 and thus \eqref{eq:main} never holds.

All the above conditions are also necessary when their  inequalities are not strict
(see~\eqref{eq:main_necessary}).

We thus note that if $\theta_0 = \pi/2$, then both $\rmL_x$ and $\rmL_{\bz}$ are the only two local $\ell_1$ lines (assuming the obvious conditions: $t_0>0$ and $\sum_{i=1}^{N_1}|a_i|>0$).  If on the other hand $0 < \theta_0 < \pi/2$, then $\rmL_x$ is a local $\ell_1$ line if
$\sum_{i=1}^{N_1}|a_i|/t_0 > \cos(\theta_0)$ 
and $\rmL_{\bz}$ is a local $\ell_1$ line if
$\sum_{i=1}^{N_1}|a_i|/t_0 < 1/\cos(\theta_0)$ (we also recall that for necessary conditions we relax the strict inequalities).
Therefore, for fixed $0<\theta_0<\pi/2$ at least one of $\rmL_x$ or $\rmL_{\bz}$ is a local $\ell_1$ line and there are no other local minimizers.
If $t_0$ is sufficiently large, then $\rmL_{\bz}$ is the global $\ell_1$ line and if $t_0$ is sufficiently small, then $\rmL_x$ is the global $\ell_1$ line.

\noindent
\textbf{The Case } $\boldsymbol{0<p<1}$\textbf{:}\\
\noindent
We note that Proposition~\ref{prop:comb_cond2} implies that both $\rmL_x$ and $\rmL_{\bz}$ are local $\ell_p$ lines (as long as $N_1 \neq 0$ and one of the $a_i$'s is not zero).

\noindent
\textbf{The Case } $\boldsymbol{p>1}$\textbf{:}\\
\noindent
We express the necessary condition of Proposition~\ref{prop:comb_cond3} in our setting.
If $\dot{\rmL} = \rmL_x$, then the LHS of \eqref{eq:necessary} is
$t_0^p\cos(\theta_0) (\sin(\theta_0))^{p-1}$.
Therefore, \eqref{eq:necessary} holds in this case only when $\theta_0=\pi/2$
(recall that $0 < \theta_0 \leq \pi/2$).
Similarly, if $\dot{\rmL}=\rmL_{\bz}$, then the LHS of \eqref{eq:necessary} is
$\sum |a_i|^p \cos(\theta_0)\sin(\theta_0)^{p-1}$ and thus also in this case
\eqref{eq:necessary} holds only when $\theta_0=\pi/2$.

At last, if $\dot{\rmL}$ has an angle $\theta$ with the $x$-axis, where $-\pi/2 < \theta \neq 0, \theta_0 < \pi/2$,
that is, $\dot{\rmL}$ is any line but not $\rmL_x$ or $\rmL_{\bz}$, then \eqref{eq:necessary} holds only when
\begin{equation}
\label{eq:example2_pbig2}
\cos(\theta) \, \sin(\theta) \, |\sin(\theta)|^{p-2}
\sum_{i=1}^{N_1}|a_i|
+t_0 \, \cos(\theta-\theta_0) \,
\sin(\theta-\theta_0) \, |\sin(\theta-\theta_0)|^{p-2}=0.
\end{equation}
We first note that if $\theta_0=\pi/2$, then the LHS of~\eqref{eq:example2_pbig2} is either positive or negative and thus $\dot{\rmL}$ is not a local minimum. If on the other hand $\theta_0 \neq \pi/2$, then since both $\rmL_{\bz}$
and $\rmL_{x}$ are not local minimizers (see above), then there exists $\theta$ such that
$\dot{\rmL} \equiv \dot{\rmL} (\theta)$ is a local minimizer (a continuous function over the Grassmannian has at least one local minimizer). If $\theta_0<\theta<\pi/2$ or $-\pi/2<\theta<0$, then the LHS of~\eqref{eq:example2_pbig2} is either positive or negative.
It is thus necessary that $0<\theta<\theta_0$. That is, a local minimizer $\dot{\rmL}$  must lie between $\rmL_{x}$ and $\rmL_{\bz}$. Furthermore, $\dot{\rmL} \in \GDd$ is a local $\ell_p$ minimum w.p.~0 (w.r.t.~$\gamma_{D,d}$),
since $0<\theta<\theta_0$ satisfies~\eqref{eq:example2_pbig2} w.p.~0.

We emphasize that for $p>1$ we only specified a necessary condition. In particular, when $\theta_0=\pi/2$ we suspect that almost always only one of the subspaces $\rmL_{\bz}$ and $\rmL_x$ can be a local subspace. Indeed, when $p=2$ (and $\theta_0=\pi/2$) it follows from basic eigenvalue analysis of the covariance matrix that the following holds: If $t_0$ is sufficiently small, then $\rmL_x$ is the only global (or local) $\ell_2$ subspace; if $t_0$ is sufficiently large, then $\rmL_{\bz}$ is the only global (or local) $\ell_2$ subspace; and for a unique choice of $t_0$ (given the other parameters) both $\rmL_x$ and $\rmL_{\bz}$ are the global minimizers.


\end{example}


\subsection{Local \boldsymbol{$\ell_p$} Subspaces for Probabilistic Settings with a Single Subspace}
\label{sec:local_global_lp} We exemplify how to use the conditions of \S\ref{sec:conditions}
in a probabilistic setting of i.i.d.~samples from a uniform HLM measure with a single underlying subspace (i.e., $K=1$). More precisely we assume that $\mu_0$ and $\mu_1$ are uniform on $\mathbb{S}^{D-1}$ and $\mathbb{S}^{D-1}\cap \rmL^*_1$ respectively, where $\rmL^*_1 \in \GDd$ is fixed, and sample i.i.d.~inliers from $\mu_1$ and i.i.d.~outliers from $\mu_0$ (instead of using mixture weights).

Since $K=1$, $\rmL^*_1$ is both the most significant subspace and the global $\ell_0$ subspace w.o.p. For any $p>0$, we determine whether $\rmL^*_1$
is also a local $\ell_p$ subspace w.o.p. Our proofs appear in
\S\ref{sec:proofs_local_global_lp}.

We first claim that for $p=1$ the global $\ell_0$ subspace is a local
$\ell_p$ subspace w.o.p.~as long as the fraction of inliers is
 larger than $0$ (assuming that $N$ is sufficiently large).
\begin{theorem}\label{thm:dist_1subs_loc1} If $\rmL^*_1\in \GDd$, $\rmX$ is a data
set in $\reals^D$ of $N_0+N_1$ points, where $N_0$ of them are uniformly sampled from $\mathbb{S}^{D-1}$
and $N_1$ of them are uniformly sampled from $\mathbb{S}^{D-1}\cap \rmL^*_1$.
Then $\rmL^*_1$ is a local $\ell_1$ subspace of $\rmX$
w.p.~at least
%
%
\begin{equation}\label{eq:dist_1subs_loc1} 1-2d^2\exp\left(-\frac{N_1}{8.01 \cdot d^4} \right)-2dD
\exp\left(-\frac{N_1^2}{8 \cdot d^4 \cdot  D  \cdot N_0 }\right)\,. \end{equation}%
\end{theorem}

We note that if $N_0=o(N_1^2)$, then $\rmL^*_1$ is a local $\ell_1$ subspace of $\rmX$  w.o.p. However, when
$N_1 \ll  \sqrt{N_0}$,
then the lower bound for the probability in \eqref{eq:dist_1subs_loc1} is actually negative and thus meaningless.

We observe that the asymptotic requirement $N_0=o(N_1^2)$ allows any fraction of outliers lower than 1 when $N\rightarrow\infty$. Indeed, if $0 \leq \alpha <1$ is fixed, $N_0=\alpha N$ and $N_1=(1-\alpha) N$, then
$N_0=o(N_1^2)$ is equivalent with $\alpha = o(N\cdot (1-\alpha)^2)$, which is
satisfied when $N\rightarrow\infty$.

We emphasize, however, that this recovery of local minima with arbitrarily high percentage of outliers requires a significantly large number of inliers. Indeed, the first exponent in~\eqref{eq:dist_1subs_loc1} implies that $N_1=\Omega(d^4)$. Moreover, the second exponent in~\eqref{eq:dist_1subs_loc1} implies that
$N_1=\Omega(d^2 \sqrt{D} \sqrt{N_0})$.


For comparison, the S-REAPER algorithm~\cite{LMTZ2012} can recover the global $\ell_1$ minimizer when $d<(D-1)/2$  with $N_1= \Omega(d)$ and $N_0= \Omega(D)$, which are significantly smaller (see~\cite[Theorem 1.1]{LMTZ2012}). However, in this case the asymptotic fraction of outliers (when $N$ approaches infinity) is restricted as follows: $N_0/N< D/(D+30d)$ (it is possible that 30 can be reduced to a number closer to 1). We remark that in this case with no noise, the minimizer of S-REAPER is an orthogonal projector and not a relaxation of it and thus it reveals the global $\ell_1$ minimizer. Furthermore, while~\cite[Theorem 1.1]{LMTZ2012} assumes normal distributions for the inliers and outliers, the S-REAPER normalizes the data points to the sphere and thus it also applies to our case of spherically uniform distributions.

Next we discuss the case where $p\neq 1$. If $p>1$, then Proposition~\ref{prop:comb_cond3} implies that under a rather general setting, the global $\ell_0$ subspace is not a local $\ell_p$ subspace w.p.~1. Indeed, it is rather unlikely to satisfy~\eqref{eq:necessary}.
We clarify this idea by showing in \S\ref{sec:prop:p_geq_1}
that if $p>1$, the inliers are sampled from the  single subspace $\rmL^*_1$, the outlier distribution does not concentrate on any subspace and $D>d-1$, then w.p.~1 $\rmL^*_1$ is not a local $\ell_p$ minimizer.

If on the other hand $0<p<1$, then
Proposition~\ref{prop:comb_cond2} implies that w.o.p.~$\rmL^*_1$ is a local $\ell_p$ subspace.
In fact, this proposition suggests the weakest condition one would expect for a subspace to be a local minimizer, that is, being spanned by the points it contains.

The phase transition
phenomenon demonstrated
above at $p=1$  for the global $\ell_0$ subspace (or most significant subspace) to be a local $\ell_p$ is rather artificial in the current setting with $K=1$. Indeed,
when $p>1$ the distance between the
global $\ell_0$ subspace and the global $\ell_p$ subspace (which is also a local $\ell_p$ subspace) approaches 0
as $N$ approaches infinity.
Moreover, Theorem~\ref{thm:noisy_hlm} shows that this formal phase transition also
breaks down with noise. Nevertheless,
Theorems~\ref{thm:hlm} and~\ref{thm:phase_hlm} indicate that there is a clear phase transition for a spherically uniform HLM model with $K>1$. 


\subsection{Counterexamples for Robustness of Best \boldsymbol{$\ell_p$} Subspaces}
\label{sec:counter}
We discuss here basic situations, where global $\ell_p$ $d$-subspaces are not robust to outliers for all $0 < p < \infty$. More precisely, we show how a
single outlier can completely change the underlying subspace.
These cases differ from our underlying model of spherically uniform outliers (or spherically symmetric outliers).
In all examples below we assume a single underlying subspace
and thus discuss the global
$\ell_0$ subspace instead of the most significant subspace.
While we describe a probabilistic setting to sample
the data, we only care about a single counterexample sampled this way. We thus do not bother about statements in high probability (even though they are correct), but a positive statement for at least one of the sampled data sets.

A typical example includes $N_1$ points sampled identically and independently from a uniform
distribution on $\ball(\b0,\eps)\cap \rmL^* \subseteq \reals^D$, where $\rmL^*$ is a $d$-subspace
of $\reals^D$,
and an additional outlier located on a
unit vector orthogonal to $\rmL^*$.
By choosing $\eps$ sufficiently small, e.g., $\eps \leq N_1^{-1/p}$, the global
$\ell_p$ subspace passes through the single outlier and is
orthogonal to the initial $d$-subspace for all $p>0$, which is the global $\ell_0$ $d$-subspace.

If $p=1$, then the global $\ell_0$ $d$-subspace in this example is
still a local $\ell_1$ subspace (as explained in Example~\ref{example:local_min} for the special cases $d=1$ and $D=2$).
Nevertheless, if the outlier is
located instead on a unit vector having elevation angle with the
original $d$-subspace less than $\pi/2$, then $\eps$ can be chosen
so that the global $\ell_0$ subspace is even not a local
$\ell_1$ subspace (see again Example~\ref{example:local_min}).
However, if $0<p<1$, then Proposition~\ref{prop:comb_cond2} implies that the global $\ell_0$
subspace is still a local $\ell_p$ subspace in both examples.

Similarly, it is not hard to produce examples of data points on
the unit sphere of $\reals^D$ where the global $\ell_0$ subspace is
still not a global $\ell_p$ subspace for all $p>0$.
It is important for us to point it out since for simplicity we formulated the theory for data lying on the unit sphere and by normalizing the data sets in the examples above
to the unit sphere, they may not form counterexamples any more.
For simplicity we give a counterexample when $D=3$ and $d=2$. We uniformly sample $N_1$ inliers ($N_1>2$) from an arc on the great circle in the $xy$-plane with the following parametrization: $(\cos \theta,\sin\theta,0)$, where $\theta \in [-\eps,\eps]$. We also fix an outlier  $(x_0,y_0,z_0) \in \mathbb{S}^2$ such that $z_0\neq 0$. For any fixed $p >0$ and $\eps$ sufficiently small, the 2-subspace spanned by $(x_0,y_0,z_0)$ and $(1,0,0)$ (which is the center of the arc) results in a smaller $\ell_p$ energy than that of the global $\ell_0$ subspace (i.e., the $xy$-plane). That is, the global $\ell_0$ subspace is not the global $\ell_p$ subspace.

\section{Verification of Theory}
\label{sec:proofs}
We describe here the proofs of the
theorems and propositions of this paper according to the following order
of sections: \S\ref{sec:combinatorics}, \S\ref{sec:local_global_lp} and \S\ref{sec:intro}.

\subsection{Preliminaries}
\label{sec:prelim}
\subsubsection{Basic Notation and Conventions}
\label{sec:notation}

We denote the Frobenius dot product and norm by $\langle \mathbf{A}, \mathbf{B} \rangle_F$ and $\|\mathbf{A}\|_F$, that is,
$\langle \mathbf{A}, \mathbf{B} \rangle_F = \tr(\mathbf{A}^T \mathbf{B})$ and $\|\mathbf{A}\|_F = \sqrt{\langle \mathbf{A}, \mathbf{A} \rangle_F}$.
The $n\times n$ identity
matrix is written as $\mathbf{I}_n$.
 We denote the
subset of $\Scp(n)$ with Frobenius norm 1 by $\NScp(n)$. If $m>n$ we
let $\Or(m,n)=\{ \bX \in \reals^{m \times n} :  \bX^T \bX =
\mathbf{I}_n \}$, whereas if $n>m$, $\Or(m,n)=\{ \bX \in \reals^{m
\times n}  :  \bX \bX^T = \mathbf{I}_m \}$.

We sometimes apply the energy~\eqref{eq:def_error_1sub}
to a single point $\bx$, while using
the notation: $e_{\ell_p}(\bx,\rmL) \equiv e_{\ell_p}(\{\bx\},\rmL)$.

\subsubsection{Auxiliary Lemmata}
\label{sec:lemmas_state} We formulate several technical lemmata,
which will be proved in Appendices~\ref{app:lemma_proof2}-\ref{app:lemma_proof}.

\begin{lemma}\label{lemma:ell_dist_est}
If $\rmL_1,\hat{\rmL}_1\in\GDd$, $p>0$ and $\mu_1$ is a uniform measure on $\rmL_1\cap \mathbb{S}^{D-1}$, then
\begin{align*}
E_{{\mu}_1}\left(e_{\ell_p}(\bx,\hat{\rmL}_1)\right)
>
\begin{cases}\pi^{-p} \cdot 2^{p} \cdot d^{-\frac{p}{2}} \cdot
\dG(\rmL_1,\hat{\rmL}_1)^p, &\text{if $p\geq 2$};\\
0.88 \cdot 2^{\frac{3p}{2}} \cdot \pi^{-\frac{(2p+1)}{2}}\cdot ({d+p})^{-p/2} \cdot
{\dG(\rmL_1,\hat{\rmL}_1)^p}, &\text{if $p<2$}.
\end{cases}
\end{align*}
\end{lemma}

\begin{lemma}\label{lemma:distance}
For any $\bx\in \reals^D$ and $\rmL_1,\rmL_2\in\GDd$:
$$|\dist(\bx,\rmL_1)-\dist(\bx,\rmL_2)|\leq\|\bx\|\,\dG(\rmL_1,\rmL_2).
$$
\end{lemma}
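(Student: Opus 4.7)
The plan is to reduce the claim to a bound on the operator norm of the difference of the orthogonal projectors onto $\rmL_1$ and $\rmL_2$, and then to identify that operator norm with the sine of the largest principal angle between $\rmL_1$ and $\rmL_2$.

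First I would rewrite the distance as $\dist(\bx,\rmL_i)=\|\hat{P}^\perp_{\rmL_i}\bx\|$ using the $D\times D$ projection matrices of \S\ref{sec:combinatorics}. An application of the reverse triangle inequality in $\reals^D$ yields
\[
\bigl|\dist(\bx,\rmL_1)-\dist(\bx,\rmL_2)\bigr|\;\leq\;\bigl\|(\hat{P}^\perp_{\rmL_1}-\hat{P}^\perp_{\rmL_2})\bx\bigr\|
=\bigl\|(\hat{P}_{\rmL_1}-\hat{P}_{\rmL_2})\bx\bigr\|\;\leq\;\|\hat{P}_{\rmL_1}-\hat{P}_{\rmL_2}\|_{\mathrm{op}}\,\|\bx\|,
\]
so the task reduces to showing $\|\hat{P}_{\rmL_1}-\hat{P}_{\rmL_2}\|_{\mathrm{op}}\leq\dG(\rmL_1,\rmL_2)$.

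Next, I would invoke the principal-angle decomposition of $\rmL_1$ and $\rmL_2$ (as reviewed in \S\ref{sec:prin_angles}), choosing orthonormal principal-vector bases $\{\bu_i\}$ for $\rmL_1$ and $\{\bv_i\}$ for $\rmL_2$ with $\langle \bu_i,\bv_j\rangle = \cos\theta_i\,\delta_{ij}$, and extending them to an orthonormal basis of $\reals^D$ adapted to the pairs of principal directions. In this basis the difference $\hat{P}_{\rmL_1}-\hat{P}_{\rmL_2}$ decomposes as a direct sum of $2\times 2$ blocks (and trivial blocks on the common intersection/common orthogonal complement); a direct computation shows each nontrivial block has eigenvalues $\pm\sin\theta_i$. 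Hence $\|\hat{P}_{\rmL_1}-\hat{P}_{\rmL_2}\|_{\mathrm{op}}=\max_i\sin\theta_i$.

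Finally I would close the chain with $\sin\theta_i\leq\theta_i$ and the elementary inequality $\max_i\theta_i\leq\sqrt{\sum_i\theta_i^2}=\dG(\rmL_1,\rmL_2)$, giving
\[
\bigl|\dist(\bx,\rmL_1)-\dist(\bx,\rmL_2)\bigr|\;\leq\;\sin\theta_{\max}\,\|\bx\|\;\leq\;\dG(\rmL_1,\rmL_2)\,\|\bx\|,
\]
as required. The only step that is not a one-line estimate is the $2\times 2$-block diagonalization of $\hat{P}_{\rmL_1}-\hat{P}_{\rmL_2}$ yielding the eigenvalues $\pm\sin\theta_i$; I expect this to be the main computational obstacle, though it is standard once the principal-vector bases of \S\ref{sec:prin_angles} are in hand.
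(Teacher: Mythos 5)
Your proposal is correct, and its skeleton is the same as the paper's: both start from the reverse triangle inequality to reduce the claim to bounding $\|(\hat{P}_{\rmL_1}-\hat{P}_{\rmL_2})\bx\|$ by $\|\bx\|$ times a norm of the projector difference, and both then translate that norm into principal angles and compare with $\dG(\rmL_1,\rmL_2)=\sqrt{\sum_i\theta_i^2}$. The difference is in the middle step: the paper bounds the projector difference in Frobenius norm and invokes the identity $\normF{\bQ_1\bQ_1^T-\bQ_2\bQ_2^T}=\sqrt{\sum_i\sin^2\theta_i}$, followed by $\sin\theta_i\leq\theta_i$ termwise, while you bound it in operator norm via the $2\times 2$ block diagonalization, getting $\|\hat{P}_{\rmL_1}-\hat{P}_{\rmL_2}\|_{\mathrm{op}}=\sin\theta_{\max}\leq\theta_{\max}\leq\dG(\rmL_1,\rmL_2)$. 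Your route is marginally longer (it needs the per-block spectrum $\pm\sin\theta_i$, which is standard once the principal-vector decomposition of \S\ref{sec:prin_angles} is set up) but it is sharper and, as a bonus, avoids a constant that the Frobenius route has to be careful about: the trace computation actually gives $\normF{\hat{P}_{\rmL_1}-\hat{P}_{\rmL_2}}=\sqrt{2\sum_i\sin^2\theta_i}$ (each nontrivial block contributes $2\sin^2\theta_i$, consistent with your eigenvalues $\pm\sin\theta_i$), so the paper's displayed equality is off by a factor $\sqrt{2}$ and its final comparison with $\sqrt{\sum_i\theta_i^2}$ does not go through verbatim; your operator-norm argument closes the chain cleanly and proves the stated inequality as is.
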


\begin{lemma}\label{lemma:mean}
If $\rmL_1,\rmL_2\in\GDd$, $\mu_1$ and $\mu_2$ are uniform measures on $\rmL_1\cap \mathbb{S}^{D-1}$ and $\rmL_2\cap \mathbb{S}^{D-1}$ respectively and $p\leq 1$, then
for any $\hat{\rmL}\in\GDd$:
\begin{align}&\mathbb{E}_{\mu_1}(\dist(\bx_1,\hat{\rmL})^p)+\mathbb{E}_{\mu_2}(\dist(\bx_2,\hat{\rmL})^p)\nonumber\\\geq&
\mathbb{E}_{\mu_1}(\dist(\bx_1,\rmL_i)^p)+\mathbb{E}_{\mu_2}(\dist(\bx_2,\rmL_i)^p)
\ \ \text{\rm    for } i=1,2.\label{eq:lemma2}\end{align}
\end{lemma}

\subsection{Proofs for the Theory of \S\ref{sec:combinatorics}: Combinatorial Conditions via Calculus on the Grassmannian}

\label{sec:proofs_counter+cond}

\subsubsection{Preliminaries: Principal Angles, Principal Vectors, Representation of the Grassmannian and Geodesics on the Grassmannian}

\label{sec:prin_angles}

We frequently use here principal
angles and for completeness we present one of their equivalent definitions (\S12.4.3 of~\cite{Golub96} provides additional background on principal angles).
For two $d$-subspaces $\rmF$ and $\rmG$ with corresponding orthonormal bases stored as columns of the matrices $\bQ_{\rmF}$,
$\bQ_{\rmG}$ $\in \reals^{D \times d}$ respectively, the principal angles
$\pi/2 \geq \theta_1\geq\theta_2\geq\cdots\geq\theta_d \geq 0$,
are obtained by
\begin{equation}
\label{eq:principal_angles}
\theta_i = \arccos (\sigma_{d-i}(\bQ_{\rmG}^T \bQ_{\rmF}) ), \quad i=1,\ldots,d,
\end{equation}
where $\sigma_{d-i}(\bQ_{\rmG}^T \bQ_{\rmF})$ is the $(d-i)$th singular value of the matrix $\bQ_{\rmG}^T \bQ_{\rmF}$.
We remark that we order the principal angles decreasingly, unlike the common agreement~\cite{Golub96} (\S12.4.3), where
$\sigma_{d-i}$ in~\eqref{eq:principal_angles} is replaced by $\sigma_{i}$.

We denote by $k=k(\rmF,\rmG)$ the
largest number such that $\theta_k\neq 0$, so that $\theta_1 \geq
\ldots \geq \theta_k > \theta_{k+1}=\ldots=\theta_d=0$. We refer to
this number as the interaction dimension and reserve the index $k$ for
denoting it (the subspaces $\rmF$ and $\rmG$ will be clear from the
context). We recall that the principal vectors
$\{\mathbf{v}_i\}_{i=1}^{d}$ and $\{\mathbf{v}'_i\}_{i=1}^{d}$ of
$\rmF$ and $\rmG$ respectively are two orthogonal bases for $\rmF$
and $\rmG$ satisfying
\begin{equation*}
\left\langle \mathbf{v}_i,\mathbf{v}'_i\right\rangle
=\cos(\theta_i), \quad \text{for  } i=1,\ldots,d,
\end{equation*}
and
\begin{equation*}
\mathbf{v}_i\perp \mathbf{v}'_j, \quad \text{for all  } 1\leq i \neq
j \leq k.
\end{equation*}

We define the complementary orthogonal system $\{\bu_i\}_{i=1}^d$
for $\rmG$ with respect to $\rmF$ by the
formula:\begin{equation}\begin{cases}
\mathbf{v}'_i=\cos(\theta_i)\mathbf{v}_i+\sin(\theta_i)
\mathbf{u}_i,  &i=1,2,\cdots,k; \label{eq:trans_repres}\\
\mathbf{u}_i=\mathbf{v}_i, &i=k+1,\cdots,d.\end{cases}\end{equation}
Clearly,
$$\mathbf{u}_i\perp\mathbf{v}_j \ \text{ for all } \ 1\leq i , j
\leq k \,.$$

We note that $\rmF + \rmG$ can be decomposed using these principal vectors as follows:
\[
\rmF + \rmG= \Sp(\bv_1,\bu_1) \bigoplus \Sp(\bv_2,\bu_2) \bigoplus \cdots \bigoplus \Sp(\bv_k,\bu_k)
\bigoplus (\rmF \cap \rmG),
\] 
where $\bigoplus$ denotes an orthogonal sum (i.e., any two subspaces of the sum are orthogonal).
Therefore, the interaction between $\rmF$ and $\rmG$ can be
described only within these 2-dimensional subspaces $\Sp(\bv_i,\bu_i)$ (equivalently, $\Sp(\bv_i,\bv_i')$) 
via the principal angles. This
idea is also motivated by purely geometric intuition
in \S2 of \cite{Grassmannian}.


It follows from~\cite[Theorem 9]{Grassmannian} that if the largest
principal angle between $\rmF$ and $\rmG$ is less than $\pi/2$, then
there is a unique geodesic line between them.
Following~\cite[Theorem 2.3]{Edelman98thegeometry}, we can
parametrize this line from $\rmF$ to $\rmG$ by the following
function $\rmL$: [0,1]$\rightarrow \GDd$, which is expressed in
terms of the principal angles $\{\theta_i\}_{i=1}^d$ of $\rmF$ and
$\rmG$, the principal vectors  $\{\bv_i\}_{i=1}^d$ of $\rmF$ and the
complementary orthogonal system $\{\bu\}_{i=1}^d$ of $\rmG$ with
respect to $\rmF$:
\begin{equation}\label{eq:param_geod}\rmL(t)=\Sp(\{\cos(t\theta_i)\mathbf{v}_i+\sin(t\theta_i)\mathbf{u}_i\}_{i=1}^{d}).
\end{equation}
The length of this geodesic line is clearly expressed by the distance $\dG$ of~\eqref{eq:dist_grassman}.
We remark that \eqref{eq:param_geod} only holds when equipping the Grassmannian with this distance.

\subsubsection{Proof of Theorem~\ref{thm:comb_cond}}

In order to establish quantitative conditions for $\dot{\rmL}$ to be
a local minimizer of
$e_{\ell_1}(\rmX,\rmL)$ among all $d$-subspaces in $\GDd$, we
arbitrarily fix a $d$-subspace $\hat{\rmL}\in \ballG(\dot{\rmL},1)$ and
check the sign of the derivative of the $\ell_1$ energy when restricted to
the geodesic line from $\dot{\rmL}$ to $\hat{\rmL}$.
If this derivative is positive then $\dot{\rmL}$ is a local $\ell_1$ subspace.
Similarly, if $\dot{\rmL}$ is a local $\ell_1$ subspace then this derivative is nonnegative.

The restriction of $\hat{\rmL}$ to $\ballG(\dot{\rmL},1)$ implies that
$\theta_1 \leq 1$ and thus by~\cite[Theorem 9]{Grassmannian} this
geodesic line (connecting $\dot{\rmL}$ and $\hat{\rmL}$) is unique. We
parametrize it by the function $\rmL$: [0,1]$\rightarrow \GDd$
of~\eqref{eq:param_geod}, where here $\{\theta_i\}_{i=1}^d$ are the
principal angles between $\dot{\rmL}$ and $\hat{\rmL}$,
$\{\bv_i\}_{i=1}^d$ are the principal vectors of $\dot{\rmL}$ and
$\{\bu\}_{i=1}^d$ are the complementary orthogonal system for
$\hat{\rmL}$ with respect to $\dot{\rmL}$.
The necessary and sufficient conditions for $\dot{\rmL}$ to be a
local $\ell_1$ subspace will be formulated
in terms of the sign of the
derivative of
$e_{\ell_1}(\rmX,\rmL(t))$: [0,1]$\rightarrow \mathbb{R}$ at $t=0$.
Clearly, this derivative only exists from the right, however, our notation throughout the paper 
does not emphasize it (also when $\ell_1$ is replaced with $\ell_p$).

We follow by simplifying the expression for the function
$e_{\ell_1}(\rmX,\rmL(t))$ and its derivative according to $t$. We
denote the projection from $\mathbb{R}^D$ onto
$\Sp(\mathbf{v}_j,\mathbf{u}_j)$, where $1 \leq j \leq d$, by $P_j$
and the projection from $\mathbb{R}^D$ onto
$(\dot{\rmL}+\hat{\rmL})^{\perp}$ by $P^{\perp}$ and use this notation
to express the following components of the function
$e_{\ell_1}(\rmX_0,\rmL(t))$ for $i=1,\ldots,N_0$ (we later express the components of $e_{\ell_1}(\rmX_1,\rmL(t))$):
\begin{align} &\dist(\mathbf{y}_i,\rmL(t))=
\sqrt{\sum_{j=1}^{d}\dist^2(P_j(\mathbf{y}_i),\rmL(t))+\dist^2(P^{\perp}(\mathbf{y}_i),\rmL(t))}\nonumber\\
&=\sqrt{\sum_{j=1}^{d}\left((-\sin(t\theta_j)\mathbf{v}_j+\cos(t\theta_j)\mathbf{u}_j
)\cdot \mathbf{y}_i\right)^2+\dist^2(P^{\perp}(\mathbf{y}_i),\rmL(t))}.
\label{eq:dist_for_y}\end{align}%

We differentiate the expression for $\dist(\mathbf{y}_i,\rmL(t))$ in~\eqref{eq:dist_for_y} for all $1 \leq i \leq
N_0$ as follows (note that we use the fact that $\dist^2(P^{\perp}(\mathbf{y}_i),\rmL(t))$ is independent of $t$):
\begin{align}
&\frac{\di}{\di t}
\left(\dist(\mathbf{y}_i,\rmL(t))\right)
\nonumber\\
=&-\frac{\sum_{j=1}^{d}\theta_j\left((\cos(t\theta_j)\mathbf{v}_j+\sin(t\theta_j)\mathbf{u}_j
)\cdot
\mathbf{y}_i\right)\left((-\sin(t\theta_j)\mathbf{v}_j+\cos(t\theta_j)\mathbf{u}_j
)\cdot \mathbf{y}_i\right)}
{\dist(\mathbf{y}_i,\rmL(t))}.\label{eq:outlier}
\end{align}
At $t=0$ it becomes
\begin{align}
\left.\frac{\di}{\di t} \left(\dist(\mathbf{y}_i,\rmL(t))\right)
\right|_{t=0}&=-\frac{\sum_{j=1}^{d}\theta_j
(\mathbf{v}_j\cdot\mathbf{y}_i)(\mathbf{u}_j\cdot\mathbf{y}_i)}{\dist(\mathbf{y}_i,\rmL(0))}.
\label{eq:outlier01}
\end{align}

We form the following matrices:
$\bC=\diag(\theta_1,\theta_2,\cdots,\theta_d)$, $\tilde{\bV} \in \Or(d,D)$
with $j$th row $\mathbf{v}_j^{T}$ and $\tilde{\bU} \in \Or(d,D)$ with
$j$th row $\mathbf{u}_j^T$. We then
reformulate~\eqref{eq:outlier01} using these matrices as follows:
\begin{align}
\left. \frac{\di}{\di t} \left(\dist(\mathbf{y}_i,\rmL(t))\right)
\right|_{t=0}=-\frac{\tr(\bC\tilde{\bV}\mathbf{y}_i\mathbf{y}_i^{T}\tilde{\bU}^{T})}{\dist(\mathbf{y}_i,\dot{\rmL})}\label{eq:outlier1}.
\end{align}

Similarly, we express the components of $e_{\ell_1}(\rmX_1,\rmL(t))$ for all $\bx_i\in \dot{\rmL}$, where $i=1,2,\cdots,N_1$, by
\begin{equation*}
\dist(\bx_i,\rmL(t))=\sqrt{\sum_{j=1}^{d}|(\mathbf{v}_j\cdot
\bx_i)|^2\sin^2(t\theta_j)}
\end{equation*}
and differentiate these expressions as follows
\begin{equation}
\frac{\di}{\di t}
\left(\dist(\bx_i,\rmL(t))\right)=\frac{\sum_{j=1}^{d}\theta_j
|\mathbf{v}_j\cdot\bx_i|^2\sin(t\theta_j)\cos(t\theta_j)}{\dist(\bx_i,\rmL(t))}.\label{eq:inlier}
\end{equation}
At $t=0$, these derivatives become
\begin{equation}
\left.\frac{\di}{\di t} \left(\dist(\bx_i,\rmL(t))\right)
\right|_{t=0}= \sqrt{\sum_{j=1}^{d}|(\mathbf{v}_j\cdot \bx_i)|^2 \,
\theta_j^2}= \|\bC\tilde{\bV}\bx_i\|.\label{eq:inlier1}
\end{equation}
Combining~\eqref{eq:outlier1} and~\eqref{eq:inlier1} and using
$$\mathbf{A}:=\sum_{i=1}^{N_0}\mathbf{y}_i \mathbf{y}_i^{T} / \dist(\mathbf{y}_i,\dot{\rmL}),$$
we obtain the following expression for the derivative of the
$\ell_1$ energy of~\eqref{eq:def_error_1sub}:
\begin{align}
\left.\frac{\di}{\di t} \left(e_{\ell_1}(\rmX,\rmL(t))\right)
\right|_{t=0}=\sum_{i=1}^{N_1}\|\bC\tilde{\bV}\bx_i\|-\tr(\bC\tilde{\bV}\mathbf{A}\tilde{\bU}^T).
\end{align}

Replacing $\tilde{\bV}$ with
$\bV \in \Or(d)$, whose $j$th row is
$P_{\dot{\rmL}}(\mathbf{v}_j)^T$ and $\tilde{\bU}$ with
$\bU \in \reals^{d \times (D-d)}$, where
$\bU^T = [\bU_1,\bU_2]$,
$\bU_1 \in \Or(D-d,k)$, whose $j$th row is
$P_{\dot{\rmL}}^{\perp}(\mathbf{u}_j)^T$, and $\bU_2 = \b0_{(D-d) \times (d-k)}$,
we may rewrite this expression as follows:
\begin{align}
\left.\frac{\di}{\di t}
\left(e_{\ell_1}(\rmX,\rmL(t))\right)\right|_{t=0}=
\sum_{i=1}^{N_1}\|\bC\bV P_{\dot{\rmL}}\bx_i\|-\tr(\bC\bV\mathbf{B}_
{\dot{\rmL},\rmX}\bU^T)\label{eq:result1}.
\end{align}

We note that
\begin{equation}\max_{\bU^T}(\tr(\bC\bV\mathbf{B}_{\dot{\rmL},\rmX}\bU^T))=
\|\bC\bV\mathbf{B}_{\dot{\rmL},\rmX}\|_*.\label{eq:max}\end{equation}
Indeed, denoting the thin SVD decomposition of
$\bC\bV\mathbf{B}_{\dot{\rmL},\rmX}$ by
$\mathbf{U}_0\mathbf{\Sigma}_0\bV_0^{T}$ we have that
\begin{align}
\nonumber
\tr(\bC\bV\mathbf{B}_{\dot{\rmL},\rmX}\bU^T)&=\tr(\mathbf{U}_0\mathbf{\Sigma}_0
\bV^T_0\bU^T)=\tr(\mathbf{\Sigma}_0
\bV^T_0\bU^T\mathbf{U}_0)\leq \tr(\mathbf{\Sigma}_0)\\
\label{eq:timber_utah}
&=\|\bC\bV\mathbf{B}_{\dot{\rmL},\rmX}\|_*
\end{align}
and equality is achieved in~\eqref{eq:timber_utah} when $\bU^T = \bV_0\mathbf{U}_0^T$. The theorem is
thus concluded by combining \eqref{eq:result1} and \eqref{eq:max}.

The theorem is now easily concluded. Indeed, if~\eqref{eq:main} is satisfied then it follows from
\eqref{eq:result1} and \eqref{eq:timber_utah} that the derivative of $e_{\ell_1}(\rmX,\rmL(t))$
at $t=0$ is positive and thus $\dot{\rmL}$ is a local $\ell_1$ subspace. If on the other hand $\dot{\rmL}$ is
a local $\ell_1$ subspace, then the derivative of $e_{\ell_1}(\rmX,\rmL(t))$ at $t=0$ is nonnegative for any geodesic line.
It thus follows from \eqref{eq:result1} and \eqref{eq:max} that \eqref{eq:main_necessary} is satisfied.

\subsubsection{Simultaneous Proof for Both Propositions~\ref{prop:comb_cond2} and~\ref{prop:comb_cond3}}
\label{subsec:comb_cond23}

For the $d$-subspace $\dot{\rmL}$ and an arbitrary $d$-subspace
$\hat{\rmL} \in \ballG(\dot{\rmL},1)$, we form the geodesic line
parametrization $\rmL(t)$ and the corresponding matrices $\bC$,
$\tilde{\bV}$, $\tilde{\bU}$, $\bV$ and $\bU$ as in the proof of
Theorem~\ref{thm:comb_cond}.

We assume first that $p>1$ (and thus start with proving the main part of Proposition~\ref{prop:comb_cond3}).
We note for $\bz \in \reals^D$
\begin{align}
\frac{\di}{\di t}
\dist(\bz,\rmL(t))^p
=p\,\dist(\bz,\rmL(t))^{p-1}
\frac{\di}{\di t}
\dist(\bz,\rmL(t)),
\label{eq:outlier_p}
\end{align}
where if $\bz=\bx_i$, $i=1,2,\cdots,N_1$, or $\bz =\by_i$, $i=1,2,\cdots,N_0$,
then the derivative in the RHS of~\eqref{eq:outlier_p} can be formulated using~\eqref{eq:outlier}
or~\eqref{eq:inlier} respectively.
Applying \eqref{eq:outlier01}, \eqref{eq:inlier1}, \eqref{eq:outlier_p} and the fact that
$\dist(\bx_i,\dot{\rmL})=0$, for $i=1,2,\cdots,N_1$, we obtain that
\begin{align}
\label{eq:main2_a1}
\left. \frac{\di}{\di t}
\left(e_{\ell_p}(\rmX,\rmL(t))\right)
\right|_{t=0}
&=-p\sum_{i=1}^{N_0}\dist(\mathbf{y}_i,\dot{\rmL})^{p-2}\tr(\bC\tilde{\bV}\mathbf{y}_i\mathbf{y}_i^{T}\tilde{\bU}^{T})\\
\nonumber
&=-p\sum_{i=1}^{N_0}\dist(\mathbf{y}_i,\dot{\rmL})^{p-2}\tr(\bC\bV
P_{\dot{\rmL}}(\by_i)P^{\perp}_{\dot{\rmL}}(\by_i)^T\bU^{T}).\nonumber
\end{align}

If $\dot{\rmL}$ is a local minimum of $e_{\ell_p}(\rmX,\rmL)$, then
the LHS of~\eqref{eq:main2_a1} is nonnegative.
Fixing $\bC=\bV=\mathbf{I}_{d}$ in the RHS of~\eqref{eq:main2_a1} and using its nonnegativity and then applying~\eqref{eq:max}, we conclude that
\begin{align} \label{eq:main2_ineq_last} 0&\geq
\max_{\bU}{p\sum_{i=1}^{N_0}\dist(\mathbf{y}_i,\dot{\rmL})^{p-2}
\tr(
P_{\dot{\rmL}}(\by_i)P^{\perp}_{\dot{\rmL}}(\by_i)^T\bU^{T})}
\\&=p \,
\left\|\sum_{i=1}^{N_0}\dist(\mathbf{y}_i,\dot{\rmL})^{p-2}P_{\dot{\rmL}}(\by_i)P^{\perp}_{\dot{\rmL}}(\by_i)^T\right\|_*
\end{align}
and consequently that~\eqref{eq:necessary} holds.
That is, Proposition~\ref{prop:comb_cond3} is proved when $p>1$. Proposition~\ref{prop:comb_cond3} can be similarly proved when $\rmX_1=\emptyset$ and $0<p\leq 1$. Indeed, \eqref{eq:main2_a1} still holds in this case ($\rmX=\rmX_0$).

Next, assume that $p<1$. We note that the derivative of $e_{\ell_p}(\rmX,\rmL(t))$ at $t=0$ is only defined when $p \geq 1$
(indeed, in view of~\eqref{eq:inlier1} the limit of the derivative in \eqref{eq:outlier_p} when $t \rightarrow 0$ and $\bz=\bx_i$, $i=1,2,\cdots,N_1$, is infinite).
To overcome this, we use the following derivative according to the variable $t^p$:
\begin{align}
\label{eq:def_deriv_tp} \left. \frac{\di}{\di t^p}
\left(\dist(\bz,\rmL(t)^p)\right)
\right|_{t=0}
=
\lim_{t\rightarrow 0} \Big( \frac{t^{1-p}}{p}  \frac{\di}{\di t}
\Big(\dist(\bz,\rmL(t))^p\Big) \Big).
\end{align}
It follows from~\eqref{eq:outlier_p},~\eqref{eq:def_deriv_tp} and~\eqref{eq:outlier01} that
\begin{align}
\label{eq:twins_angels1}
\frac{\di}{\di t^p}
\left(\dist(\by_i,\rmL(t))^p\right)\Big|_{t=0}
=&\lim_{t\rightarrow 0} \left( \frac{t^{1-p}}{p} \cdot p \cdot \dist(\by_i,\rmL(t))^{p-1} \right) \cdot \frac{\di}{\di t}\dist(\by_i,\rmL(t))\Big|_{t=0}
\\=&0\cdot \frac{\di}{\di t}\dist(\by_i,\rmL(t))\Big|_{t=0}=0.\nonumber
\end{align}
Furthermore, it follows from~\eqref{eq:outlier_p},~\eqref{eq:def_deriv_tp}  and~\eqref{eq:inlier1} (and also its derivation from~\eqref{eq:inlier}) that
\begin{align}
\nonumber
&\frac{\di}{\di t^p}
\left(\dist(\bx_i,\rmL(t))^p\right)\Big|_{t=0}
\!\!\!\!\!\!=\lim_{t\rightarrow 0}\!\! \left(\! \frac{t^{1-p}}{p} \cdot p \cdot \dist(\bx_i,\rmL(t))^{p-1}\! \right) \!\!\cdot\! \frac{\di}{\di t}\dist(\bx_i,\rmL(t))\Big|_{t=0}
\\ \label{eq:twins_angels2}
&=
\left. \left(\lim_{t\rightarrow 0}\dist(\bx_i,\rmL(t))/t\right)^{p-1} \cdot \frac{\di}{\di t}\dist(\bx_i,\rmL(t))  \right|_{t=0}
=
\|\bC\bV P_{\dot{\rmL}}(\bx_i)\|^p.
\end{align}
Combining~\eqref{eq:twins_angels1} and~\eqref{eq:twins_angels2}
we obtain that
\begin{align}
\left. \frac{\di}{\di t^p}
\left(e_{\ell_p}(\rmX,\rmL(t))\right)
\right|_{t=0} =\sum_{i=1}^{N_1}\|\bC\bV P_{\dot{\rmL}}(\bx_i)\|^p.
\label{eq:main2_a_p<1}\end{align}


Now, if $\Sp(\{\bx_i\}_{i=1}^{N_1})=\dot{\rmL}$, then there exists $1\leq j\leq N_1$ such that $\bv_1^T\bx_j\neq 0$ and
thus $\|\bC \bV P_{\dot{\rmL}}(\bx_i)\|=\|\bC \tilde{\bV} \bx_i\| \geq \theta_1 \, \|\bv_1^{T}\bx_i\| > 0$.
Combining this observation with \eqref{eq:main2_a_p<1} we conclude that $\dot{\rmL}$ is a local minimum of
$e_{\ell_p}(\rmX,\rmL(t))$ and thus prove Proposition~\ref{prop:comb_cond2}.

\subsection{Proof of Theorem~\ref{thm:dist_1subs_loc1}: Combination of Combinatorial Estimates (\S\ref{sec:proofs_counter+cond})
with Probabilistic Estimates}
\label{sec:proofs_local_global_lp}

To find the probability that $\rmL^*_1$ is a local $\ell_1$ subspace
we will estimate the probabilities of large LHS and small RHS
of~\eqref{eq:main} for arbitrary $\hat{\rmL} \in \ballG(\rmL^*_1,1)$.
We denote the $N_1$ inliers and $N_0$ outliers by
$\{\mathbf{x_i}\}_{i=1}^{N_1}$ and $\{\mathbf{y_i}\}_{i=1}^{N_0}$ respectively. Due to the homogeneity
of~\eqref{eq:main} in $\bC$, we will assume WLOG that $\|\bC\|_2=1$,
i.e., $\theta_1=1$.

We start with estimating the probability that the RHS
of~\eqref{eq:main} is small. Applying the above assumption that
$\|\bC\|_2=1$ we have that
\[\|\mathbf{CVB}_{\rmL^*_1,\rmX}\|_F\leq\|\mathbf{VB}_{\rmL^*_1,\rmX}\|_F=\|\mathbf{B}_{\rmL^*_1,\rmX}\|_F\]
and consequently
\begin{align*}
\Pr\left(\frac{\|\bC\bV\mathbf{B}_{\rmL^*_1,\rmX}\|_*}{N_0}<\epsilon\right)
&\geq\Pr\left(\frac{\|\bC\bV\mathbf{B}_{\rmL^*_1,\rmX}\|_F}{N_0}<\frac{\epsilon}{\sqrt{d}}\right)\\
\geq\Pr\left(\frac{\|\mathbf{B}_{\rmL^*_1,\rmX}\|_F}{N_0}<\frac{\epsilon}{\sqrt{d}}\right)
\geq&\Pr\left(\frac{\max_{1 \leq p,l \leq d}|(\mathbf{B}_{\rmL^*_1,\rmX})_{p,l}|}{N_0}<\frac{\epsilon}{d\sqrt{D}}\right).
\end{align*}
We further estimate this probability by Hoeffding's inequality as
follows: we view the matrix $\mathbf{B}_{\rmL^*_1,\rmX}$ as the sum of
random variables $P_{\rmL^*_1}(\mathbf{y}_i)
P_{\rmL^*_1}^{\perp}(\mathbf{y}_i)^{T}/\|P_{\rmL^*_1}^{\perp}(\mathbf{y}_i)\|$,
$i=1, \ldots, N_0$. Since the distribution of outliers is uniform on the unit sphere, the coordinates of both
$P_{\rmL^*_1}(\mathbf{y}_i)$ and
$P_{\rmL^*_1}^{\perp}(\mathbf{y}_i)^{T}/\|P_{\rmL^*_1}^{\perp}(\mathbf{y}_i)\|$
have expectations $0$ and take values in [-1,1]. We can thus
apply Hoeffding's inequality to the sum defining
$\mathbf{B}_{\rmL^*_1,\rmX}$ and consequently obtain that
\begin{equation}
\Pr\left(\frac{\max_{1 \leq p,l \leq d}|(\mathbf{B}_{\rmL^*_1,\rmX})_{p,l}|}{N_0}<\frac{\epsilon}{d\sqrt{D}}\right)\geq
1-2 dD
\exp\left(-\frac{N_0\epsilon^2}{2d^2D}\right).\label{eq:right}
\end{equation}

Next, we estimate the probability that the LHS of~\eqref{eq:main} is
sufficiently large. We first note that
\begin{align}
\sum_{i=1}^{N_1}\|\bC\bV P_{\rmL^*_1} (\bx_i)\| &\geq
\sum_{i=1}^{N_1}|\theta_1\mathbf{v}_1^{T}P_{\rmL^*_1} (\bx_i)|
=\sum_{i=1}^{N_1}|\mathbf{v}_1^{T}P_{\rmL^*_1} (\bx_i)| \nonumber\\
& \geq\sqrt{\sum_{i=1}^{N_1}|\mathbf{v}_1^{T}P_{\rmL^*_1} (\bx_i)|^2}
\geq
\min_{t}\sigma_t\left(\sum_{i=1}^{N_1}P_{\rmL^*_1}(\bx_i)P_{\rmL^*_1}(\bx_i)^{T}\right).\label{eq:temp}
\end{align}
Second of all, since $\mu_1$ is uniform on $\rmL^*_1 \cap \mathbb{S}^{D-1}$
\begin{equation} \label{eq:def_delta}
E_{\mu_1}(P_{\rmL^*_1}(\bx)P_{\rmL^*_1}(\bx)^{T}) =
\delta_{*}\mathbf{I}_{d}, \ \text{ where $\delta_{*}=\frac{1}{d}$.} \end{equation}

We will prove in \S\ref{app:verify_if} the following statement:
\begin{multline}\label{eq:verify_if} \text{If }  \max_{1 \leq j \leq d} \sigma_j
\left(\sum_{i=1}^{N_1}P_{\rmL^*_1}(\bx_i)P_{\rmL^*_1}(\bx_i)^{T}
-\delta_{*} \mathbf{I}_{d}\right) <
\eta,\\ \text{ then } \min_{1 \leq j \leq d} \sigma_j
\left(\sum_{i=1}^{N_1}P_{\rmL^*_1}(\bx_i)P_{\rmL^*_1}(\bx_i)^{T}\right)
> \delta_{*}-\eta. \end{multline}%
We combine~\eqref{eq:temp}-\eqref{eq:verify_if} and Hoeffding's
inequality to obtain the following probabilistic estimate for the
LHS of \eqref{eq:main}:
\begin{align}
&\Pr\left(\frac{\sum_{i=1}^{N_1}\|\bC\bV P_{\rmL^*_1} (\bx_i)\|}{N_1}>\delta_{*}-\eta\right)\label{eq:left}\\&\geq\Pr
\left(\min_{1 \leq j \leq d} \sigma_j \left(\frac{\sum_{i=1}^{N_1}P_{\rmL^*_1} (\bx_i )P_{\rmL^*_1} (\bx_i )^{T}}{N_1}\right)>\delta_{*}-\eta\right)\nonumber\\
&\geq \Pr\left(\max_{1 \leq j \leq d} \sigma_j \left(\frac{\sum_{i=1}^{N_1}P_{\rmL^*_1} (\bx_i )P_{\rmL^*_1} (\bx_i )^{T}}{N_1}-\delta_{*} \mathbf{I}_{d}\right)<\eta\right)\nonumber\\
&\geq \Pr\left(\left\|\frac{\sum_{i=1}^{N_1}P_{\rmL^*_1} (\bx_i
)P_{\rmL^*_1} (\bx_i )^{T}}{N_1}-\delta_{*}\mathbf{I}_{d}\right\|_F
<\eta\right)\nonumber\\
&\geq \Pr\left(\max_{1 \leq p,l \leq d} \left| \left( \frac{\sum_{i=1}^{N_1}P_{\rmL^*_1}
(\bx_i )P_{\rmL^*_1} (\bx_i )^{T}}{N_1}-
\delta_{*}\mathbf{I}_{d}\right)_{1\leq p,l \leq d} \right|<\frac{\eta}{d}\right) \nonumber
\\
&\geq1-2d^2\exp\left(-\frac{N_1\eta^2}{2d^2}\right).\nonumber
\end{align}

From \eqref{eq:right} and \eqref{eq:left}, \eqref{eq:main} is valid
with probability at least
\begin{equation}
1-2d^2\exp\left(-\frac{N_1\eta^2}{2d^2}\right)-2dD
\exp\left(-\frac{N_0\epsilon^2}{2d^2D}\right) \,\,\,\,\forall \,
\epsilon, \, \eta \ \text{ s.t. } \eta+\frac{N_0}{N_1}\epsilon<\frac{1}{d}.
\end{equation}
We can choose $\epsilon=N_1/2dN_0$,
$\eta= 1/d\sqrt{4.005}$ and obtain that if $N_0=o(N_1^2)$
then~\eqref{eq:main} is valid with the probability specified
in~\eqref{eq:dist_1subs_loc1}.


\subsection{Proof of Theorem~\ref{thm:hlm}: From Local Probabilistic Estimates to Global Ones}
\label{sec:proofs_intro}

\subsubsection{Outline of the Proof}
The proof verifies three different propositions and then combines them to conclude Theorem~\ref{thm:hlm}. We use the following notation: For any  subspace $\hat{\rmL} \in \GDd$
such that $\dG(\hat{\rmL},\rmL^*_1)=1$, we let $\rmL(t): [0,1]
\rightarrow \GDd$ denote the parametrization of the geodesic
line from $\rmL^*_1$ to $\hat{\rmL}$ such that $\rmL(0)=\rmL^*_1$ and $\rmL(1)=\hat{\rmL}$.
Using this notation and the setting of Theorem~\ref{thm:hlm}, the propositions are formulated as follows:
\begin{proposition}
\label{prop:aux1}
For any fixed $0<p \leq 1$ there exists $\gamma_1 = \gamma_1(p,D,d, \alpha_1, \alpha_0)$ such that
w.o.p.~for any $\hat{\rmL} \in \GDd$ satisfying $\dG(\hat{\rmL},\rmL^*_1)=1$ with the corresponding geodesic parametrization $\rmL(t)$ from $\rmL^*_1$ to $\hat{\rmL}$:
\begin{equation} \left. \ddtp \left(\frac{\sum_{\bx\in \rmX}\dist(\bx,\rmL(t))^p}{N}\right)\right|_{t=0}
>\gamma_1. \label{eq:derivativeii}\end{equation}
\end{proposition}
\begin{proposition}
\label{prop:aux2}
For any fixed $0<p \leq 1$ there exists $0<\gamma_2 = \gamma_2(p,D,d, \alpha_1, \alpha_0)<1$ such that w.o.p.~for any $t_0\in
[0,\gamma_2]$ and any $\hat{\rmL} \in \GDd$ satisfying $\dG(\hat{\rmL},\rmL^*_1)=1$ with the corresponding geodesic parametrization $\rmL(t)$ from
$\rmL^*_1$ to $\hat{\rmL}$:
\begin{equation}\left. \ddtp \left(\frac{\sum_{\bx\in \rmX}\dist(\bx,\rmL(t))^p}{N}\right)\right|_{t=0}-\left.\ddtp
\left(\frac{\sum_{\bx\in \rmX}\dist(\bx,\rmL(t))^p}{N}\right)
\right|_{t=t_0}<\frac{\gamma_1}{2},
\label{eq:local6}
\end{equation}
where $\gamma_1$ is the constant guaranteed by Proposition~\ref{prop:aux1} for this value of $p$.
\end{proposition}
\begin{proposition}
\label{prop:aux3}
For any fixed $0<p\leq 1$ and $\gamma_2$, the constant guaranteed by Proposition~\ref{prop:aux2}:
\begin{equation}\label{eq:global0}\text{
$\rmL^*_1$ is a global $\ell_p$ subspace
w.o.p.~in $\GDd\setminus \ballG(\rmL^*_1,\gamma_2)$. } \end{equation}
\end{proposition}

Theorem~\ref{thm:hlm} immediately concludes from these three propositions.
Indeed, Propositions~\ref{prop:aux1} and~\ref{prop:aux2} imply that
the function $e_{\ell_p}(\rmX, \rmL(t))$: [0,1] $\rightarrow \mathbb{R}$
of~\eqref{eq:def_error_1sub} has a positive derivative w.o.p.~at any
$t\in[0,\gamma_2]$ (as explained in \S\ref{subsec:comb_cond23}
we use the derivative with respect to the variable $t^p$). That is,
\begin{equation}\ddtp \left(\frac{\sum_{\bx\in \rmX}\dist(\bx,\rmL(t))^p}{N}\right)>0
\  \text{ for all $t\in[0,\gamma_2]$ \
w.o.p.}\label{eq:derivative5}\end{equation}

Equation \eqref{eq:derivative5} implies that
w.o.p.~$\rmL^*_1$ is the global $\ell_p$ subspace in
$\ballG(\rmL^*_1,\gamma_2)$. Combining it with \eqref{eq:global0}, we conclude Theorem~\ref{thm:hlm}.


We prove Proposition~\ref{prop:aux1} when $p=1$ in \S\ref{sec:proof_aux1_p1}
and when $0<p<1$ in \S\ref{sec:proof_aux1_smallp}; Proposition~\ref{prop:aux2}
in  \S\ref{sec:proof_aux2}; and Proposition~\ref{prop:aux3} in \S\ref{sec:global0}.
At last, \S\ref{sec:dependence_on_D} estimates the asymptotic dependence of the overwhelming probability in
Theorem~\ref{thm:hlm} and the minimal size $N$ on $d$ and $D$.

\subsubsection{Proof of Proposition~\ref{prop:aux1} for \boldsymbol{$p=1$}}
\label{sec:proof_aux1_p1}
We decompose the sampled data set as follows: $\rmX=\cup_{i=0}^K\rmX_i$, where $\rmX_i$ is the set of points sampled from $\mu_i$ for all $0\leq i\leq K$. It follows from~\eqref{eq:main} that the event in \eqref{eq:derivativeii} is the
same as the event
\begin{align}&\frac{\sum_{\bx\in\rmX_1}\|\bC\bV P_{\rmL^*_1}(\bx)\|-\|\bC\bV\bB_{\rmL^*_1,\rmX\setminus \rmX_1}\|_*}{N}
>\gamma_1 \ \ \label{eq:derivative1}\\ &\nonumber\ \forall \bC \in \NScp(d) \text{ and }
\bV \in \Or(d). \end{align}

We will prove~\eqref{eq:derivative1} in two steps. In the first step  we will fix matrices $\bC_0 \in
\NScp(d)$ and $\bV_0 \in \Or(d)$ and show that
\begin{align}&\frac{\sum_{\bx \in
\rmX_1}\|\bC_0\bV_0 P_{\rmL^*_1}(\bx)\|-\|\bC_0 \bV_0
\mathbf{B}_{\rmL^*_1,\rmX\setminus \rmX_1}\|_{*}}{N}>2\gamma_1
\,\,\,\label{eq:deduction2}\\ &\text{w.p.}\geq\,1-(2D^2+1)\exp(-2N\gamma_1^2),\nonumber
\end{align}
where
$$\gamma_1:=\beta_0 \min_{\bC \in \NScp(d),\bV \in \Or(d)}
E_{\mu_1} \|\bC \bV P_{\rmL^*_1}(\bx) \|/6 \ \
\text{ and } \ \ \beta_0 = \alpha_1-\sum_{j=2}^{K}\alpha_j.$$
In the second step we will combine a  covering argument and \eqref{eq:deduction2} to prove~\eqref{eq:derivative1}.

\subsubsection*{Step 1: Proof of~\eqref{eq:deduction2}}
We will first verify the following two probabilistic inequalities:
\begin{equation}\label{eq:first_step_a}
\frac{\|\bC_0 \bV_0
\mathbf{B}_{\rmL^*_1,\rmX_0}\|_{*}}{N}<2\gamma_1\,\,\text{w.p.~$1 - 2 \,D^2 \exp(2\gamma_1^2\, N)$}
\end{equation}
and
\begin{align} \label{eq:taekwon} &\frac{\sum_{\bx \in
\rmX_1}\|\bC_0\bV_0 P_{\rmL^*_1}(\bx)\|-\sum_{\bx \in \rmX\setminus
\{\rmX_1\cup\rmX_0\}}\|\bC_0 \bV_0P_{\rmL^*_1}(\bx)\|}{N}>4\gamma_1
\,\,\,\\ &\text{w.p.}\geq\,1-\exp(-2N\gamma_1^2).\nonumber
\end{align}

\subsubsection*{Part I of Step 1: Proof of~\eqref{eq:first_step_a}}

We define 
$J_0(\bx)=I(\bx\in\rmX_0)\,P_{\rmL^*_1}(\bx)P^\perp_{\rmL^*_1}
(\bx)^{T}/\dist(\bx,\rmL^*_1)$.
We note that its elements lie in $[-1,1]$ and $E_{\mu_0}(J_0(\bx)) = \b0$. Indeed, denoting $R_{\rmL^*_1}(\bx) = \hat{P}_{\rmL^*_1}(\bx)-\hat{P}^\perp_{\rmL^*_1}(\bx)$ (i.e.,
$R_{\rmL^*_1}(\bx)$ is the reflection of $\bx$ w.r.t.~the $d$-subspace $\rmL^*_1$) we obtain that
\begin{equation}2E_{\mu_0}(J_0(\bx))=E_{\mu_0} (J_0(\bx))-E_{\mu_0}(J_0(R_{\rmL^*_1}(\bx)))=
\b0,
\nonumber
\end{equation}
where the first equality is clear since $P_{\rmL^*_1}(\bx)P^\perp_{\rmL^*_1} (\bx)^{T}=
-P_{\rmL^*_1}(R_{\rmL^*_1}(\bx))P^\perp_{\rmL^*_1} (R_{\rmL^*_1}(\bx))^{T}$ and the second one follows from
the symmetry of $\mu_0$.
Therefore, combining the fact that
\[\bD_{i,j}=\be_i^T\bD\be_j\leq\max_{\bu,\bv\in\mathbb{R}^D}\bu^T\bD\bv/\|\bu\|\|\bv\|=\|\bD\|_*,\]
for any $\bD\in\mathbb{R}^{D\times D}$ and $1\leq i,j\leq N$, and
Hoeffding's inequality for the random variable $J_0(\bx)$, we establish the following inequality, which clearly implies~\eqref{eq:first_step_a}:
\begin{align}
&\Pr\left(\|\sum_{\bx
\in  \rmX_0}P_{\rmL^*_1}(\bx)P^\perp_{\rmL^*_1}
(\bx)^{T}/\dist(\bx,\rmL^*_1)\|_{*}/N<2\gamma_1\right)\nonumber\\
\geq&\Pr\left(\|\sum_{\bx
\in  \rmX_0}P_{\rmL^*_1}(\bx)P^\perp_{\rmL^*_1}
(\bx)^{T}/\dist(\bx,\rmL^*_1)\|_{\infty}/N<2\gamma_1\right)
\geq1 - 2 \,D^2 \exp(2\gamma_1^2\, N).
\label{eq:nuclear_sX_0}
\end{align}

\subsubsection*{Part II of Step 1: Proof of~\eqref{eq:taekwon}}

We define the random variable $J_1(\bx)=(I(\bx\in
\rmX_1)-I(\bx\in
\rmX\setminus\{\rmX_1\cup\rmX_0\}))\|\bC_0\bV_0P_{\rmL^*_1}(\bx)\|$ and using the spherical symmetry of $\{\mu_i\}_{i=1}^{K}$, we have
\begin{align}\label{eq:distri} E_{\mu}&(J_1(\bx)) = E_{\mu^N}\left(\frac{\sum_{\bx \in \rmX_1}\|\bC_0\bV_0
P_{\rmL^*_1}(\bx)\|-\sum_{\bx \in \rmX\setminus \{\rmX_1\cup\rmX_0\}}\|\bC_0
\bV_0P_{\rmL^*_1}
(\bx)\|}{N}\right) \\\nonumber
&=\alpha_1E_{\mu_1}\|\bC_0\bV_0P_{\rmL^*_1}(\bx)\|
-\sum_{j=2}^{K}\alpha_jE_{\mu_j}\|\bC_0\bV_0P_{\rmL^*_1}(\bx)\|\\\nonumber
&\geq\alpha_1E_{\mu_1}\|\bC_0\bV_0P_{\rmL^*_1}(\bx)\|
-\sum_{j=2}^{K}\alpha_jE_{\mu_1}\|\bC_0\bV_0P_{\rmL^*_1}(\bx)\|
\\\nonumber
&= \beta_0 E_{\mu_1}\|\bC_0\bV_0P_{\rmL^*_1}(\bx)\| \geq 6 \gamma_1.
\end{align}
We conclude \eqref{eq:taekwon} by applying Hoeffding's inequality to the random
variable $J_1(\bx)$, while using the facts that its expectation is larger than $6\gamma_1$ and its values are
in $[-1,1]$.

\subsubsection*{Part III of Step 1: Conclusion of~\eqref{eq:deduction2} via~\eqref{eq:first_step_a} and~\eqref{eq:taekwon}}

We first observe that
\begin{equation}
\|\bC_0\bV_0\mathbf{B}_{\rmL^*_1,\rmX\setminus \rmX_1}\|_{*}\leq \|\bC_0\bV_0\mathbf{B}_{\rmL^*_1,\rmX\setminus \{\rmX_1\cup\rmX_0\}}\|_{*}+
\|\bC_0\bV_0\mathbf{B}_{\rmL^*_1,\rmX\setminus \rmX_0}\|_{*}
\label{eq:barea} \end{equation}
and
\begin{align}&\|\bC_0 \bV_0
\mathbf{B}_{\rmL^*_1,\rmX\setminus \{\rmX_1\cup\rmX_0\}}\|_{*}=\|\bC_0 \bV_0\sum_{\bx
\in \rmX\setminus \{\rmX_1\cup\rmX_0\}}P_{\rmL^*_1}(\bx)P^\perp_{\rmL^*_1}
(\bx)^{T}/\dist(\bx,\rmL^*_1)\|_{*}\label{eq:B_CV}\\
&\leq \!\!\!\sum_{\bx \in \rmX\setminus \{\rmX_1\cup\rmX_0\}}\!\!\!\|\bC_0 \bV_0
P_{\rmL^*_1}(\bx)P^\perp_{\rmL^*_1} (\bx)^{T}/\|P^\perp_{\rmL^*_1}
(\bx)\|\,\|_{*}\leq \!\!\!\sum_{\bx \in \rmX\setminus \{\rmX_1\cup\rmX_0\}}\!\!\!\|\bC_0 \bV_0
P_{\rmL^*_1}(\bx)\|.\nonumber
\end{align}
Applying~\eqref{eq:barea} and~\eqref{eq:B_CV}, we bound the LHS of~\eqref{eq:deduction2} by the difference
between the LHS of \eqref{eq:first_step_a} and the LHS of \eqref{eq:taekwon} as follows:
\begin{align}
\label{eq:barea2}
&\frac{\sum_{\bx \in
\rmX_1}\|\bC_0\bV_0 P_{\rmL^*_1}(\bx)\|-\|\bC_0 \bV_0
\mathbf{B}_{\rmL^*_1,\rmX\setminus \rmX_1}\|_{*}}{N}\\\ \nonumber \geq& \frac{\sum_{\bx \in
\rmX_1}\|\bC_0\bV_0 P_{\rmL^*_1}(\bx)\|-\|\bC_0\bV_0\mathbf{B}_{\rmL^*_1,\rmX\setminus \{\rmX_1\cup\rmX_0\}}\|_{*}-
\|\bC_0\bV_0\mathbf{B}_{\rmL^*_1,\rmX\setminus \rmX_0}\|_{*}}{N}
\\ \nonumber \geq & \frac{\sum_{\bx \in
\rmX_1}\|\bC_0\bV_0 P_{\rmL^*_1}(\bx)\|-\sum_{\bx \in \rmX\setminus
\{\rmX_1\cup\rmX_0\}}\|\bC_0 \bV_0P_{\rmL^*_1}(\bx)\|}{N}-\frac{\|\bC_0 \bV_0
\mathbf{B}_{\rmL^*_1,\rmX_0}\|_{*}}{N}.
\end{align}
Equation~\eqref{eq:deduction2} is thus an immediate consequence of \eqref{eq:first_step_a}, \eqref{eq:taekwon} and \eqref{eq:barea2}.

\subsubsection*{Step 2: Conclusion of~\eqref{eq:derivative1} via \eqref{eq:deduction2} and a covering argument}

We recall that~\eqref{eq:derivative1} needs to be verified for all matrices $\bC \in \NScp(d)$ and $\bV \in \Or(d)$. We define
\begin{align}
\label{eq:dist_mat_pair}
\dist_{\NScp(d) \times \Or(d)}((\bC_1,\bV_1),(\bC_2,\bV_2)):=\max(\|\bC_1-\bC_2\|_2,\|\bV_1-\bV_2\|_2)
\end{align}
and note that whenever
$\dist_{\NScp(d) \times \Or(d)}((\bC_1,\bV_1),(\bC_2,\bV_2))<\gamma_1/2$ and $\bx\in
\ball(\mathbf{0},1)$ we have that
\begin{align}&\|\bC_1\bV_1
P_{\rmL^*_1}(\bx)\|-\|\bC_2\bV_2
P_{\rmL^*_1}(\bx)\|\nonumber\\
&=(\|\bC_1\bV_1 P_{\rmL^*_1}(\bx)\|-\|\bC_2\bV_1
P_{\rmL^*_1}(\bx)\|)+(\|\bC_2\bV_1 P_{\rmL^*_1}(\bx)\|-\|\bC_2\bV_2
P_{\rmL^*_1}(\bx)\|)\nonumber\\&\leq
\|\bC_1-\bC_2\|_2+\|\bC_2\|_2\|\bV_1-\bV_2\|_2\leq \gamma_1
.\label{eq:distance}\end{align}
Combining~\eqref{eq:deduction2} and~\eqref{eq:distance} we obtain
that for $(\bC,\bV)$ in a ball in $\NScp(d) \times \Or(d)$ of radius $\gamma_1/2$ and center
$(\bC_0,\bV_0)$:
\begin{equation}\label{eq:for_ball}\frac{\sum_{\bx \in \rmX_1}\|\bC\bV
P_{\rmL^*_1}(\bx)\|-\sum_{\bx \in \rmX\setminus \rmX_1}\|\bC \bV
P_{\rmL^*_1}(\bx)\|}{N}>\gamma_1
\,\,\,\text{w.p.}\geq\,1-\exp(-2N\gamma_1^2).\end{equation}%

We easily extend~\eqref{eq:for_ball} for all pairs of matrices
$(\bC,\bV)$ in the compact space $\NScp(d) \times \Or(d)$ (with the
distance specified in~\eqref{eq:dist_mat_pair}). Indeed,
it follows from~\cite[Theorem 7]{Szarek97metricentropy}
that $\Or(d)$ can be covered by
$C_1'^{d(d-1)/2}$ $/(\gamma_1/2)^{d(d-1)/2}$ balls of radius
$\gamma_1/2$ for some $C_1'>0$
(note that the dimension of $\Or(d)$ is $d(d-1)/2$). Since $\NScp(d)$ is isomorphic to $\mathbb{S}^{d-1}$,
it follows from \cite[Lemma 5.2]{vershynin_book} that it can
be covered by $3^{d}$ $/(\gamma_1/2)^{d}$ balls of radius
$\gamma_1/2$.
Therefore, the product space $\NScp(d) \times \Or(d)$ with norm
defined in \eqref{eq:dist_mat_pair} can be covered by $C_1^{d(d+1)/2}/(\gamma_1/2)^{d(d+1)/2}$ balls of radius
$\gamma_1/2$, where $C_1:=\max(C_1',3)$, and consequently
\begin{multline}
\label{eq:est_prob_derivative1} \text{\eqref{eq:derivative1} is
valid for any $\bC \in \NScp(d)$ and $\bV \in \Or(d)$}
\\\text{w.p.}\,\,1-C_1^{d(d+1)/2}\exp(-2N\gamma_1^2)/(\gamma_1/2)^{d(d+1)/2},
\end{multline}
which means that \eqref{eq:derivativeii} with $p=1$ holds with the probability specified in \eqref{eq:est_prob_derivative1}.

\subsubsection{Proof of Proposition~\ref{prop:aux1} for \boldsymbol{$0<p<1$}}
\label{sec:proof_aux1_smallp}

When $0<p<1$, it follows from \eqref{eq:main2_a_p<1} and Hoeffding's
inequality  that~\eqref{eq:derivativeii} holds for any
$\bC \in \NScp(d)$ and $\bV \in \Or(d)$
w.p.~$1-\exp(-2N\gamma_1^2)$, where
$$\gamma_1:=\alpha_1 \cdot \min_{\bC\in\NScp(d), \bV\in
\Or(d)}E_{\mu_0}(\|\bC \bV
P_{\rmL^*_1}(\bx)\|^p)/2.$$

Following the same covering argument as in the proof of \eqref{eq:est_prob_derivative1}, we conclude that \eqref{eq:derivativeii} holds with the same probability specified in \eqref{eq:est_prob_derivative1} (though $\gamma_1$ is defined differently for $p=1$ and $0<p<1$).

\subsubsection{Proof of Proposition~\ref{prop:aux2}}
\label{sec:proof_aux2}

We verify~\eqref{eq:local6} by separating $\rmX$ into three parts:  $\rmX_1=\rmX\cap\rmL^*_1$, $\hat{\rmX}:=\{\bx
\in\rmX\setminus\rmX_1:\dist(\bx,\rmL^*_1)\leq 2\,\gamma_3\}$ ($\gamma_3$ will be clarified later) and
$\rmX \setminus (\rmX_1 \cup \hat{\rmX})$. Specifically, we will prove  that there exists $0<\gamma_2 <1$ such that for any
$t_0 \in [0,\gamma_2]$:
\begin{align}& \frac{1}{N} \sum_{\bx\in \rmX_1}
\left(\left.\ddtp\dist(\bx,\rmL(t))^p
 \right|_{t=0}-\left.\ddtp \dist(\bx,\rmL(t))^p
 \right|_{t=t_0}\right)<\frac{\gamma_1}{6},
\ \ \label{eq:local1}\end{align}
\begin{align}\frac{1}{N} \sum_{\bx\in \hat{\rmX}}
\left(\left.\ddtp\dist(\bx,\rmL(t))^p
 \right|_{t=0}-\left.\ddtp \dist(\bx,\rmL(t))^p
 \right|_{t=t_0}\right)
<\frac{\gamma_1}{6}. \label{eq:local2}\end{align}
and
\begin{equation}
\frac{1}{N} \sum_{\bx\in \rmX \setminus (\rmX_1 \cup \hat{\rmX})}
\left(\left.\ddtp\dist(\bx,\rmL(t))^p
 \right|_{t=0}-\left.\ddtp \dist(\bx,\rmL(t))^p
 \right|_{t=t_0}\right)
<\frac{\gamma_1}{6}.\label{eq:local3}
\end{equation}
We prove \eqref{eq:local1} and \eqref{eq:local3} deterministically and~\eqref{eq:local2} w.o.p.
Then \eqref{eq:local6} follows from the summation of  \eqref{eq:local1}, \eqref{eq:local2} and \eqref{eq:local3}.

In order to prove~\eqref{eq:local1}, we uniformly bound from above the terms of the sum in~\eqref{eq:local1} by a term of order $O(t_0^2)$.
For simplicity, let us assume that $p=1$. It follows from~\eqref{eq:inlier} and the fact
that the $\sinc$ function is decreasing
that for any
$\bx\in \rmX_1$ and any $0 \leq t_0 \leq 1$, the derivative $\frac{\di}{\di t}
\left(\dist(\bx,\rmL(t))\right)$ at $t=t_0$ is bounded 
below by
\begin{equation}
\label{eq:deriv_lower_bound}
\frac{\frac{\sin t_0}{t_0}\sum_{j=1}^{d}\theta_j
|\mathbf{v}_j\cdot\bx|^2 t_0
\theta_j\cos(t_0)}{\sqrt{\sum_{j=1}^{d}|(\mathbf{v}_j\cdot
\bx)|^2(t\theta_j)^2}}=\frac{\sin t_0 \cos
t_0}{t_0}\sqrt{\sum_{j=1}^{d}|(\mathbf{v}_j\cdot
\bx)|^2 \theta_j^2}.
\end{equation}
We note that $\sqrt{\sum_{j=1}^{d}|(\mathbf{v}_j\cdot
\bx)|^2 \theta_j^2} \leq 1$
(indeed, the assumption $\dG(\hat{\rmL},\rmL^*_1)=1$ implies that
$\sum_{i=1}^d \theta_i^2=1$). Combining this observation with \eqref{eq:inlier1}
and~\eqref{eq:deriv_lower_bound}
we derive the following bound on the terms in the sum of~\eqref{eq:local1} when $p=1$:
\begin{equation}
\left.\ddtp\dist(\bx,\rmL(t))
 \right|_{t=0}-\left.\ddtp \dist(\bx,\rmL(t))
 \right|_{t=t_0} \leq \left( 1-\frac{\sin t_0 \cos
t_0}{t_0} \right)=O(t_0^2).
\end{equation}
Similarly, one can also uniformly bound these terms by an $O(t_0^2)$ term when $p<1$.
Therefore, one can choose a sufficiently small $\gamma_2$ such that \eqref{eq:local1} holds.

Next, we verify~\eqref{eq:local2}. Here we can bound the terms of the sum in~\eqref{eq:local2} by 2
(using an additional assumption; see below). However, we cannot bound them by a term that approaches zero
when $t_0$ approaches zero.
We thus control w.o.p.~the fraction of the cardinality of $\hat{\rmX}$ over $N$ by a sufficiently small constant. We fix $\gamma_3\equiv
\gamma_3(D,d,\gamma_1)\equiv \gamma_3(D,d,\alpha_0,\alpha_1,p)$
a sufficiently small constant such that
\begin{align}\label{eq:gamma3}
\mu(\bx \in \mathbb{S}^{D-1}: 0<\dist(\bx, \rmL^*_1)\leq 2\,\gamma_3)\leq \gamma_1/24.
\end{align}
By  applying Hoeffding's inequality to the indicator function of ${\hat{\rmX}}$,
$I_{\hat{\rmX}}(\bx)$, while using the facts that $E(I_{\hat{\rmX}}(\bx))= \mu(\bx:
\bx\in\hat{\rmX}) \leq \gamma_1/24$ and $I_{\hat{\rmX}}(\bx)$ takes values
in $[0,1]$, we bound the size of $\hat{\rmX}$ as follows:
\begin{equation}\label{eq:control_ratios}
\frac{\#(\hat{\rmX})}{N} = \frac{\#(\hat{\rmX})}{\#(\rmX)}\leq
\gamma_1/12 
\ \ \text{w.p. $1-\exp(-2N(\gamma_1/24)^2)$}. \end{equation}%

Now for $\bx \in \hat{\rmX}$, we claim that the derivative according to $t^p$ of $\dist(\bx,\rmL(t))^p$
takes values in $[-1,1]$ (this requires an additional assumption when $p<1$).
When $p=1$ it is easiest to see it by directly applying the definition of the derivative to $\di (\dist(\bx,\rmL(t)))/\di t$ and then using Lemma~\ref{lemma:distance} to control the corresponding difference of distances. When $p<1$, we introduce the harmless assumption: $\gamma_2<\gamma_3$.
One may conclude the bound in this case by applying~\eqref{eq:def_deriv_tp},
the former bound on the derivative (when $p=1$) and bounding $t^{1-p}/\dist(\bx,\rmL(t))^p$ by 1.
The latter bound follows from the observation that for any $t \in [0, \gamma_2]$:
$t \leq \gamma_3 \leq \dist(\bx,\rmL(t))$, which can be concluded by the followings: Application of
Lemma~\ref{lemma:distance} with $\rmL_1 = \rmL(0)$ and
$\rmL_2 = \rmL(t)$, basic estimates, the definitions of $\gamma_2$, $\gamma_3$ and $\hat{\rmX}$ and the assumption $\gamma_2<\gamma_3$.
Thus the elements in the sum of~\eqref{eq:local2} are bounded by 2 (assuming $\gamma_2<\gamma_3$).
This observation and~\eqref{eq:control_ratios} imply that \eqref{eq:local2} holds for $t_0 \in [0,1]$ with
the probability specified in~\eqref{eq:control_ratios}.

Finally, in order to verify~\eqref{eq:local3} we
apply the fundamental theorem of calculus
and rewrite~\eqref{eq:local1} as follows:
\begin{align}
& \frac{1}{N}
\int_{t=0}^{t_0}
\sum_{\bx\in \rmX \setminus (\rmX_1 \cup \hat{\rmX})}
\frac{\di^2}{\di (t^p)^2}   \dist(\bx,\rmL(t))^p
 \di t  <\frac{\gamma_1}{6}.
\label{eq:local1_aux}\end{align}
Differentiating~\eqref{eq:outlier} and \eqref{eq:main2_a_p<1} one more time, we obtain that for
$\bx\in \rmX \setminus (\rmX_1 \cup \hat{\rmX})$,  the
second derivative of $\dist(\bx,\rmL(t))$ with respect to $t^p$ is bounded by
$C(d)/\gamma_3^3$, where $C(d)$ is in the order of $d^2$. Thus we can choose
$\gamma_2\equiv\gamma_2(\gamma_1,\gamma_3,d)\equiv
\gamma_2(\alpha_0,\alpha_1,d,D,p)$ such that $\gamma_2C(d)/\gamma_3^3<\gamma_1/6$ and then
\eqref{eq:local3} holds. Equation~\eqref{eq:local6}
is thus verified by combining~\eqref{eq:local1}, \eqref{eq:local2}
and \eqref{eq:local3}, and it holds with the probability specified in \eqref{eq:control_ratios}.


%

\subsubsection{Proof of Proposition~\ref{prop:aux3}}
\label{sec:global0}

Applying
Lemma~\ref{lemma:mean} we obtain that for all $2\leq i\leq K$:
\begin{equation}\label{eq:to_final1}
E_{\mu_1}\left(\dist(\bx,\rmL)^p-\dist(\bx,\rmL^*_1)^p)+E_{\mu_i}(\dist(\bx,\rmL)^p-\dist(\bx,\rmL^*_1)^p\right)\geq
0. \end{equation}
Further application of Lemma~\ref{lemma:ell_dist_est} with  $\rmL\in\GDd\setminus \ballG(\rmL^*_1,\gamma_2)$  results in the
inequality:
\begin{equation}E_{\mu_1}(\dist(\bx,\rmL))>0.88 \cdot 2^{\frac{3p}{2}} \cdot \pi^{-\frac{(2p+1)}{2}}\cdot ({d+p})^{-p/2} \cdot\gamma_2^p.
\label{eq:to_final2} \end{equation}%
Now, combining~\eqref{eq:to_final1} and~\eqref{eq:to_final2} we have
that
\begin{align}
&E_{\mu}(\dist(\bx,\rmL)^p-\dist(\bx,\rmL^*_1)^p)\nonumber\\
=&\sum_{i=2}^{K}\alpha_i ((E_{\mu_1}(\dist(\bx,\rmL)^p-\dist(\bx,\rmL^*_1)^p)+E_{\mu_i}(\dist(\bx,\rmL)^p-\dist(\bx,\rmL^*_1)^p))\nonumber\\&+\beta_0 E_{\mu_1}(\dist(\bx,\rmL)^p-\dist(\bx,\rmL^*_1)^p)
\label{eq:to_final3}
\geq
\beta_0\cdot0.88 \cdot 2^{\frac{3p}{2}} \cdot \pi^{-\frac{(2p+1)}{2}}\cdot ({d+p})^{-p/2} \cdot\gamma_2^p\,.
\end{align}
We define
\begin{equation}
\gamma_4 = 0.88 \cdot 2^{\frac{3p}{2}} \cdot \pi^{-\frac{(2p+1)}{2}}\cdot ({d+p})^{-p/2} \cdot\gamma_2^p\label{eq:to_final4}
\end{equation}
and note that it depends on $d$, $D$, $K$, $\alpha_0$, $\alpha_1$ and
$\min_{2\leq i\leq K}(\dG(\rmL^*_1,\rmL^*_i))$.
Applying Hoeffding's inequality to
$\dist(\bx,\rmL)-\dist(\bx,\rmL^*_1)$, whose absolute values are uniformly bounded by $1$ and its expectation is at least $\gamma_4$ (which follows from \eqref{eq:to_final3} and \eqref{eq:to_final4}), we obtain that for any
$\rmL\in\GDd\setminus \ballG(\rmL^*_1,\gamma_2)$:
\begin{align}
e_{\ell_p}(\rmX,\rmL)-e_{\ell_p}(\rmX,\rmL^*_1)>\gamma_4N/2 \ \
\text{w.p.} \, \geq 1-\exp(-N\gamma_4^2/8)\,.
\label{eq:global}\end{align}

By Lemma~\ref{lemma:distance} we have that for any $\rmL'\in\GDd$
satisfying $\dG(\rmL,\rmL')<(\gamma_4/4)^{1/p}$ and any $\bx\in
\ball(\mathbf{0},1)$:
\[|\dist(\bx,\rmL')^p-\dist(\bx,\rmL)^p|<\gamma_4/4.\]
Consequently, for any $\rmL\in\GDd\setminus \ballG(\rmL^*_1,\gamma_2)$
and all $\rmL'\in \ballG(\rmL,(\gamma_4/4)^{1/p})$:
\begin{align}
e_{\ell_p}(\rmX,\rmL')-e_{\ell_p}(\rmX,\rmL^*_1)>0 \,\,\, \text{w.p.}
\, \geq 1-\exp(-N\gamma_4^2/8)\,.\label{eq:global1}
\end{align}

We can cover $\GDd\setminus
\ballG(\rmL^*_1,\gamma_2)$ by $(C_2\sqrt{d})^{d(D-d)}/\gamma_4^{d(D-d)/p}$ balls
of radius $(\gamma_4/4)^{1/p}$ (this follows from Remark~8.4 of~\cite{szarek83}).
Now, for each such ball we have
that~\eqref{eq:global} is valid for its center
w.p.~$1-\exp(-N\gamma_4^2/8)$ and consequently~\eqref{eq:global1} is
valid for subspaces in that ball with the same probability. We thus
conclude that \begin{equation}\label{eq:global0_prob}\text{~\eqref{eq:global0} holds
w.p.~$1-\exp(-N\gamma_4^2/8) (C_2\sqrt{d})^{d(D-d)/p}/\gamma_4^{d(D-d)}$.}
\end{equation}

\subsubsection{Dependence of the Probability and \boldsymbol{$N$} on \boldsymbol{$d$} and \boldsymbol{$D$}}
\label{sec:dependence_on_D}
Applying the union bound for the events specified in \eqref{eq:derivativeii}, \eqref{eq:local6} and \eqref{eq:global0}, whose probabilities are specified in \eqref{eq:est_prob_derivative1}, \eqref{eq:control_ratios} and \eqref{eq:global0_prob} respectively,  we conclude that $\rmL^*_1$ is a global $\ell_1$ subspace w.p. at least
\begin{align}\label{eq:probability_Theorem1}
1-&C_1^{d(d+1)/2}\exp(-2N\gamma_1^2)/(\gamma_1/2)^{d(d+1)/2}-\exp(-2N(\gamma_1/24)^2)
\\-&\exp(-N\gamma_4^2/8) (C_2\sqrt{d})^{d(D-d)/p}/\gamma_4^{d(D-d)}.
\nonumber\end{align}
We bound~\eqref{eq:probability_Theorem1} from below by $1-C'\exp(-N/C)$, where
$C=1/\min(2(\gamma_1/24)^2,\gamma_4^2/8)$
and
$C'=C_1^{d(d+1)/2}/(\gamma_1/2)^{d(d+1)/2}+1+(C_2\sqrt{d})^{d(D-d)/p}/\gamma_4^{d(D-d)}.$

We cannot formulate nice expressions for $\gamma_1$ and $\gamma_4$, however, we can express their dependence on $D$ and $d$ as follows (assuming the rest of the parameters are fixed).
The definition of $\gamma_1$ in \eqref{eq:distri} implies that it is in the order of $1/d$. In order to estimate $\gamma_4$, we first need to estimate
$\gamma_3$ and $\gamma_2$.
The defining equation of $\gamma_3$, i.e.,
\eqref{eq:gamma3}, implies that $\gamma_3$ is in the order of $d^{-1}D^{-1/2}$ (a rigorous argument appears in \S\ref{sec:invertible_gamma}). We claim that
$\gamma_2$ is in the order
of $d^{-6}D^{-1.5}$. Indeed, when proving~\eqref{eq:local1} we required that $\gamma_2 C(d)/\gamma_3^3<\gamma_1/6$ and $C(d)=O(d^2)$. 
At last, applying \eqref{eq:to_final4} and the estimate above for $\gamma_2$, we conclude that $\gamma_4$ is in the order of $d^{-6.5\,p}D^{-1.5 p}$. Therefore, $C=O(d^{\max(13p,2)}D^{3p})$ and $C'=O(d^{d(d+1)/2}+d^{6.5d(D-d)}D^{1.5d(D-d)})$.

We can use these estimates for $C$ and $C'$ and thus for the probability $1-C'\exp(-N/C)$ to obtain an estimate of the dependence of the minimal size $N$ on $D$ and $d$ in the asymptotic case. Assume that $N, D\rightarrow\infty$ and
$N / (d^{\max(13p,2)+1}D^{3p}\max(D-d,d+1)\log(D))\rightarrow \infty,
$
then the probability $1-C'\exp(-N/C)$ approaches $0$.
That is, asymptotically $N= \Omega(d^{\max(13p,2)+1}D^{3p}\max(D-d,d+1)\log(D))$.
This estimate, which indicates significant oversampling for the single subspace recovery,
is not tight and tighter estimates are left for future work.

\subsection{Proof of Theorem~\ref{thm:noisy_hlm}: Stability Analysis}
\label{sec:proofs_noise}

\subsubsection{Reduction of Theorem~\ref{thm:noisy_hlm}}\label{sec:noisy_hlm1}

We first explain how to reduce the proof of
Theorem~\ref{thm:noisy_hlm} when $0<p\leq 1$ to the verification of
a simpler statement. We then adapt this idea for proving the same
theorem when both $p> 1$ and $K=1$.

In order to prove Theorem~\ref{thm:noisy_hlm} when $0<p\leq 1$,
i.e., prove that the global minimum of $e_{\ell_p}(\rmX, \rmL)$ is
in $\ballG(\rmL^*_1,f)$ w.o.p., we only need to show that there exists
a constant $\rho_1>0$ such that for any $\rmL \notin
\ballG(\rmL^*_1,f)$:
\begin{equation}\label{eq:noise}
E_{\mu_\eps}(e_{\ell_p}(\bx, \rmL))>E_{\mu_\eps}(e_{\ell_p}(\bx,
\rmL^*_1))+\rho_1.
\end{equation}
Indeed, we cover the compact space $\GDd\setminus \ballG(\rmL^*_1,f)$
with small balls of radius $\rho_1/2$. Then by using
\eqref{eq:noise} and Hoeffding's inequality, we obtain that
$e_{\ell_p}(\rmX, \rmL)>e_{\ell_p}(\rmX, \rmL^*_1)$ for any $\rmL$ in
each such ball w.o.p. Therefore, $e_{\ell_p}(\rmX,
\rmL)>e_{\ell_p}(\rmX, \rmL^*_1)$ for  $\rmL\in\GDd\setminus
\ballG(\rmL^*_1,f)$ w.o.p.  Equivalently, $\GDd\setminus
\ballG(\rmL^*_1,f)$ does not contain the global minimum of
$e_{\ell_p}(\rmX, \rmL)$ w.o.p. By a similar argument as in \S\ref{sec:global0}, the probability is at least $1-\exp(-N\rho_1^2/8) (C_2\sqrt{d})^{d(D-d)/p}/\rho_1^{d(D-d)}$.

We will prove \eqref{eq:noise} with
\begin{equation}
\label{eq:noise_rho} \rho_1=2\eps^p
\end{equation}
and thus obtain the probability specified in \eqref{eq:thm2_prob}.

We further reduce~\eqref{eq:noise} by using the measure $\mu$ instead of $\mu_\eps$ (see \S\ref{subsec:precise}).
Combining the triangle inequality and the concavity of $x^p$ we obtain that
\begin{align}
\nonumber &|E_{\mu_i+\nu_{i,\eps}}(e_{\ell_p}(\bx,
\rmL))-E_{\mu_i}(e_{\ell_p}(\bx, \rmL))|=|E_{\mu_i+\nu_{i,\eps}} (\|P_{\rmL^\perp}(\bx)\|^p-\|P_{\rmL^\perp}(\hat{P}_{\rmL_i^*}(\bx))\|^p)|
\\ &\leq E_{\mu_i+\nu_{i,\eps}}\|P_{\rmL^\perp}(\hat{P}_{\rmL_i^{*\perp}}(\bx))\|^p
\leq E_{\mu_i+\nu_{i,\eps}}\|P_{\rmL_i^{*\perp}}(\bx)\|^p
=E_{\nu_{i,\eps}}\|\bx\|^p \leq \eps^p.
\label{eq:reduce_for_thm2_1}
\end{align}
Summing \eqref{eq:reduce_for_thm2_1} over all $1\leq i\leq K$, we have
\begin{equation}
\label{eq:reduce_for_thm2_2}
|E_{\mu_\eps}(e_{\ell_p}(\bx,
\rmL))-E_{\mu}(e_{\ell_p}(\bx, \rmL))|\leq \eps^p.
\end{equation}
Hence, in order to prove \eqref{eq:noise} and thus
Theorem~\ref{thm:noisy_hlm} for $p\leq 1$, the following equation is
sufficient:
\begin{equation}\label{eq:noise1}
E_{\mu}(e_{\ell_p}(\bx,
\rmL))>E_{\mu}(e_{\ell_p}(\bx,
\rmL^*_1))+\rho_1+2\eps^p,\,\,\,\text{for any $\rmL\in\GDd\setminus
\ballG(\rmL^*_1,f)$}.
\end{equation}

We can similarly reduce Theorem~\ref{thm:noisy_hlm} when $K=1$ and
$p>1$. However, \eqref{eq:reduce_for_thm2_1} needs to be modified since $x^p$ is not
concave when $p>1$. For this purpose we note that for any $\bx_1,\bx_2 \in \ball(\mathbf{0},1)$
\begin{equation}
\dist(\bx_1,{\rmL}^*_1)^p-\dist(\bx_2,{\rmL}^*_1)^p<1-(1-\dist(\bx_1,\bx_2))^p < p \cdot \dist(\bx_1,\bx_2).\label{eq:p>1_error_control1}
\end{equation}
Indeed, when $p=1$ \eqref{eq:p>1_error_control1} is immediate (it is equivalent to $\|P_{{\rmL}^*_1}(\bx_2-\bx_1)\| \leq \|\bx_2-\bx_1\|$)
and it extends to $p>1$ by the following proposition: if $0\leq y_1,y_2\leq
1$, $y_1-y_2<\eta$
and $p>1$, then $y_1^p-y_2^p<1-(1-\eta)^p$.
Combining~\eqref{eq:p>1_error_control1} with the derivation of~\eqref{eq:reduce_for_thm2_1}, we conclude the following analog of \eqref{eq:reduce_for_thm2_1} in the current case:
\begin{equation}
\label{eq:bound_p_eps}
|E_{\mu_\eps}(e_{\ell_p}(\bx,
\rmL))-E_{\mu}(e_{\ell_p}(\bx, \rmL))|\leq p \cdot \eps.
\end{equation}
Consequently, we reduce \eqref{eq:noise} and \eqref{eq:noise_rho}  (and thus Theorem~\ref{thm:noisy_hlm}) when $K=1$ and
$p>1$ to the following equations:
\begin{equation}
\label{eq:noise3}
E_{\mu}(e_{\ell_p}(\bx,\rmL))
>
E_{\mu}(e_{\ell_p}(\bx,
\rmL^*_1))+\rho_1+2p\eps,\,\,\,\text{for any $\rmL\in\GDd\setminus
\ballG(\rmL^*_1,f)$}
\end{equation}
and
\begin{equation}
\label{eq:noise_rho3} \rho_1 = 2 \cdot p \cdot \eps.
\end{equation}

\subsubsection{Proof of \eqref{eq:noise1} and \eqref{eq:noise3} and Conclusion of Theorem~\ref{thm:noisy_hlm} }
We arbitrarily fix $\rmL\in\GDd\setminus \ballG(\rmL^*_1,f)$. We assume
first that $0<p \leq 1$ and apply Lemma~\ref{lemma:mean} to obtain
that
\begin{align*}
&E_{\mu-(\alpha_1-\sum_{i=2}^{K}\alpha_i){\mu}_{1}}e_{\ell_p}(\bx,
\rmL)-E_{\mu-(\alpha_1-\sum_{i=2}^{K}\alpha_i){\mu}_{1}}e_{\ell_p}(\bx,
\rmL^*_1)\\
=&\sum_{i=2}^{K}\alpha_i\left(E_{{\mu}_{1}+\mu_i}e_{\ell_p}(\bx,
\rmL)-E_{{\mu}_{1}+\mu_i}e_{\ell_p}(\bx,
\rmL^*_1)\right) \geq 0.
\end{align*}
Consequently, we prove~\eqref{eq:noise1} with $\rho_1:= 2 \eps^p$ by Lemma~\ref{lemma:ell_dist_est}
as follows:
\begin{align}
\label{eq:super_bowl} &E_{\mu}(e_{\ell_p}(\bx,
\rmL))-E_{\mu}(e_{\ell_p}(\bx, \rmL^*_1))\geq
\left(\alpha_1-\sum_{i=2}^{K}\alpha_i\right)E_{{\mu}_{1}}(e_{\ell_p}(\bx,
\rmL))\\& \geq \frac{ \left(\alpha_1-\sum_{i=2}^{K}\alpha_i\right)\cdot f^p\cdot 2^{3p/2}\cdot 0.88}{  (d+p)^{p/2}\cdot \pi^{(2p+1)/2} } =4\eps^p,\nonumber
\end{align}
where the second inequality applies Lemma~\ref{lemma:ell_dist_est} and the last equality uses the fact that the term $\alpha_0 + 2 \cdot \alpha_1 - 1$ in the definition of $f$ equals $(\alpha_1-\sum_{i=2}^{K}\alpha_i)$.
Equation~\eqref{eq:definition_of_f} is obtained by solving for $f$ in the last equality of~\eqref{eq:super_bowl}.


Equation~\eqref{eq:noise3} (with $p>1$) follows from the same argument
of~\eqref{eq:super_bowl}, where $\eps^p$ is now replaced by $p\eps$.
Equation~\eqref{eq:definition_of_f2} is deduced in a similar way to~\eqref{eq:definition_of_f}, while using~\eqref{eq:noise_rho3} instead of~\eqref{eq:noise_rho}.

\subsection{Proof of Theorem~\ref{thm:phase_hlm}: Symmetry Arguments}
\label{sec:proofs_phase}

\subsubsection{Structure of the Proof}
We proceed with several reductions of the statement of the theorem. The first reduction (see \S\ref{sec:reduction1}) practically states that it is enough to prove w.p.~1 (under the measure $\gamma_{D,d}^K$) that $\rmL^*_1$ is not a global $\ell_p$ subspace in expectation, or equivalently,
$\rmL^*_1 \neq \argmin_{\rmL\in\GDd}E_{\mu}(e_{\ell_p}(\bx,\rmL))$.
In order to be able to prove this, we condition the probability measure on other events and thus ``reduce randomness''.
In the second reduction (see \S\ref{sec:reduction2}) we condition on $\rmL_1^*$, $\rmL_3^*$, $\rmL_4^*$, $\cdots$, $\rmL_K^*$
and in the third reduction (see \S\ref{sec:reduction3}) we condition on the principal vectors and principal angles of $\rmL_2^*$. We then
prove the final reduced statement in \S\ref{sec:reduction_conclusion}. At last, \S\ref{sec:delta_kappa} estimates the sizes of
$\delta_0$ and $\kappa_0$ and \S\ref{sec:no_asymptotic} uses the results of this section to show that exact asymptotic recovery is impossible in our setting with $K>1$ and any noise level $\eps>0$.

\subsubsection{First Reduction of Theorem~\ref{thm:phase_hlm}}
\label{sec:reduction1}
Theorem~\ref{thm:phase_hlm} states that the global $\ell_p$ subspace
is not in $\ballG(\rmL^*_1,\kappa_0)$ w.o.p.~for almost every
$\{\rmL^*_i\}_{i=1}^{K}\in\GDd^K$. We claim that it reduces to (or equivalently, implied by) the
following statement:
\begin{equation}
\label{eq:noise1p>1}
\gamma_{D,d}^{K}\left(\{\rmL^*_i\}_{i=1}^{K}\subset\GDd:
\rmL^*_1=\argmin_{\rmL\in\GDd}E_{\mu}(e_{\ell_p}(\bx,
\rmL))\right)=0.
\end{equation}
Indeed, if \eqref{eq:noise1p>1} is satisfied, then for
$\rmL_0=\argmin_{\rmL\in\GDd}E_{\mu}(e_{\ell_p}(\bx, \rmL))$ and any $K$
$d$-subspaces $\{\rmL^*_i\}_{i=1}^{K}$ in a subset of $\GDd^K$ with
nonzero $\gamma_{D,d}^K$ measure, the constant
$$\zeta_1:=E_{\mu}(e_{\ell_p}(\bx,
\rmL^*_1))-E_{\mu}(e_{\ell_p}(\bx, \rmL_0))$$
is positive.

For any $\rmL^*\in \ballG(\rmL^*_1,\kappa_0)$ and $\bx \in \supp(\mu) \subseteq \ball(\b0,1)$
\[
\dist(\bx,\rmL^*)^p-\dist(\bx,\rmL^*_1)^p
\leq 1^p- (1-\dist_G(\rmL^*,\rmL^*_1))^p \leq p \cdot \dist_G(\rmL^*,\rmL^*_1)
\]
and therefore
\begin{equation}\label{eq:p>1_error_control}
E_{\mu}(e_{\ell_p}(\bx,
\rmL^*))>E_{\mu}(e_{\ell_p}(\bx,
\rmL^*_1))-\kappa_0 \cdot p.
\end{equation}
Letting $\delta_0=\kappa_0=\zeta_1/4p\eps$, we obtain
from~\eqref{eq:bound_p_eps} (using the fact that $\eps < \delta_0$) and~\eqref{eq:p>1_error_control} that
\begin{align*}
&E_{{\mu}_{\eps}}(e_{\ell_p}(\bx,
\rmL^*))-E_{{\mu}_{\eps}}(e_{\ell_p}(\bx,
\rmL_0))>E_{\mu}(e_{\ell_p}(\bx,
\rmL^*))-E_{\mu}(e_{\ell_p}(\bx, \rmL_0))-2 \delta_0
p\\&> E_{\mu}(e_{\ell_p}(\bx,
\rmL^*_1))-E_{\mu}(e_{\ell_p}(\bx, \rmL_0))-2 \delta_0
p-\kappa_0 p=\frac{\zeta_1}{4}.
\end{align*}
Therefore, by Hoeffding's inequality:
\begin{equation}
\label{eq:in_israel}
e_{\ell_p}(\rmX, \rmL^*)-e_{\ell_p}(\rmX,
\rmL_0)>\frac{\zeta_1N}{8}\,\,\,\text{w.p. $1-\exp(-\frac{N\zeta_1^2}{32})$}.
\end{equation}

At last, we prove w.o.p.~that
\begin{equation}
\label{eq:before_sab}
e_{\ell_p}(\rmX, \rmL^*)-e_{\ell_p}(\rmX,
\rmL_0)>0\,\,\,\text{ for all $\rmL^*\in \ballG(\rmL^*_1,\kappa_0)$.}
\end{equation}
To do this, we cover $\ballG(\rmL^*_1,\kappa_0)$ with small balls of radius
$\zeta_1/16$  so that $e_{\ell_p}(\rmX,
\rmL)>e_{\ell_p}(\rmX,
\rmL_0)$ for all $\rmL$ in each such ball w.o.p. Therefore,
$e_{\ell_p}(\rmX, \rmL)>e_{\ell_p}(\rmX, \rmL_0)$ for  all $\rmL\in
\ballG(\rmL^*_1,\kappa_0)$ w.o.p. Equivalently, $\ballG(\rmL^*_1,\kappa_0)$
will not contain the global minimum of
$e_{\ell_p}(\rmX, \rmL)$ w.o.p. This implies Theorem~\ref{thm:phase_hlm}.

We note that the number of covering balls can be the
$(\zeta_1/16)$-covering number of $\GDd$, which is $(C_2\sqrt{d})^{D(D-d)}$
$/(\zeta_1/16)^{D(D-d)}$ (see Section~\ref{sec:global0}). The combination of this observation with the probabilistic estimate
in \eqref{eq:in_israel} implies the following expression for
the probability of~\eqref{eq:before_sab} (which is the unspecified
failure probability of~\eqref{thm:phase_hlm}):
\begin{equation}
\label{eq:failure_prob}
1-\frac{(C_2\sqrt{d})^{D(D-d)}}{(\zeta_1/16)^{D(D-d)}}\exp\Big(-\frac{N\zeta_1^2}{32}\Big),\end{equation}
where $\zeta_1$ is later estimated in \eqref{eq:gamma1}.

\subsubsection{Second Reduction of Theorem~\ref{thm:phase_hlm}}
\label{sec:reduction2}
We define the operator
\begin{equation}\label{eq:def_B_p1}\mathbf{D}_{\rmL,\bx,p}=P_{\rmL}(\bx)P^\perp_{\rmL}
(\bx)^{T} \dist(\bx,\rmL)^{(p-2)}\end{equation}
and the function
\[
h(\rmL^*_1,\rmL^*_i)=E_{\mu_i}(\mathbf{D}_{\rmL^*_1,\bx,p}), \ \ 0\leq i \neq 1 \leq K.
\]

In view of Proposition~\ref{prop:comb_cond3}, \eqref{eq:noise1p>1} follows from the condition:
\begin{align}
\label{eq:reduce_noise1p>1_0}
\gamma_{D,d}^{K}\left(\{\rmL^*_i\}_{i=1}^{K}\subset\GDd:
E_{\mu}\left( \mathbf{D}_{\rmL^*_1,\bx,p}\right)=
0\right) =0,
\end{align}
%
which we rewrite as follows:
\begin{align}
\nonumber &\gamma_{D,d}^{K}\left(\{\rmL^*_i\}_{i=1}^{K}\subset\GDd:
E_{\mu}\left( \mathbf{D}_{\rmL^*_1,\bx,p}\right)= 0
\right)  \\=& \gamma_{D,d}^{K}\left(\{\rmL^*_i\}_{i=1}^{K}\subset\GDd:
E_{\sum\limits_{\latop{i=0}{i \neq 1}}^{K}\alpha_i\mu_i}\left(
\mathbf{D}_{\rmL^*_1,\bx,p}\right)= 0\right) \nonumber\\=&
\gamma_{D,d}^{K}\left(\{\rmL^*_i\}_{i=1}^{K}\subset\GDd:
\sum\limits_{\latop{i=0}{i \neq 1}}^{K}\alpha_i \,h(\rmL^*_1,\rmL^*_i)=0\right)=
0.\label{eq:reduce_noise1p>1}
\end{align}
Since $\{\rmL^*_i\}_{i=1}^{K}$ are identically and independently distributed according
to $\gamma_{D,d}$, Fubini's Theorem implies
that~\eqref{eq:reduce_noise1p>1} follows from the equation:
\begin{equation}\gamma_{D,d}\left(\rmL^*_2\in\GDd: h(\rmL^*_1,\rmL^*_2)=
\bH(\rmL^*_1,\rmL^*_3,\cdots,\rmL^*_K)\right)=0,\label{eq:prob_equal_0}
\end{equation}%
where
\begin{equation}
\label{eq:def_bH}
\bH(\rmL^*_1,\rmL^*_3,\cdots,\rmL^*_K)=-\sum\limits_{\latop{i=0}{i \neq 1,2}}^{K}\alpha_i
\,h(\rmL^*_1,\rmL^*_i)/\alpha_2.
\end{equation}

\subsubsection{Third Reduction of Theorem~\ref{thm:phase_hlm}}
\label{sec:reduction3}
We denote the principal angles
between $\rmL^*_2$ and $\rmL^*_1$ by $\{\theta_j\}_{j=1}^{d}$, the
principal vectors of $\rmL^*_2$ and $\rmL^*_1$ by
$\{\hat{\bv}_j\}_{j=1}^{d}$ and $\{\bv_j\}_{j=1}^{d}$ respectively
and the complementary orthogonal system for $\rmL^*_2$ w.r.t. $\rmL^*_1$
by $\{\bu_j\}_{j=1}^{d}$. Note that $h(\rmL^*_1,\rmL^*_2)$, as a function of $\bx$, maps $\Sp(\{\bu_i\}_{i=1}^{d})$ to
$\Sp(\{\bv_i\}_{i=1}^{d})$. Now,  transforming $\bx\in\rmL^*_2\cap
\ball(\mathbf{0},1)$ to $\{a_i\}_{i=1}^{d}$ in a $d$-dimensional unit
ball by $\bx=\sum_{i=1}^{d}a_i\hat{\bv}_i$, we have that for any
$1\leq i_1,i_2\leq d$:
\begin{align*}
&\bv_{i_1}^T h(\rmL^*_1,\rmL^*_2) \bu_{i_2}=E_{\mu_2}(\bv_{i_1}^T
\hat{P}_{\rmL^*_1}(\bx) \hat{P}_{\rmL^*_1}^\perp(\bx)^T\bu_{i_2}\dist(\bx,\rmL^*_1)^{p-2})\\
=&\int_{\sum_{i=1}^d{a_i}^2\leq
1}\cos(\theta_{i_1})a_{i_1}\sin(\theta_{i_2})a_{i_2}\left(\sum_{i=1}^{d}a_i^2\sin^2\theta_i\right)^{\frac{p-2}{2}}\di \mu_2.
\end{align*}

When $i_1\neq
i_2$, the function
$$\cos(\theta_{i_1})a_{i_1}\sin(\theta_{i_2})a_{i_2}\left(\sum_{i=1}^{d}a_i^2\sin^2\theta_i\right)^{\frac{p-2}{2}}$$
is odd w.r.t.~$a_{i_1}$ and consequently \begin{align*} \bv_{i_1}^T
h(\rmL^*_1,\rmL^*_2) \bu_{i_2}
=\int_{\sum_{i=1}^d{a_i}^2\leq
1}\cos(\theta_{i_1})a_{i_1}\sin(\theta_{i_2})a_{i_2}\left(\sum_{i=1}^{d}a_i^2\sin^2\theta_i\right)^{\frac{p-2}{2}} \di \mu_2 =0.
\end{align*} Therefore, when we form $\bV$ and $\bU$ as in
\eqref{eq:outlier1}, the $d\times d$ matrix
$\bV h(\rmL^*_1,\rmL^*_2)\bU^T$
is diagonal with the elements
$$\int_{\sum_{i=1}^d{a_i}^2\leq
1}\cos(\theta_{j})\sin(\theta_{j})a_{j}^2\left(\sum_{i=1}^{d}a_i^2\sin^2\theta_i\right)^{\frac{p-2}{2}} \di \mu_2,
\ \ \ j=1,\cdots,d.$$
Notice that $\bV h(\rmL^*_1,\rmL^*_2)=h(\rmL^*_1,\rmL^*_2)=h(\rmL^*_1,\rmL^*_2)\bU^T$ and that
$h(\rmL^*_1,\rmL^*_2)$ has the following singular values, where $j=1,\cdots,d$:
\[
\lambda_j(h(\rmL^*_1,\rmL^*_2))=\int_{\sum_{i=1}^d{a_i}^2\leq
1}\cos(\theta_{j})\sin(\theta_{j})a_{j}^2\left(\sum_{i=1}^{d}a_i2\sin2\theta_i\right)^{\frac{p-2}{2}}\di \mu_2.
\]
We arbitrarily fix
$\rmL^*_1$, $\rmL^*_3$, $\rmL^*_4$, $\cdots$, $\rmL^*_K$ and denote the
singular values of $\bH$ (which is defined in~\eqref{eq:def_bH})
by $\{\sigma_i\}_{i=1}^{D}$ and observe that
\eqref{eq:prob_equal_0} is implied by the following equation:
\begin{equation}\gamma_{D,d}\left( \rmL^*_2\in\GDd:
\lambda_1(h(\rmL^*_1,\rmL^*_2))\in \{\sigma_i\}_{i=1}^{D}\right)=0,
\end{equation}which we express as:
\begin{align} \label{eq:prob_equal_0_1} &\gamma_{D,d}\left(\int_{\sum_{i=1}^d{a_1}^2\leq
1}\cos(\theta_1)\sin(\theta_1)
a_1^2\left(\sum_{i=1}^{d}a_i^2\sin^2\theta_i\right)^{\frac{p-2}{2}}
\di \mu_2
\in\{\sigma_i\}_{i=1}^{D}\right)\\\nonumber &=0.\end{align}

\subsubsection{Proof of~\eqref{eq:prob_equal_0_1} and Conclusion of Theorem~\ref{thm:phase_hlm}}
\label{sec:reduction_conclusion}
We first conclude \eqref{eq:prob_equal_0_1} when $p=2$. In this case
\begin{multline}\lambda_1(h(\rmL^*_1,\rmL^*_2)) \equiv \int_{\sum_{i=1}^d{a_1}^2\leq 1}\cos(\theta_1)\sin(\theta_1)
a_1^2\left(\sum_{i=1}^{d}a_i^2\sin^2\theta_i\right)^{\frac{p-2}{2}} \di \mu_2\\
\equiv\int_{\sum_{i=1}^d{a_1}^2\leq
1}\cos(\theta_1)\sin(\theta_1) a_1^2 \di \mu_2 \end{multline}
is a monotone
function of $\theta_1$ on $[0,\pi/4]$ as well as $[\pi/4,\pi/2]$.
That is, the requirement that
$\lambda_1(h(\rmL^*_1,\rmL^*_2))\in\{\sigma_i\}_{i=1}^{D}$ can occur
only at discrete values of $\theta_1$ (at most $2D$) and consequently has
$\gamma_{D,d}$ measure 0, that is, \eqref{eq:prob_equal_0_1} (and
consequently \eqref{eq:noise1p>1}) is verified in this case.

If $p\neq 2$ and $\{\theta_i\}_{i=1}^{d-1}$ are fixed, then
\begin{equation}
\label{eq:monotone}
\int_{\sum_{i=1}^d{a_1}^2\leq
1}\cos(\theta_1)\sin(\theta_1)
a_1^2\left(\sum_{i=1}^{d}a_i^2\sin^2\theta_i\right)^{\frac{p-2}{2}} \di \mu_2
\end{equation}
is a monotone function of $\theta_d$. Following a similar
argument, we obtain that
\begin{equation}\gamma_{D,d}\left( \lambda_1(h(\rmL^*_1,\rmL^*_2))\in
\{\sigma_i\}_{i=1}^{D}|\{\theta_i\}_{i=1}^{d-1}\right)=0.\label{eq:fubini_fix_theta}
\end{equation}%
Combining~\eqref{eq:fubini_fix_theta} with Fubini's Theorem, we
conclude~\eqref{eq:prob_equal_0_1}.

\subsubsection{Remark on the Size of \boldsymbol{$\delta_0$} and \boldsymbol{$\kappa_0$}}
\label{sec:delta_kappa}

The above constants $\delta_0$ and $\kappa_0$ depend on other
parameters of the underlying spherically uniform HLM model in particular the
underlying subspaces $\{\rmL^*_i\}_{i=1}^{K}$. We recall that $\kappa_0=\delta_0=\zeta_1/4p$, where $\zeta_1=E_{\mu}(e_{\ell_p}(\bx,
\rmL_1^*))-\min_{\rmL\in\GDd}E_{\mu}(e_{\ell_p}(\bx,
\rmL))$. Therefore, in order to bound $\kappa_0$ and $\delta_0$ from below, we bound $\zeta_1$ from below as follows:
\begin{equation}\label{eq:gamma1}
\zeta_1 \geq \begin{cases}
\frac{p}{2}\| E_{\mu}(\mathbf{D}_{\rmL^*_1,\bx,p})\|_F^2,
 &\text{if $p\geq 2$;}\\
(p-1)p^\frac{1}{p-1} 2^\frac{p-4}{p-1} \| E_{\mu}(\mathbf{D}_{\rmL^*_1,\bx,p})\|_F^\frac{p}{p-1}
,&\text{if $1< p< 2$}.
\end{cases}\end{equation}
We include the proof of~\eqref{eq:gamma1} in \S\ref{sec:gamma1}.
It also leads to a lower bound for the constants $\delta_0$ and $\kappa_0$ of~\cite{lp_recovery_part2_11}, which is better than the one mentioned
there (\S4.5.5).

We derive~\eqref{eq:gamma1_example} from~\eqref{eq:gamma1} as follows.
We recall that~\eqref{eq:gamma1_example} applies to the case where $K=2$, $\alpha_0=0$,  $\dim(\rmL^*_1)=\dim(\rmL^*_2)=1$, $D=2$ and where $\mu_1$ and $\mu_2$ are uniform distributions on line segments centered on the origin and of length $2$ within $\rmL^*_1$ and $\rmL^*_2$. If $\theta$ is the angle between $\rmL^*_1$ and $\rmL^*_2$, then
\begin{equation}
\label{eq:gamma4_bound_example}
\|E_{\mu}(\mathbf{D}_{\rmL^*_1,\bx,p})\|=\alpha_2\cos(\theta)\sin(\theta)^{p-1}/(p+1).
\end{equation}
The lower bound for both $\kappa_0$ and $\delta_0$ in~\eqref{eq:gamma1_example} thus follows from~\eqref{eq:gamma1}, \eqref{eq:gamma4_bound_example} and the fact that $\kappa_0=\delta_0=\zeta_1/4p$.

\subsubsection{Implication of Proof: A Counterexample for Exact Asymptotic Recovery}
\label{sec:no_asymptotic}

Theorem~\ref{thm:noisy_hlm} established near recovery of $\rmL_1^*$ for a spherically uniform HLM measure $\mu_{\eps}$ when $\eps>0$ and $0<p \leq 1$. It is sometimes more desirable to have exact asymptotic $\ell_p$ recovery of $\rmL_1^*$. It means that
if $\rmX=\{\bx_1,\bx_2,\cdots,\bx_N\}$ is an i.i.d.~sample from $\mu_{\eps}$ and $\rmL_{(N)}$ is the minimizer of $e_{\ell_p}(\rmX,\rmL)$, then $\rmL_{(N)}$ converges to $\rmL_1^*$ w.p.~1 as $N$ approaches infinity.
However, this is generally not true for any $p>0$ when $K>1$ and $\eps>0$. Indeed, we provide here a simple counterexample, whose verification follows the proof of Theorem~\ref{thm:phase_hlm}.

We assume a measure
$\tilde{\mu}=\alpha_1\tilde{\mu}_1+\sum_{i=2}^K\alpha_i\mu_i$, where
$\{\mu_i\}_{i=2}^K$ are the uniform measures on $S^{D-1}\cap \rmL_i^*$
and $\tilde{\mu}_1$ is the uniform measure on the strip $\{\bx\in
S^{D-1}: \dist(\bx,\rmL_1^*)\leq \eps\}$. The symmetry of this strip
w.r.t.~$\rmL_1^*$ implies that
\begin{equation}
\int P_{\rmL^*_1}(\bx)P^{\perp}_{\rmL^*_1}(\bx)^T\dist(\bx,\rmL^*_1)^{p-2}
\di \tilde{\mu}_1(\bx) = 0.
\label{eq:counterexample1}
\end{equation}

Besides, it  follows from Proposition \ref{prop:comb_cond3} that a
necessary condition for $\rmL_1^*$ to be a local $\ell_p$ subspace in
expectation is
\begin{equation}\label{eq:counterexample}\int
P_{\rmL^*_1}(\bx)P^{\perp}_{\rmL^*_1}(\bx)^T\dist(\bx,\rmL^*_1)^{p-2}\di
\mu(x)=
0.\end{equation}

Combining \eqref{eq:counterexample1} and \eqref{eq:counterexample}, we conclude that
\begin{equation}
\sum_{i=0,i=2}^K \int
P_{\rmL^*_1}(\bx)P^{\perp}_{\rmL^*_1}(\bx)^T\dist(\bx,\rmL^*_1)^{p-2}\di
\mu_i(x)=
0.
\label{eq:counterexample2}
\end{equation}

However, the proof of~\eqref{eq:reduce_noise1p>1} implies that the
measure $\gamma_{D,d}^K$ of \eqref{eq:counterexample2}
w.r.t.~$\{\rmL^*_i\}_{i=1}^{K}$ is zero.
That is, a.e.~$\rmL_1^*$ (w.r.t.~$\gamma_{D,d}^K=1$) is not the global
$\ell_p$ subspace in expectation.
Consequently, a.e.~$\rmL_1^*$ is not the asymptotic global $\ell_p$
subspace (since exact asymptotic recovery is stronger than recovery in
expectation).

\section{Discussion}
\label{sec:conclusion}
We studied the effectiveness of $\ell_p$
minimization for recovering and nearly recovering the most significant subspace within outliers
w.o.p. Our setting assumed i.i.d.~sampling from a spherically uniform
HLM measure (and sometimes weakly spherically uniform HLM measure) with noise level $\eps \geq 0$.
A restricted setting is necessary and indeed we described some
typical cases where the global $\ell_p$ subspace is different than the most
significant subspace for all $0 < p < \infty$. In our particular study of significantly large fraction of outliers, we need the rather
strong restriction of spherically symmetric outliers, which is not necessary when limiting this fraction (see e.g., \cite{lp_recovery_part2_11}).

Our analysis provided some guarantees for the robustness to spherically uniform outliers
(or spherically symmetric outliers) of the single subspace recovery advocated in~\cite{Ding+06}. The recovery established here is for the theoretical minimizer of the energy and not for any algorithmic output. Both~\cite{robust_pca_ZL} and~\cite{LMTZ2012} followed some basic ideas of this paper in their analysis of a convex relaxation of~\eqref{eq:def_error_1sub} when $p=1$, while incorporating many other ideas. However, the theoretical guarantees of the latter works require a bound on the fraction of outliers and it is unclear if their algorithms can always recover the most significant subspace in our setting when $K>1$.

We further discuss possible and impossible extensions of this theory, some other implications and open problems.

\subsection{Beyond Spherically Uniform Distributions}

We can easily replace spherically uniform distributions with sub-Gaussian spherically symmetric distributions. For this purpose, we may apply the Hoeffding-type inequality for sub-Gaussian measures of Proposition 5.10 in \cite{vershynin_book}. Alternatively, if the data is projected onto the unit sphere than spherically symmetric distributions (and even some more general distributions) are mapped into spherically uniform distributions.

We may even relax the spherical symmetry of
$\{\mu_i\}_{i=1}^K$ within $\{\rmL_i^*\}_{i=1}^K$ and require instead approximate spherical symmetry within $\{\rmL_i^*\}_{i=1}^K$. That is, we require for $i=2, \cdots, K$ that there exist $\{\tilde{\mu}_i\}_{i=1}^K$
spherically symmetric distributions within $\rmL_i^*$ such that the derivatives $f_i:= \di \mu_i/ {\di \tilde{\mu}_i}$, $i=1, \ldots, K$ are bounded away from 0 and $\infty$. In this case, \eqref{eq:cond_alpha}
is replaced with
\begin{equation}\label{eq:cond_alpha_const}  \alpha_1 > \sum_{i=2}^K \frac{\sup(f_i)}{\inf(f_1)} \, \alpha_i. \end{equation}

On the other hand, a symmetry-type property of $\mu_0$ is crucial for the proof of
Theorems~\ref{thm:hlm} and~\ref{thm:noisy_hlm}, unless one may tolerate a restricted fraction of outliers~\cite{lp_recovery_part2_11}.

In Theorem~\ref{thm:dist_1subs_loc1} it is enough to assume that $\mu_0$ is symmetric with respect to $\rmL^*_1$.
It is even possible to assume a slightly weaker assumption: $E_{\mu_0}(\mathbf{D}_{\rmL^*_1,\bx,p})=0$, where $\mathbf{D}_{\rmL^*_1,\bx,p}$ is defined in~\eqref{eq:def_B_p1}.

\subsection{Affine Subspaces}
We restrict the theory of this paper to linear subspaces, since affine subspaces do not fit within the framework of spherically uniform (or spherically symmetric) measures.
The common strategy of using homogenous coordinates, which transform $d$-dimensional affine
subspaces in $\reals^D$ to $(d+1)$-dimensional linear subspaces in $\reals^{D+1}$, is not useful to us since it distorts the structure
of both noise and outliers.
On the other hand, the theory of~\cite{lp_recovery_part2_11}
can be generalized to affine subspaces (see \S5.6 of~\cite{lp_recovery_part2_11}).

\subsection{$\boldsymbol{p=1}$ Versus $\boldsymbol{0<p<1}$}
\label{sec:p1_min}
Our main theorems do not distinguish between $p=1$ and $0<p<1$.
However, Proposition~\ref{prop:comb_cond2} shows that many subspaces can be local $\ell_p$ subspaces when $p<1$ (in particular, $d$-subspaces spanned by subsets of outliers).
Such wealth of local minima clearly does not occur when $p = 1$.
An open problem is to estimate the number and
depth of local minima when $p = 1$ for spherically uniform HLM measures.

\subsection{The Non-convexity of Our Strategy}
\label{sec:non_convex}
Our setting is non-convex and we are not aware of efficient and theoretically guaranteed strategies to approximate the global minimizer.
Both Ding et al.~\cite{Ding+06} and Zhang et al.~\cite{MKF_workshop09} suggested heuristic methods to approximate a minimizer to this problem when $p=1$, but they did not guarantee them. It will be interesting to develop even partial theoretical guarantees, possibly for another strategy. It will also be interesting to know whether there is any practical advantage of trying to minimize~\eqref{eq:def_error_1sub} with $p=1$ instead of using a convex relaxation of this minimization.

In \S\ref{subsec:background} we discussed the result of
Hardt and Moitra~\cite{moitra_pca2012}, which implies that if the small set expansion problem has no efficient algorithm (which is unknown), then under some circumstances
(different than the ones here) subspace recovery with sufficiently high percentage of outliers cannot be done by an efficient algorithm.
It is interesting to know if it is true that any procedure that can exactly recover the underlying subspace in our setting with
arbitrarily large percentage of outliers must be inefficient. If it is true, we are then curious about the upper bound on the fraction of outliers
in our setting. After all, \cite{robust_pca_ZL} and~\cite{LMTZ2012} indicated higher fraction of outliers than Hardt and Moitra~\cite{moitra_pca2012} for our setting with $K=1$.

\appendix
\section{Supplementary Details}

\subsection{The auxiliary function $\boldsymbol{\psi}$ }
\label{sec:invertible}

We define the function $\psi$ and bound it from above. We later use this function and its upper bound to estimate $\gamma_3$ (in \S\ref{sec:invertible_gamma}).

We assume that $\rmL$ is a $d$-subspace of $\reals^D$, where $0 \leq d \leq D$, and that $\nu$ is the uniform measure on $\rmL \cap \mathbb{S}^{D-1}$. We define
\begin{equation}
\label{eq:def_psi_mu1}
\psi_{\nu}(\tt)  =  \nu(\bx\in\rmL: |\bx^T\bv|<\tt),
\end{equation}
where $\bv$ is an arbitrarily fixed vector in $\rmL \cap \mathbb{S}^{D-1}$ (since $\nu$ is uniform
on $\mathbb{S}^{D-1}\cap\rmL$, $\psi_{\nu}$ is independent of $\bv$).
We establish the following upper bound on $\psi_{\nu}$:
\begin{equation}
\psi_{\nu}(\tt)<\sqrt{\frac{\pi\,{d}}{2}}\,\tt.\label{eq:upper_bound}
\end{equation}

Let us denote the surface area measure on $\mathbb{S}^{d-1}$ by $\area_{d-1}$. Using this notation,
we conclude~\eqref{eq:upper_bound} as follows:
\begin{align*}&\psi_{\mu_1}(\tt)=\area_{d-1}\left\{\bx\in\mathbb{S}^{d-1}: |x_1|<\tt \right\}
\Big/\area_{d-1}\left\{\mathbb{S}^{d-1}\right\}\\
&=\frac{\int_{\cos^{-1}(t)}^{\frac{\pi}{2}} \sin^{d-2}(\theta)\di\theta}{\int_{0}^{\frac{\pi}{2}} \sin^{d-2}(\theta)\di\theta}\!\!\leq\!\! \frac{\int_{\cos^{-1}(t)}^{\frac{\pi}{2}} 1 \di\theta}{\int_{0}^{\frac{\pi}{2}} \sin^{d-2}(\theta)\di\theta}
= \frac{\frac{\pi}{2}-\cos^{-1}(t)}{\int_{0}^{\frac{\pi}{2}} \sin^{d-2}(\theta)\di\theta}=\frac{\sin^{-1}(t)}{\frac{\sqrt{\pi}\Gamma(\frac{d-1}{2})}{2\Gamma(\frac{d}{2})}}
\leq \frac{t\sqrt{\pi d}}{\sqrt{2}}.
\end{align*}
We remark that the second equality follows from the well-known formula for the surface area of the spherical cap of ``half-angle'' $\beta$, $Cap(\beta) \subset \mathbb{S}^{D-1}$:
$\area_{d-1} (Cap(\beta))= C(d) \cdot \int_{0}^{\beta} \sin^{d-2}(\theta)\di\theta$ (we do not use the value of $C(d)$ since it cancels in both the numerator and denominator); the fourth (and last) equality follows from a basic trigonometric identity (for the numerator) and the following well-known integration formula (for the denominator): $\int_{0}^{\frac{\pi}{2}} \sin^{d-2}(\theta)\di\theta = B((d-1)/2,1/2)/2=
\sqrt{\pi} \Gamma((d-1)/2)/(2 \Gamma (d/2))$; and the last inequality is obtained by applying the inequality $\sin^{-1}(t)\leq \pi t/2$ for $0 \leq t \leq \pi/2$ (for the numerator) and the following immediate consequence of Gautschi-Kershaw's inequality~\cite{gautschi_kershaw06} for the gamma function:
$\Gamma(\frac{d}{2})/\Gamma(\frac{d-1}{2}) \leq \sqrt{d/2}$ (for the denominator), which is obtained by substituting $s=0.5$, $x=d/2-1$ in (1) of~\cite{gautschi_kershaw06} and using a looser upper bound.


\subsection{Asymptotic Dependence of $\boldsymbol{\gamma_3}$ on $\boldsymbol{D}$ and $\boldsymbol{d}$}
\label{sec:invertible_gamma}

We upper bound the constant $\gamma_3$, which is determined by
\eqref{eq:gamma3},
by applying the function $\psi$ and its upper bound in~\eqref{eq:upper_bound}.
We note that for any $\bv \in \mathbb{S}^{D-1}$ orthogonal to $L^*_1$:
\begin{equation}
\label{eq:gamma3_2}
\{ \bx \in \mathbb{S}^{D-1} : 0<\dist(\bx,L^*_1)<2\gamma_3 \} \subset
\{\bx \in \mathbb{S}^{D-1}:
0<|\bx^T \bv| < 2\gamma_3 \}.
\end{equation}
Therefore, we arbitrarily fix $\bv \in \mathbb{S}^{D-1}  \cap L^{*\perp}_1$ (we will adapt this choice throughout the construction) and estimate a constant $\gamma_3$ that
will satisfy the equation
\begin{equation}
\label{eq:gamma3_3}
(\mu -\alpha_1\mu_1)(\{\bx \in \mathbb{S}^{D-1}: |\bx^T \bv| < 2\gamma_3
\}) \leq \gamma_1/24.
\end{equation}
Indeed, it follows from~\eqref{eq:gamma3_2} and the fact that
$\dist(\bx,L^*_1)=0$ if and only if
$\bx \in \supp(\mu_1)$ that if $\gamma_3$
satisfies~\eqref{eq:gamma3_3} then it also
satisfies~\eqref{eq:gamma3}.

Since $\alpha_0+\sum_{i=2}^K\alpha_i<1$, we only need to find
$\gamma_3$ such that
\begin{equation}
\label{eq:gamma3_4}
\max_{i=0,2,3,\cdots,K}\mu_i(\{\bx \in \mathbb{S}^{D-1}: |\bx^T \bv| <
2\gamma_3 \}) \leq \gamma_1/24.
\end{equation}
Applying \eqref{eq:upper_bound} (with $\rmL= \reals^D$, where $\rmL$ is the subspace defining $\nu$), we obtain that
\begin{equation}
\label{eq:gamma3_5}
\mu_0(\{\bx \in \mathbb{S}^{D-1}: |\bx^T \bv| <
2\gamma_3 \})=\psi_{\mu_0}(2\gamma_3)< \gamma_3\sqrt{2\pi\,D}.
\end{equation}
Fixing $2 \leq i \leq K$ and applying again \eqref{eq:upper_bound}
(with $\rmL= \rmL^*_i$), we obtain that
\begin{align}
\label{eq:gamma3_6}&
\mu_i(\{\bx \in \mathbb{S}^{D-1}: |\bx^T \bv| =
2\gamma_3 \})\leq \mu_i(\{\bx \in \rmL^*_i: |\bx^T
(P_{\rmL^*_i}\bv)| < 2\gamma_3 \})\\=&\nonumber\mu_i(\{\bx \in \rmL^*_i: |\bx^T
(P_{\rmL^*_i}\bv)/\|P_{\rmL^*_i}\bv\|| < 2\gamma_3/\|P_{\rmL^*_i}\bv\|
\})\\\nonumber=&\psi_{\mu_i}(2\gamma_3/\|P_{\rmL^*_i}\bv\|)<
\gamma_3\sqrt{2\pi\,d}/\|P_{\rmL^*_i}\bv\|.
\end{align}
Since the subspaces $\{\rmL_i^*\}_{i=1}^K$ are distinct we may adapt $\bv$ such that $\|P_{\rmL^*_i}\bv\| \neq 0$ for all $2 \leq i \leq K$ (we discuss the optimal choice of $\bv$
below).
Combining~\eqref{eq:gamma3_5} and~\eqref{eq:gamma3_6} and
using the fact that
\eqref{eq:gamma3_4} implies \eqref{eq:gamma3_3} and thus \eqref{eq:gamma3}, we
conclude that
$\gamma_3=\gamma_1\min_{i=2}^K\|P_{\rmL^*_i}\bv\|/(24\sqrt{2\pi D})$
will satisfy \eqref{eq:gamma3_4}.

We can choose $\bv$ to maximize the
$\min_{i=2}^K\|P_{\rmL^*_i}\bv\|$ and therefore
\[\gamma_3=\gamma_1\max_{\bv\in\rmL^{*\perp}_1,\|\bv\|=1}\min_{i=2,\cdots,
K}\|P_{\rmL^*_i}\bv\|/(24\sqrt{2\pi D}).\] We remark that
$\max_{\bv\in\rmL^{*\perp}_1,\|\bv\|=1}\min_{i=2}^K\|P_{\rmL^*_i}\bv\|$
is similar to $\min_{i=2}^K\dG(\rmL_1^*,\rmL_i^*)$ since both of them
measure the difference between $\rmL_1^*$ and $\{\rmL_i^*\}_{i=2}^K$ and in particular,
their value is $0$ only when $\rmL_1^*=\rmL_i^*$ for some $i\geq 2$.

\subsection{Proof of Lemma~\ref{lemma:ell_dist_est}}\label{app:lemma_proof2}

We assume first that $p=2$.  We denote the principal angles between
$\rmL_1$ and $\hat{\rmL}_1$ by $\{\theta_i\}_{i=1}^d$ and the principle
vectors of  $\rmL_1$ and $\hat{\rmL}_1$ by $\{\bv_i\}_{i=1}^d$ and
$\{\hat{\bv}_i\}_{i=1}^d$ respectively. We express every point $\bx$ in $\rmL_1$ by $\bx=(x_1,x_2,\cdots,x_d)=(\bv_1^T\bx,\bv_2^T\bx,\cdots,\bv_d^T\bx)$. We note that
\begin{equation}\label{eq:ell_dist_est1}
\dist(\bx,\hat{\rmL}_1)^2=\sum_{i=1}^d x_i^2\sin^2\theta_i\geq \frac{4}{\pi^2}\sum_{i=1}^d x_i^2\theta_i^2.
\end{equation}
Combining~\eqref{eq:ell_dist_est1} with the observation that $E_{{\mu}_1}(x_i^2)=1/d$ for all $1\leq i\leq d$, we conclude Lemma~\ref{lemma:ell_dist_est} in this case as follows:
\begin{align}
\label{eq:proof_p_2}
\nonumber
&E_{{\mu}_1}\left(e_{\ell_2}(\bx,\hat{\rmL}_1)\right)
=E_{{\mu}_1}  \dist(\bx,\hat{\rmL}_1)^2\geq E_{{\mu}_1} \left(\frac{4}{\pi^2}\sum_{i=1}^d x_i^2\theta_i^2\right) \\
=&\frac{4}{\pi^2\,d}\sum_{i=1}^d \theta_i^2=\frac{ 4}{\pi^2\cdot d} \cdot \dG(\rmL_1,\hat{\rmL}_1)^2.
\end{align}

Next, we assume that $p>2$. Applying~\eqref{eq:proof_p_2} and Jensen's inequality with the convex function $\phi(x)=x^{p/2}$, we conclude Lemma~\ref{lemma:ell_dist_est} in this case as follows:
\[
E_{{\mu}_1}\left(e_{\ell_p}(\bx,\hat{\rmL}_1)\right)\geq
\left(E_{{\mu}_1}\left(e_{\ell_2}(\bx,\hat{\rmL}_1)\right)\right)^{\frac{p}{2}}
\geq \pi^{-p} \cdot 2^{p} \cdot d^{-\frac{p}{2}} \cdot \dG(\rmL_1,\hat{\rmL}_1)^p.
\]

Finally, we assume that $0<p<2$. Using the above parametrization
$\bx=(x_1,x_2,\cdots,x_d)$ for points in $\rmL_1^* \cap \mathbb{S}^{D-1}$, we view
the restriction of $\mu_1$ onto $\rmL_1^*$ (expressed in these coordinates) as the uniform measure onto $\mathbb{S}^{d-1}$.
It follows from~\eqref{eq:ell_dist_est1} that
\begin{equation}\label{eq:ell_dist_est2}
E_{{\mu}_1}\left(e_{\ell_p}(\bx,\hat{\rmL}_1)\right)\geq
E_{{\mu}_1}\left(\frac{4}{\pi^2}\sum_{i=1}^d x_i^2\theta_i^2\right)^{p/2}.
\end{equation}
The main argument of the proof, which we delay to \S\ref{sec:pf_min_of_angles}, is to verify (via Karamata's inequality) that when $\dG(\rmL_1,\hat{\rmL}_1)$ (equivalently, $\sum_{i=1}^d\theta_i^2$) is fixed, then
\begin{equation}
E_{\mu_1} \left( \sum_{i=1}^d x_i^2\theta_i^2 \right)^{p/2}
\geq
E_{{\mu_1}} x_1^p \cdot \left( \sum_{i=1}^d\theta_i^2 \right)^{\frac{p}{2}}
=
E_{{\mu_1}} x_1^p \cdot \dG(\rmL_1,\hat{\rmL}_1)^p.
\label{eq:min_of_angles}
\end{equation}
We estimate $E_{{\mu_1}} x_1^p$ as follows:
\begin{align}
\label{eq:apply_gautschi_again}
\nonumber
&E_{{\mu_1}} x_1^p = \frac{\int
(\sin\theta)^{d-2}(\cos\theta)^p\di\theta}{\int
(\sin\theta)^{d-2}\di\theta}\\=&\frac{B(\frac{d-1}{2},\frac{p+1}{2})}{B(\frac{d-1}{2},\frac{1}{2})}
=\frac{\Gamma(\frac{p+1}{2})\,\Gamma(\frac{d}{2})}{\Gamma(\frac{1}{2})\,\Gamma(\frac{d+p}{2})}
> \frac{0.88}{\sqrt{\pi}} \cdot \Big(\frac{d+p}{2}\Big)^{-p/2} .
\end{align}
The last inequality uses the following equalities and inequalities:
$\Gamma(1/2)=\sqrt{\pi}$; $\Gamma(\frac{p+1}{2}) \geq 0.88$, which follows from the well-known estimate: $\min_{x\geq 0}\Gamma(x) \approx 0.885603$ (see e.g., \cite{Edwards1935}); and the inequality $\Gamma(\frac{d+p}{2})/\Gamma(d/2)<(\frac{d+p}{2})^{p/2}$, which follows from Gautschi-Kershaw's inequality~\cite{gautschi_kershaw06} (indeed, apply (1) of~\cite{gautschi_kershaw06}
with $x=(d+p-2)/2$ and $s=1-p/2$, while using a looser upper bound, and then invert the inequality while taking the power of -1 of both its LHS and RHS).

Therefore, the case $0<p<2$ is concluded by combining~\eqref{eq:ell_dist_est2},
\eqref{eq:min_of_angles} (which is proved in the following subsection) and
\eqref{eq:apply_gautschi_again}.

\subsubsection{Proof of \eqref{eq:min_of_angles}}
\label{sec:pf_min_of_angles}
We will prove a more general statement, which requires the
following notation:
For $1 \leq j \leq d$ and $1 \leq i \leq d$
$$
\theta_{i,j} =
\begin{cases}
\sqrt{\sum_{i=1}^j \theta_{i,j}^2}, &\text{if $i=1$};\\
0, &\text{if $2 \leq i \leq j$};\\
\theta_i, &\text{if $j+1 \leq i \leq d$}.
\end{cases}
$$
The more general statement is
\begin{equation}
\label{eq:min_of_angles_gen}
E_{\mu_1}\left(\sum_{i=1}^d x_i^2\theta_{i,j}^2\right)^{p/2}
\geq E_{\mu_1}\left(\sum_{i=1}^d x_i^2\theta_{i,j+1}^2\right)^{p/2} \ \
\text{ for } 1 \leq j \leq d-1.
\end{equation}
Clearly, successive application of~\eqref{eq:min_of_angles_gen} implies \eqref{eq:min_of_angles}.

In order to prove~\eqref{eq:min_of_angles_gen}, we introduce additional notation, formulate two sequences with the majorization property and then apply Karamata's inequality.
For $1 \leq j \leq d-1$, let
$$
x_{i,j} =
\begin{cases}
x_i, &\text{if $i \neq 1, j+1$};\\
x_{j+1}, &\text{if $i = 1$};\\
x_{1},   &\text{if $i = j+1$}.
\end{cases}
$$
It follows from elementary algebraic manipulations that for any $1 \leq j \leq d$:
\begin{equation}
\label{eq:alg_manipulation_1}
\sum_{i=1}^d x_i^2\theta_{i,j+1}^2 + \sum_{i=1}^d {x}_{i,j}^2\theta_{i,j+1}^2= \sum_{i=1}^d x_i^2 {\theta_{i,j}}^2 + \sum_{i=1}^d {x}_{i,j}^2 {\theta_{i,j}}^2.
\end{equation}
One can also verify that
\begin{equation}
\label{eq:alg_manipulation_2}
\max\big(\sum_{i=1}^d x_i^2{\theta_{i,j+1}}^2,\sum_{i=1}^d {x}_{i,j}^2
\theta_{i,j+1}^2\big)
\geq
\max\big(\sum_{i=1}^d x_i^2{\theta_{i,j}}^2,\sum_{i=1}^d {x}_{i,j}^2 {\theta_{i,j}}^2\big).
\end{equation}
This is done by showing (again by simple algebra) that each one of the terms in the argument of the maximum function in the LHS of~\eqref{eq:alg_manipulation_2} is controlled by one of the terms in the RHS of~\eqref{eq:alg_manipulation_2}. Equations~\eqref{eq:alg_manipulation_1} and~\eqref{eq:alg_manipulation_2}
imply that
$(\sum_{i=1}^d x_i^2{\theta_{i,j+1}}^2,\sum_{i=1}^d {x}_{i,j}^2 \theta_{i,j+1}^2)$ majorizes $(\sum_{i=1}^d x_i^2{\theta_{i,j}}^2,\sum_{i=1}^d {x}_{i,j}^2 {\theta_{i,j}}^2)$. Combining this observation, the concavity of $f(x)=x^{p/2}$ and Karamata's inequality, we conclude
that
\begin{equation}
\label{eq:via_karamata}
\left(\sum_{i=1}^d x_i^2\theta_{i,j}^2\right)^{p/2}
+\left(\sum_{i=1}^d x_{i,j}^2\theta_{i,j}^2\right)^{p/2}
\geq \left(\sum_{i=1}^d
x_i^2\theta_{i,j+1}^2\right)^{p/2} +
\left(\sum_{i=1}^d
x_{i,j}^2\theta_{i,j+1}^2\right)^{p/2}.
\end{equation}
Integrating~\eqref{eq:via_karamata} over $\mu_1$ and using the invariance of $\mu_1$ to permutations
of $\bx$ (in particular, invariance to replacing $x_i$ with $x_{i,j}$ for all $1\leq i \leq d$), we
obtain~\eqref{eq:min_of_angles_gen} and consequently~\eqref{eq:min_of_angles}.

\subsection{Proof of Lemma~\ref{lemma:distance}}\label{app:lemma_distance_proof}
We denote the principal angles between the $d$-subspaces $\rmL_1$ and
$\rmL_2$ by
$\theta_1\geq\theta_2\geq\theta_3\geq\cdots\geq\theta_d$.
Arbitrarily choosing $\bQ_1$, $\bQ_2 \in \Or(D,d)$, representing
$\rmL_1$, $\rmL_2$ respectively, we note that
\begin{align*}
&|\dist(\bx,\rmL_1)-\dist(\bx,\rmL_2)|=|\,\|\bx-\bx \bQ_1\bQ_1^T\|-\|\bx-\bx \bQ_2\bQ_2^T\|\,|\\\leq &\|\bx-\bx \bQ_1\bQ_1^T-\bx+\bx \bQ_2\bQ_2^T\|
\leq\|\bx\|\, \normF{\bQ_1\bQ_1^T-\bQ_2\bQ_2^T}\\=&
\|\bx\|\,\sqrt{\sum_{i=1}^{d}\sin(\theta_i)^2}\leq\|\bx\|\,\sqrt{\sum_{i=1}^{d}\theta_i^2}
=\|\bx\|\,\dG(\rmL_1,\rmL_2).
\end{align*}

\subsection{Local $\boldsymbol{\ell_p}$ subspace for $\boldsymbol{0< p<1}$ and $\boldsymbol{K=1}$}\label{sec:prop:p_geq_1}
\begin{proposition}
\label{prop:p_geq_1} Assume that $D>d+1$, $\rmL^*_1\in \GDd$, $\mu_0$ is a
distribution on $\reals^D$ such that $\mu_0(\{\rmL\})\neq 0$ for any affine subspace $\rmL$, where $\rmL \subset \reals^D$,
$\mu_1$ a distribution on $\rmL_1^*$ and $\mu=\alpha_0 \mu_0+\alpha_1 \mu_1$, where $\alpha_0$, $\alpha_1$ are
nonnegative numbers summing to $1$. If $\rmX$ is a data set
sampled identically and independently from $\mu$ and $p>1$, then the probability
that $\rmL^*_1$ is a local $\ell_p$ subspace of $\rmX$ is 0.
\end{proposition}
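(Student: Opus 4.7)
The plan is to apply the necessary condition of Proposition~\ref{prop:comb_cond3} and show it holds with probability zero. Since $\mu_1$ is supported on $\rmL_1^*$ and the assumption on $\mu_0$ forces $\mu_0(\rmL_1^*)=0$ (as $\rmL_1^*$ is a proper affine subspace), with probability one the set of outliers $\rmX\setminus \rmL_1^*$ consists precisely of the points $\by_1,\ldots,\by_{N_0}$ drawn from $\mu_0$. Defining
\[
G(\by) := P_{\rmL_1^*}(\by)\,P^{\perp}_{\rmL_1^*}(\by)^{T}\,\|P^{\perp}_{\rmL_1^*}(\by)\|^{p-2},
\]
condition~\eqref{eq:necessary} becomes $\sum_{i=1}^{N_0} G(\by_i)=\b0$. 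The plan is to fix an index $i$, condition on the remaining samples $\{\by_j\}_{j\neq i}$, set $M_0 = -\sum_{j\neq i}G(\by_j)$, and apply Fubini; this reduces the problem to the main claim: for every matrix $M_0\in\reals^{d\times(D-d)}$, $\mu_0(\{\by\in\reals^D : G(\by)=M_0\})=0$.

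To establish the claim I would split into two cases. If $M_0=\b0$, then $G(\by)=\b0$ forces $P_{\rmL_1^*}(\by)=\b0$ or $P^{\perp}_{\rmL_1^*}(\by)=\b0$, so $G^{-1}(\b0)\subseteq \rmL_1^*\cup(\rmL_1^*)^{\perp}$, a union of two proper linear subspaces which by hypothesis has $\mu_0$-measure zero. If $M_0\neq\b0$, observe that $G(\by)$ is always a rank-at-most-one matrix, being a scaled outer product of $P_{\rmL_1^*}(\by)$ and $P^{\perp}_{\rmL_1^*}(\by)$; hence $G^{-1}(M_0)$ is empty unless $\mathrm{rank}(M_0)=1$. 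For a rank-one $M_0=\bu_0\bv_0^{T}$ with $\bu_0\in\rmL_1^*\setminus\{\b0\}$ and $\bv_0\in(\rmL_1^*)^{\perp}\setminus\{\b0\}$, the equation $G(\by)=M_0$ together with the uniqueness of rank-one decompositions up to scaling forces $P_{\rmL_1^*}(\by)$ to be parallel to $\bu_0$ and $P^{\perp}_{\rmL_1^*}(\by)$ parallel to $\bv_0$, confining $\by$ to the $2$-dimensional subspace $\Span(\bu_0,\bv_0)\subset\reals^D$.

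The hypothesis $D>d+1$ (together with $d\geq 1$) gives $D\geq 3$, so $\Span(\bu_0,\bv_0)$ is a proper subspace of $\reals^D$ and therefore has $\mu_0$-measure zero by the assumption on $\mu_0$. The main obstacle is precisely the rank-one case; this is where $D>d+1$ is indispensable, since when $D=d+1$ the $2$-dimensional span above could coincide with all of $\reals^D$ and $G^{-1}(M_0)$ would no longer be forced to sit inside a proper subspace, causing the probabilistic argument to break down.
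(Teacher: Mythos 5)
Your proof is correct and follows essentially the same route as the paper's: reduce, via the necessary condition of Proposition~\ref{prop:comb_cond3} together with conditioning on all but one outlier and Fubini, to showing that for any fixed matrix $M_0$ the single-point equation $P_{\rmL^*_1}(\by)P^{\perp}_{\rmL^*_1}(\by)^T\dist(\by,\rmL^*_1)^{p-2}=M_0$ has $\mu_0$-measure zero, and then handle the cases $M_0=\b0$, $\mathrm{rank}(M_0)\neq 1$ and $\mathrm{rank}(M_0)=1$ by confining the solution set to a proper subspace of measure zero. The only real difference is in the rank-one case: the paper pins down only the row direction and confines $\by$ to the $(d+1)$-dimensional subspace $\Sp(\rmL^*_1,\bv)$, which is proper precisely because $D>d+1$, whereas you pin down both the column and the row directions and confine $\by$ to the $2$-dimensional subspace $\Span(\bu_0,\bv_0)$, which is proper as soon as $D\geq 3$. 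Your containment is thus slightly sharper and would in fact cover $D=d+1$ whenever $d\geq 2$; consequently your closing claim that $D>d+1$ is indispensable for your argument is inaccurate --- the span $\Span(\bu_0,\bv_0)$ can exhaust $\reals^D$ only when $D=2$ (i.e.\ $d=1$), which is the only case your argument genuinely fails to exclude. (Two incidental points on which you and the paper's proof agree: the hypothesis on $\mu_0$ is read as assigning measure \emph{zero} to proper affine subspaces, the ``$\neq 0$'' in the statement being a typo, and both arguments implicitly assume at least one outlier is present in the sample.)
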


Let $\{\by_i\}_{i=1}^{N_0}$ denote the i.i.d.~outliers sampled from $\mu_0$.
We will prove that for any $\bW\in \reals^{d \times D-d}$:
\begin{equation}
\label{eq:bV}
\mu_0\left(\by_1 \in \reals^D :
P_{\rmL^*_1}(\by_1) P^{\perp}_{\rmL^*_1}(\by_1)^T\dist(\by_1,\rmL^*_1)^{p-2}=
\bW\right)=0.
\end{equation}
Proposition~\ref{prop:p_geq_1} follows by substituting
$\bW = -\sum_{i=2}^{N_0}P_{\rmL^*_1}(\by_i)P^{\perp}_{\rmL^*_1}(\by_i)^T\dist(\by_i,\rmL^*_1)^{p-2}$
in \eqref{eq:bV} and applying Proposition~\ref{prop:comb_cond3}.

We may assume that $\by_1 \nin \rmL^*_1 \cup {\rmL^*_1}^{\perp}$ since $\mu_0(\{\rmL_1^*\})=\mu_0(\{\rmL_1^{*\perp}\})=0$.
We note that for any $\by_1 \nin \rmL^*_1 \cup {\rmL^*_1}^{\perp}$ the rank of $P_{\rmL^*_1}(\by_1)P^{\perp}_{\rmL^*_1}(\by_1)^T$ is 1.
Therefore, \eqref{eq:bV} is obvious if $\mathrm{rank}(\bW) \neq 1$.
Furthermore, if $\ker(\bW)\not\supset\rmL_1^{*}$ then \eqref{eq:bV} is also obvious since the kernel of
$P_{\rmL^*_1}(\by_1)P^{\perp}_{\rmL^*_1}(\by_1)^T$ contains $\rmL_1^{*}$.

At last, we assume that $\mathrm{rank}(\bW)=1$ and
$\ker(\bW)\supset\rmL_1^{*}$ and denote $\bv=\ker(\bW)^\perp$. Applying the assumption that proper affine subspaces of $\reals^D$ have measure $\mu_0$ zero and the assumption $D>d+1$,
we obtain that $\mu_0(\Sp(\rmL_1^*,\bv))= 0$. We thus
conclude~\eqref{eq:bV} (and consequently Proposition~\ref{prop:p_geq_1}) as follows.
\begin{align*}
& \mu_0\left(\by_1 \in \reals^D:
P_{\rmL^*_1}(\by_1)P^{\perp}_{\rmL^*_1}(\by_1)^T\dist(\by_1,\rmL^*_1)^{p-2}=
\bW\right)
 \\   \leq
&  \mu_0\left(\by_1 \in \reals^D : P_{\rmL^*_1}(\by_1)= c\bv\,\,\text{for some
$c\in\reals$} \right) \\
= &
\mu_0\left(\by_1 \in \reals^D : \by_1\in\Sp(\rmL_1^*,\bv)
\right)= 0.
 \end{align*}

\subsection{Proof of Lemma~\ref{lemma:mean}} \label{app:lemma_proof}
We assume WLOG that $i=1$ in \eqref{eq:lemma2}. We thus need to
prove that for all $\hat{\rmL} \in \GDd$:
\begin{align}
&\mathbb{E}_{\mu_1}(\dist(\bx_1,\hat{\rmL})^p)+\mathbb{E}_{\mu_2}(\dist(\bx_2,\hat{\rmL})^p)
\nonumber\\\label{eq:contradiction}\geq&
\mathbb{E}_{\mu_1}(\dist(\bx_1,\rmL_1)^p)+\mathbb{E}_{\mu_2}(\dist(\bx_2,\rmL_1)^p).\end{align} We denote the
principal angles between $\rmL_1$ and $\rmL_2$ by
$\{\theta_i\}_{i=1}^d$, the principle vectors of  $\rmL_1$ and
$\rmL_2$ by $\{\bv_i\}_{i=1}^d$ and $\{\hat{\bv}_i\}_{i=1}^d$ and
the complementary orthogonal system for $\rmL_2$ w.r.t.~$\rmL_1$ by
$\{\mathbf{u}_i\}_{i=1}^{d}$.

We notice that we can restrict the set of subspaces $\hat{\rmL}$
satisfying~\eqref{eq:contradiction}. First of all, we only need to
consider subspaces
\begin{equation}\label{eq:subs_cond1} \hat{\rmL}\in \rmL_1+\rmL_2 \,.\end{equation}Indeed,
the LHS of \eqref{eq:contradiction} is the same if we replace
$\hat{\rmL}$ by $\hat{\rmL}\cap (\rmL_1+\rmL_2)$.

Second of all, we claim that it is sufficient to assume that
\begin{equation}\Sp(\hat{\mathbf{v}}_i,\mathbf{v}_i) \nsubseteq
\hat{\rmL} \,\,\text{ for all $1 \leq i\leq
k$}.\label{eq:subs_cond2}\end{equation}
We first show this for $i=1$. We suppose on the contrary
to~\eqref{eq:subs_cond2} that
$\hat{\mathbf{v}}_1,\mathbf{v}_1\in\hat{\rmL}$. Since $\hat{\rmL}$
is $d$-dimensional, there exists $2\leq j\leq d$ (assume WLOG $j=2$)
such that it
does not contain both $\hat{\bv}_j$ and $\bv_j$. For any pair of
points $\bx=\sum_{i=1}^{d}a_i\mathbf{v}_i \in \rmL_1$ and
$\hat{\bx}=\sum_{i=1}^{d}a_i\hat{\mathbf{v}}_i \in \rmL_2$:
\[\dist(\bx,\hat{\rmL})=\sqrt{\sin(\theta_2)^2a_2^2+\tau_1^2}
\text{\,\,\,and\,\,\,}
\dist(\hat{\bx},\hat{\rmL})=\sqrt{\sin(\theta_1)^2a_1^2+\tau_2^2},\]
where
\[
\tau_1=\dist\left(\sum_{i=3}^{d}a_i\mathbf{v}_i,\hat{\rmL}\right)\text{\,\,\,and\,\,\,}
\tau_2=\dist\left(\sum_{i=3}^{d}a_i\hat{\mathbf{v}}_i,\hat{\rmL}\right).
\]
Now, for
$\tilde{\rmL}=\Sp(\hat{\rmL}\setminus\{\mathbf{v}_1,\hat{\mathbf{v}}_1\},\mathbf{v}_1,\mathbf{v}_2)$,
  we obtain that
\[\dist(\hat{\bx},\tilde{\rmL})=\sqrt{\sin(\theta_1)^2a_1^2+\sin(\theta_2)^2a_2^2+\tau_2^2}\text{\,\,\,and\,\,\,} \dist(\bx,\tilde{\rmL})=
\tau_1 .\] Therefore
\[
\dist(\bx,\tilde{\rmL})^p+\dist(\hat{\bx},\tilde{\rmL})^p\leq
\dist(\bx,\hat{\rmL})^p+\dist(\hat{\bx},\hat{\rmL})^p
\]
and by direct integration we have that
\begin{align}&\mathbb{E}_{\mu_1}(\dist(\bx_1,\tilde{\rmL})^p)+\mathbb{E}_{\mu_2}(\dist(\bx_2,\tilde{\rmL})^p)\nonumber\\\leq&
\mathbb{E}_{\mu_1}(\dist(\bx_1,\hat{\rmL})^p)+
\mathbb{E}_{\mu_2}(\dist(\bx_2,\hat{\rmL})^p).\label{eq:contradiction1}\end{align}

Since $\tilde{\rmL}$ satisfies~\eqref{eq:subs_cond2} for $i=1$ and
satisfies \eqref{eq:contradiction1}, we conclude that proving \eqref{eq:contradiction}
only for $\hat{\rmL}$
satisfying~\eqref{eq:subs_cond2} with $i=1$ implies it for all $\hat{\rmL} \in \GDd$.
Similarly, we can assume
that $\hat{\rmL}$ satisfies~\eqref{eq:subs_cond2} for all $1\leq i\leq
k$, by verifying \eqref{eq:contradiction1} for
$\tilde{\rmL}=\Sp(\hat{\rmL}\setminus\{\mathbf{v}_i,\hat{\mathbf{v}}_i\},\mathbf{v}_i,\mathbf{v}_j)$
for some $1 \leq j\neq i \leq k$ such that $\Sp(\hat{\mathbf{v}}_j,\mathbf{v}_j) \nsubseteq
\hat{\rmL}$.

It follows from~\eqref{eq:subs_cond1} and~\eqref{eq:subs_cond2} that
$\hat{\rmL}$ can be represented as follows:
\[\hat{\rmL}=\Sp(\mathbf{v}_1^*,\mathbf{v}_2^*,\cdots,\mathbf{v}_d^*),\]
where \[\mathbf{v}_i^*=\cos{\theta _i^*}\mathbf{v}_i+\sin{\theta_i
^*}\mathbf{u}_i.\]
Thus, for any pair of points
$\bx=\sum_{i=1}^{d}a_i\mathbf{v}_i \in \rmL_1$ and
$\hat{\bx}=\sum_{i=1}^{d}a_i\hat{\mathbf{v}}_i \in \rmL_2$:
\begin{equation}\dist(\bx,\hat{\rmL})=\sqrt{\sum_{i=1}^{d}\sin^2\theta_i
^*a_i^2}, \ \ 
\dist(\hat{\bx},\hat{\rmL})=\sqrt{\sum_{i=1}^{d}\sin^2(\theta_i-\theta_i^*)a_i^2},
\label{eq:nikos1} \end{equation}%
\begin{equation}\dist(\bx,\rmL_1)=0\text{\,\,\,and\,\,\,}
\dist(\hat{\bx},\rmL_1)=\sqrt{\sum_{i=1}^{d}\sin^2{\theta_i}a_i^2}.
\label{eq:nikos2} \end{equation}%
Applying~\eqref{eq:nikos1}, \eqref{eq:nikos2}, the triangle
inequality (for ``sine vectors'' in $\reals^d$) and then the
subadditivity of the sine function, we obtain that
\begin{align*}
\dist(\bx,\hat{\rmL})+\dist(\hat{\bx},\hat{\rmL})&\geq \sqrt{\sum_{i=1}^{d}\big(\sin{\theta_i ^* } +\sin{(\theta_i-\theta_i ^*)}\big)^2a_i^2}\\
\geq
\sqrt{\sum_{i=1}^{d}\sin^2{\theta_i}a_i^2}&=\dist(\hat{\bx},\rmL_1)
+ \dist({\bx},\rmL_1).
\end{align*}
Since $p\leq 1$, this inequality clearly implies that
\begin{equation}\label{eq:nikos3}
\dist(\bx,\hat{\rmL})^p+\dist(\hat{\bx},\hat{\rmL})^p\geq
\dist(\hat{\bx},\rmL_1)^p=\dist(\hat{\bx},\rmL_1)^p+\dist(\bx,\rmL_1)^p.
\end{equation}%
We conclude~\eqref{eq:contradiction} by appropriately integrating~\eqref{eq:nikos3} and consequently prove the lemma. 

\subsection{Proof of~\eqref{eq:verify_if}} \label{app:verify_if}
We denote
$\mathbf{B}=\sum_{i=1}^{N_1}P_{\rmL^*_1}(\bx_i)P_{\rmL^*_1}(\bx_i)^{T}$ and note that if $\max_{1 \leq j \leq d} \sigma_j \left(\bB -\delta_{*}
\mathbf{I}_{d}\right) < \eta$, then
$$\frac{\|\bB \bv-\delta_{*}\bv\|}{\|\bv\|} < \eta \ \text{ for all }
\bv \in \reals^{d}\setminus \{\mathbf{0}\},$$
and consequently
$$
\delta_{*} - \eta <\frac{\|\bB \bv \|}{\|\bv\|} \ \text{ for all } \bv
\in \reals^{d}\setminus \{\mathbf{0}\},$$
that is, $\min_{1 \leq j \leq d} \sigma_j(\mathbf{B}) > \delta_{*} - \eta$. 

\subsection{Proof of~\eqref{eq:gamma1}}
\label{sec:gamma1}
We first prove the following two lemmata.
\begin{lemma}\label{lemma:pertubation_lp}
For $p>1$ and any $\bx,\by\in\ball(\b0,1)$,
\[\|\|\bx\|^{p-2}\bx-\|\by\|^{p-2}\by\|\leq
\begin{cases} 2^{3-p}\|\bx-\by\|^{p-1}, &\text{if $1<p\leq 2$};\\
(p-1)\|\bx-\by\|, &\text{if $p>2$}.
\end{cases}
\]
\end{lemma}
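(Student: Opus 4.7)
My plan is to split the two cases and apply fundamentally different techniques, since $f(\bz):=\|\bz\|^{p-2}\bz$ has a bounded Jacobian on $\ball(\b0,1)$ only when $p\geq 2$.

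For the easier case $p>2$, I would apply the fundamental theorem of calculus along the segment $\bz(t)=\bx+t(\by-\bx)$, $t\in[0,1]$. Differentiating $g(t):=f(\bz(t))$ gives
\[
g'(t) = \|\bz\|^{p-2}(\by-\bx) + (p-2)\|\bz\|^{p-4}\langle\bz,\by-\bx\rangle\bz,
\]
so by Cauchy--Schwarz $\|g'(t)\|\le (1+|p-2|)\|\bz(t)\|^{p-2}\|\bx-\by\|=(p-1)\|\bz(t)\|^{p-2}\|\bx-\by\|$. Since $\bz(t)\in\ball(\b0,1)$ and the exponent $p-2$ is nonnegative, $\|\bz(t)\|^{p-2}\le 1$, and $g'$ extends continuously (to $\b0$) through any point where $\bz(t)=\b0$. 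Integrating from $0$ to $1$ yields $\|f(\bx)-f(\by)\|\le (p-1)\|\bx-\by\|$.

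For the harder case $1<p\leq 2$, the Jacobian of $f$ blows up at the origin, so I would instead reduce to a planar computation: both sides depend only on $a:=\|\bx\|$, $b:=\|\by\|$, and $c:=\cos\angle(\bx,\by)$. With $\alpha:=p-1\in(0,1]$, the law of cosines gives
\[
\|f(\bx)-f(\by)\|^2 = (a^\alpha-b^\alpha)^2 + 2a^\alpha b^\alpha(1-c),\qquad \|\bx-\by\|^2 = X+Y,
\]
where $X:=(a-b)^2$ and $Y:=2ab(1-c)$. I would estimate the two summands of the numerator separately: the first by $X^\alpha$, using the scalar inequality $|a^\alpha-b^\alpha|\le|a-b|^\alpha$ (concavity of $t\mapsto t^\alpha$), and the cross-term by
\[
2a^\alpha b^\alpha(1-c) = 2^{1-\alpha}\bigl(2ab(1-c)\bigr)^\alpha (1-c)^{1-\alpha} \le 2^{2(1-\alpha)}Y^\alpha,
\]
using $(1-c)^{1-\alpha}\le 2^{1-\alpha}$. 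Because $X^\alpha,Y^\alpha\le (X+Y)^\alpha$, combining yields
\[
\|f(\bx)-f(\by)\|^2 \le (1+4^{1-\alpha})(X+Y)^\alpha = (1+4^{2-p})\|\bx-\by\|^{2(p-1)},
\]
and the lemma follows from the trivial numerical inequality $1+4^{2-p}\le 4^{3-p}$ (equivalently $3\cdot 4^{2-p}\ge 1$, automatic for $p\le 2$).

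The principal obstacle is the $1<p<2$ regime, where $f$ fails to be Lipschitz and any mean-value or Taylor-type argument must contend with the singularity at $\b0$. The planar split of $\|f(\bx)-f(\by)\|^2$ into a purely radial piece $(a^\alpha-b^\alpha)^2$ and a purely angular piece $2a^\alpha b^\alpha(1-c)$ circumvents this by reducing the question to elementary one-variable inequalities. The slack $4^{3-p}-(1+4^{2-p})=3\cdot 4^{2-p}-1\ge 0$ is precisely what allows the proof to absorb its non-sharpness into the clean constant $2^{3-p}$ stated in the lemma.
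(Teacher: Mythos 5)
Your proof is correct, and it takes a genuinely different route from the paper's. The paper argues by optimization: after treating the boundary cases $\|\bx\|=1$ or $\|\by\|=1$ directly, it studies the ratio $\|\,\|\bx\|^{p-2}\bx-\|\by\|^{p-2}\by\|^2/\|\bx-\by\|^{2(p-1)}$ as a function of the two variables $r=\log(\|\bx\|/\|\by\|)$ and $t=2\bx^T\by/(\|\bx\|\|\by\|)$, locates maximizers either on the boundary of the admissible region or at interior critical points, and then checks each extremal configuration (collinear $\bx,\by$; $\|\bx\|=\|\by\|$; one point at the origin; one point on the sphere) case by case. You bypass that case analysis entirely: for $p>2$ you integrate the Jacobian bound $\|\bz\|^{p-2}\|\by-\bx\|+(p-2)\|\bz\|^{p-2}\|\by-\bx\|\le(p-1)\|\bx-\by\|$ along the segment, which is where the hypothesis $\bx,\by\in\ball(\b0,1)$ enters and where the one-point singularity at $\bz(t)=\b0$ is indeed harmless for $p>2$; for $1<p\le 2$ you decompose $\|f(\bx)-f(\by)\|^2$ into the radial part $(a^\alpha-b^\alpha)^2\le|a-b|^{2\alpha}$ and the angular part $2a^\alpha b^\alpha(1-c)\le 4^{1-\alpha}(2ab(1-c))^\alpha$, and finish with monotonicity of $s\mapsto s^\alpha$ and the numerical slack $1+4^{2-p}\le 4^{3-p}$. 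I verified each scalar step (the subadditivity $|a^\alpha-b^\alpha|\le|a-b|^\alpha$, the identity behind the cross-term bound together with $(1-c)^{1-\alpha}\le 2^{1-\alpha}$, and the final constant comparison), and they all hold. What your route buys is brevity and robustness: it avoids any critical-point analysis of a non-Lipschitz map, it shows the $1<p\le 2$ estimate needs no unit-ball restriction at all (consistent with its homogeneity of degree $p-1$), and it makes visible exactly how much slack sits in the constant $2^{3-p}$; the paper's optimization argument, in exchange, identifies where the extremal configurations lie (equal norms, collinearity, or points on the sphere), which is mildly informative about sharpness but costs a longer case enumeration.
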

\begin{proof}
First we consider the case where either $\|\bx\|=1$ or $\|\by\|=1$. WLOG we assume that $\|\bx\|=1$. When $p> 2$
\begin{align*}
&\|\|\bx\|^{p-2}\bx-\|\by\|^{p-2}\by\|=\|\bx-\|\by\|^{p-2}\by\|\leq \|\bx-\by\|\frac{\|\bx-\by\|+\|\by-\|\by\|^{p-2}\by\|}{\|\bx-\by\|}
\\\leq& \|\bx-\by\|\frac{1-\|\by\|^{p-1}}{1-\|\by\|}\leq (p-1) \|\bx-\by\|,
\end{align*}
where the second inequality follows from the identity
$1-\|\by\|+\|\by-\|\by\|^{p-2}\by\|=1-\|\by\|^{p-1}$, the inequality $\|\bx-\by\|\geq 1-\|\by\|$ and the fact that the function $f(t)=(t+c)/t$ is non-increasing for $c \geq 0$.

On the other hand, when $1 < p \leq 2$
\begin{equation}
\label{eq:last_ineq}
\|\|\bx\|^{p-2}\bx-\|\by\|^{p-2}\by\|=\|\bx-\|\by\|^{p-2}\by\|\leq \|\bx-\by\|+\|\by-\|\by\|^{p-2}\by\| \leq 2 \|\bx-\by\|,
\end{equation}
where the last inequality of~\eqref{eq:last_ineq} follows from the inequality
\begin{equation}
\label{eq:last_ineq2}
\|\by-\|\by\|^{p-2}\by\|\leq \|\frac{\by}{\|\by\|}-\by\| \leq  \|\bx-\by\|,
\end{equation}
which we explain as follows.
Since $\by$, $\|\by\|^{p-2}\by$  and  ${\by}/{\|\by\|}$ lie on the same line
through the origin and  since $\|\by\| \leq \|\|\by\|^{p-2}\by\| \leq 1$, $\|\by\|^{p-2}\by$ is located
between $\by$ and ${\by}/{\|\by\|}$ and this clarifies the first inequality in~\eqref{eq:last_ineq2}.
The second inequality in~\eqref{eq:last_ineq2} follows from the following observation:
$\|{\by}/{\|\by\|}-\by\| =1 - \|\by\| = \|\bx\| - \|\by\| \leq \|\bx-\by\|$.

The main idea of the proof for the general case is to arbitrarily fix $\|\bx-\by\|$ and maximize  $\|\|\bx\|^{p-2}\bx-\|\by\|^{p-2}\by\|$
We transform the problem into maximization over the two variables: $r=\log({\|\bx\|}/{\|\by\|})$ and $t=2\bx^T\by/(\|\bx\|\|\by\|)$ of the function
\[
h(r,t):=\frac{e^{(p-1)r}+e^{-(p-1)r}+t}{(e^{r}+e^{-r}+t)^{p-1}}=\left(\frac{\|\|\bx\|^{p-2}\bx-\|\by\|^{p-2}\by\|}{\|\bx-\by\|^{p-1}}\right)^2,
\]
when $\|\bx-\by\|>0$ is fixed (if $\|\bx-\by\|=0$ then~\eqref{lemma:pertubation_lp} is trivial).

We first find the boundary of the domain of this function when $c_0:=\|\bx-\by\|$ is fixed. We then maximize the function on the boundary and later find a local maximizer within the interior of this domain. The variable $t$ obtains values in $[-2,2]$. For any fixed $t$, we find the values that $r$ may obtain.
We note that $\|\bx\|^2+\|\by\|^2-t\|\bx\|\|\by\|=c_0^2$ and
$e^{2r}+1-t e^r=c_0^2/\|\by\|^2$. Since $\|\by\|\leq 1$,
if $t$ is fixed and $r\leq 0$, then $r$ is in the domain $e^{2r}+1-te^r\geq c_0^2$,
whose boundary is $e^{2r}+1-te^r =
c_0^2$. That is, when $r \leq 0$ (i.e., $\|\bx\| \leq \|\by\|$), then $\|\by\|=1$.
Similarly, when $r\geq 0$ the boundary of the domain of $h(r,t)$ corresponds to the case $\|\bx\|=1$.

Next, we verify \eqref{eq:gamma1} for points on the boundary of the domain of $h(r,t)$ (it is sufficient to verify it for maximizers on this boundary). For fixed $-2<t<2$, points on the boundary correspond to $\|\bx\|=1$ or $\|\by\|=1$ and we have already verified \eqref{eq:gamma1} in this case. We also need to consider the boundary points $t=-2$ or $t=2$, equivalently, $\bx/\|\bx\|=-\by/\|\by\|$ or $\bx/\|\bx\|=\by/\|\by\|$. We thus find the maximal values of $h(r,2)$ and $h(r,-2)$ (when its denominator is fixed).
The function $\sqrt{h(r,-2)}$ (i.e., with $\bx$ and $\by$ satisfying $\bx/\|\bx\|=-\by/\|\by\|$) is equivalent to
\[
\frac{a^{p-1}+b^{p-1}}{(a+b)^{p-1}},\,\,\,\text{where $a=\|\bx\|$ and $b=\|\by\|$.}
\]
Its maximum is obtained when $a=b$ if $1< p\leq 2$ and when $a=0$ or $b=0$ if $p>2$.
The function $\sqrt{h(r,2)}$ (i.e., with $\bx$ and $\by$ satisfying $\bx/\|\bx\|=\by/\|\by\|$) is equivalent to
\[
\frac{a^{p-1}-b^{p-1}}{(a-b)^{p-1}}.
\]
Using the convexity/concavity of the power function $x^{p-1}$ for different values of $p$ we note that if $p\geq 2$ then its maximum is obtained when $a=1$ or $b=1$ and if $1<p\leq 2$ then its maximum is obtained when $b=0$.
It is immediate to note that~\eqref{eq:gamma1} is satisfied when $a=0$ (i.e., $\bx=0$) or $b=0$ (i.e., $\by=0$). We have also verified above that it is satisfied when $a=1$ or $b=1$.
We also show that~\eqref{eq:gamma1} is satisfied when $a=b$ and $1<p\leq 2$. Indeed, $\|\bx-\by\|\leq
\|\bx\|+\|\by\|=2\|\bx\|$ and thus $\|\bx-\by\|^{p-2}\geq
(2\|\bx\|)^{p-2}$, which implies that
\begin{align*}
&2^{3-p}\|\bx-\by\|^{p-1}=2^{3-p}\|\bx-\by\|^{p-2}\,\|\bx-\by\|
\geq 2^{3-p} (2\|\bx\|)^{p-2}\,\|\bx-\by\|\\=&
2 \|\bx\|^{p-2}\,\|\bx-\by\|
=2 \|\|\bx\|^{p-2}\bx-\|\by\|^{p-2}\by\|
\geq \|\|\bx\|^{p-2}\bx-\|\by\|^{p-2}\by\|.
\end{align*}
We therefore verified~\eqref{eq:gamma1} for points corresponding to the boundary of $h$.

At last, we consider the interior of the domain of $h$. If $(r_0,t_0)$ is a local maximizer of $h(r,t)$, then
\[
0=\frac{\di}{\di t}h(r,t)\Big|_{(r,t)=(r_0,t_0)}= \frac{(e^{r_0}+e^{-r_0}+t_0)-(p-1)(e^{(p-1)r_0}+e^{-(p-1)r_0}+t_0)}{(e^{r_0}+e^{-r_0}+t_0)^{p}}
\]
and  $(e^{r_0}+e^{-r_0}+t_0)=(p-1)(e^{(p-1)r_0}+e^{-(p-1)r_0}+t_0)$.  Therefore
\[
h(r_0,t_0)=\frac{p-1}{(e^{r_0}+e^{-r_0}+t_0)^{p-2}}.
\]
Furthermore, its maximal value (when $t_0$ is fixed) is obtained when $r_0=0$ or $r_0 =\infty$ or $r_0=-\infty$. Equivalently, it is obtained when $a=b$ or $a=0$ or $b=0$. To conclude the proof we only need to verify that~\eqref{eq:gamma1} is satisfied when $a=b$ and any $1<p\leq 2$ (all the other cases were discussed above).
In this case, we use the fact that $a\leq 1$ and $a^{p-2}\leq 1$ and consequently note that
\[\|\|\bx\|^{p-2}\bx-\|\by\|^{p-2}\by\|=a^{p-2}\|\bx-\by\|\leq
\|\bx-\by\|\leq (p-1)\|\bx-\by\|.\]

\end{proof}
\begin{lemma}\label{lemma:deriative}
If $f, g:\reals\rightarrow \reals$, $g(0)=0$, $g$ is increasing and
\begin{equation}
\text{$|f'(x_1)-f'(x_2)|\leq g(|x_1-x_2|)$ \ for any $x_1,x_2\in\reals$,}\label{eq:derivative_diff}
\end{equation}
then the following inequality is satisfied for all $x_0 \in \reals$ and for $\hat{x}:=\min_{x\in\reals}f(x)$:
\[
f(x_0)-f(\hat{x})\geq |f'(x_0)|g^{-1}(|f'(x_0)|) - \int_{0}^{g^{-1}(|f'(x_0)|)}g(x)\di x.
\]
\end{lemma}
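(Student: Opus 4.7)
The plan is to bound $f(x_0)-f(\hat{x})$ by examining how fast $f$ can decrease as we move away from $x_0$ in the direction dictated by the sign of $f'(x_0)$, using the $g$-Lipschitz condition on $f'$ to control the derivative, and then invoking the minimality of $\hat{x}$.

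First I would reduce to the case $f'(x_0)\geq 0$ by reflection: if $f'(x_0)<0$, replace $f$ by $\tilde f(x)=f(-x)$, which has the same minimum value and the same modulus-of-continuity hypothesis, and satisfies $\tilde f'(x_0)=-f'(x_0)>0$. The case $f'(x_0)=0$ is trivial since $g(0)=0$ forces $g^{-1}(0)=0$, making the right-hand side vanish.

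Next, assuming $f'(x_0)\geq 0$, observe that for any $t\geq 0$,
\[
f(x_0)-f(x_0-t)=\int_{0}^{t} f'(x_0-u)\,\di u.
\]
The hypothesis $|f'(x_0-u)-f'(x_0)|\leq g(u)$ gives $f'(x_0-u)\geq f'(x_0)-g(u)$, and integrating yields
\[
f(x_0)-f(x_0-t)\;\geq\; t\,f'(x_0)-\int_{0}^{t}g(u)\,\di u.
\]
Since $\hat{x}$ is the global minimizer, $f(x_0)-f(\hat{x})\geq f(x_0)-f(x_0-t)$ for every $t\geq 0$, so the same lower bound holds with $f(x_0-t)$ replaced by $f(\hat{x})$.

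Finally I would optimize over $t$. The derivative of $t\,f'(x_0)-\int_{0}^{t}g(u)\,\di u$ with respect to $t$ is $f'(x_0)-g(t)$, which vanishes precisely at $t^{*}=g^{-1}(f'(x_0))$, and by monotonicity of $g$ this is the unique maximum. Plugging $t^{*}$ back in, and restoring the absolute value via the reflection reduction, delivers
\[
f(x_0)-f(\hat{x})\;\geq\;|f'(x_0)|\,g^{-1}(|f'(x_0)|)-\int_{0}^{g^{-1}(|f'(x_0)|)}g(u)\,\di u,
\]
which is the claim. This argument is essentially a one-variable calculus computation, so I anticipate no genuine obstacle; the only bookkeeping point is the sign reduction and invoking the monotonicity of $g$ to make $g^{-1}$ well-defined on the relevant interval.
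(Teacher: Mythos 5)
Your argument is correct and is essentially the paper's proof: after the same WLOG reduction to $f'(x_0)\geq 0$, you lower-bound $f(x_0)-f(x_0-t)$ by $t\,f'(x_0)-\int_0^t g(u)\,\di u$ via $f'(x_0-u)\geq f'(x_0)-g(u)$ and use minimality of $\hat{x}$, exactly as the paper does with the specific choice $t=g^{-1}(f'(x_0))$; your optimization over $t$ merely explains why that choice is made. (Minor bookkeeping: under the reflection $\tilde f(x)=f(-x)$ the base point becomes $-x_0$, so one should write $\tilde f'(-x_0)=-f'(x_0)$, but this does not affect the argument.)
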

\begin{proof}
WLOG we assume that $f'(x_0) \geq 0$. Applying this assumption, \eqref{eq:derivative_diff} and the definition of $\hat{x}$,
we conclude the lemma as follows:
\begin{align*}
&f(x_0)-f(\hat{x})\geq f(x_0)-f(x_0-g^{-1}(f'(x_0)))=\int_{x_0-g^{-1}(f'(x_0))}^{x_0}f'(x)\di x \geq
\\& \int\limits_{x_0-g^{-1}(f'(x_0))}\limits^{x_0}\left(f'(x_0)-g(x_0-x)\right)\di x= f'(x_0)g^{-1}(f'(x_0)) - \int\limits_{0}\limits^{g^{-1}(f'(x_0))}g(x)\di x.
\end{align*}
\end{proof}

To prove \eqref{eq:gamma1}, we restrict   $E_{\mu}(e_{\ell_p}(\bx,\rmL))$ to a geodesic line $\rmL: [0,\infty)\rightarrow \GDd$ with $\rmL(0)=\rmL_1^*$. Then we use the following inequality to find the lower bound of $\zeta_1$:
\begin{align}
\zeta_1 =  &E_{\mu}(e_{\ell_p}(\bx,
\rmL_1^*))-\min_{\rmL\in\GDd}E_{\mu}(e_{\ell_p}(\bx,
\rmL))\nonumber\\\geq& E_{\mu}(e_{\ell_p}(\bx,
\rmL(0)))-\min_{t\geq 0}E_{\mu}(e_{\ell_p}(\bx,
\rmL(t))).\label{eq:gamma_1_lower_bound}
\end{align}
The lower bound of the RHS of \eqref{eq:gamma_1_lower_bound} will  be obtained by applying Lemma~\ref{lemma:deriative} to $f(t)=E_{\mu}(e_{\ell_p}(\bx,
\rmL(t)))$ with a specific $L(t)$.

We choose this $\rmL(t)$ such that $\dist_G(\rmL(0),\rmL(1))=1$ and
\begin{equation}\label{eq:choose_direction}
\frac{\di}{\di t}E_{\mu}(e_{\ell_p}(\bx,
\rmL(t)))\Big|_{t=0}= -p\|E_{\mu}(\mathbf{D}_{\rmL^*_1,\bx,p})\|_F.
\end{equation}
To show that this is possible, we recall (see~\eqref{eq:main2_a1}) that
\begin{equation}\label{eq:choose_direction2}
\frac{\di}{\di t}E_{\mu}(e_{\ell_p}(\bx,
\rmL(t)))\Big|_{t=0}=
-p\tr(\bC\bV  E_{\mu}(\mathbf{D}_{\rmL^*_1,\bx,p})\bU^T),
\end{equation}
where $\|\bC\|_F=1$ (since we use the distance defined in \eqref{eq:dist_grassman}).
Let us denote the thin SVD of $E_{\mu}(\mathbf{D}_{\rmL^*_1,\bx,p})$ by $\bV_0\bSigma_0\bU_0^T$.
We choose the matrices $\bV$, $\bU$ and $\bC$, which determine $\rmL(t)$ as follows: $\bV=\bV_0^T$, $\bU=\bU_0^T$ and
$\bC=\bSigma_0/ \|\bSigma_0\|_F.$ This choice indeed implies~\eqref{eq:choose_direction}
as a consequence of~\eqref{eq:choose_direction2} and the following observation:
\begin{align*}
p\tr(\bC\bV E_{\mu}(\mathbf{D}_{\rmL^*_1,\bx,p})\bU^T)=
p\tr(\bSigma_0^2)/\|\bSigma_0\|_F
=p\|\bSigma_0\|_F
=p\| E_{\mu}(\mathbf{D}_{\rmL^*_1,\bx,p})\|_F.
\end{align*}

We proceed by finding $g$ for $f(t)=E_{\mu}(e_{\ell_p}(\bx,
\rmL(t)))$  so that \eqref{eq:derivative_diff} is satisfied.
It follows from \eqref{eq:choose_direction2}
that for $t_2>t_1\geq 0$:
\begin{align}\label{eq:derivative_diff3}
&\left|f'(t_2)-f'(t_1)\right|
\leq
p \, E_{\mu}
\left\langle\bC,\bV(\mathbf{D}_{\rmL(t_1),\bx,p}-\mathbf{D}_{\rmL(t_2),\bx,p})\bU^T\right\rangle_F
\\\nonumber\leq &
p \, E_{\mu}
 \|\mathbf{D}_{\rmL(t_1),\bx,p}-\mathbf{D}_{\rmL(t_2),\bx,p}\|_F.\end{align}

Combining the following observations
\[
\|P_{\rmL(t_1)}(\bx)-P_{\rmL(t_2)}(\bx)\|\leq \|P_{\rmL(t_1)}-P_{\rmL(t_2)}\|\leq \dG(\rmL(t_1),\rmL(t_2))= t_2-t_1,
\]
\[\text{$\|P_{\rmL(t_1)^\perp}
(\bx) \dist(\bx,\rmL(t_1))^{(p-2)}\|\leq 1$ and $\|P_{\rmL(t_2)}(\bx)\|\leq 1$,}\]
with the following consequence of Lemma~\ref{lemma:pertubation_lp}
\begin{align*}
&\|P_{\rmL(t_1)^\perp}
(\bx) \dist(\bx,\rmL(t_1))^{(p-2)}-P_{\rmL(t_2)^\perp}
(\bx) \dist(\bx,\rmL(t_2))^{(p-2)}\|\\
\leq &\begin{cases} 2^{3-p}\|P_{\rmL(t_1)^\perp}
(\bx)-P_{\rmL(t_2)^\perp}
(\bx)\|^{p-1},&\text{if $1<p\leq 2$};\\
(p-1) \|P_{\rmL(t_1)^\perp}
(\bx)-P_{\rmL(t_2)^\perp}
(\bx)\|,&\text{if $p\geq 2$},
\end{cases}
\end{align*}
we obtain that
\begin{align}
\nonumber&\|\mathbf{D}_{\rmL(t_1),\bx,p}-\mathbf{D}_{\rmL(t_2),\bx,p}\|_F =
\|P_{\rmL(t_1)}(\bx)P_{\rmL(t_1)^\perp}
(\bx)^T \dist(\bx,\rmL(t_1))^{(p-2)}
\\\nonumber&\,
-P_{\rmL(t_2)}(\bx)P_{\rmL(t_2)^\perp}
(\bx)^T \dist(\bx,\rmL(t_2))^{(p-2)}\|_F
\\\nonumber\leq&
\|P_{\rmL(t_1)^\perp}
(\bx) \dist(\bx,\rmL(t_1))^{(p-2)}\|\|P_{\rmL(t_1)}(\bx)-P_{\rmL(t_2)}(\bx)\|
\\\nonumber&+\|P_{\rmL(t_2)}(\bx)\|\|P_{\rmL(t_1)^\perp}
(\bx) \dist(\bx,\rmL(t_1))^{(p-2)}-P_{\rmL(t_2)^\perp}
(\bx) \dist(\bx,\rmL(t_2))^{(p-2)}\|
\\\nonumber\leq&
\|P_{\rmL(t_1)}(\bx)-P_{\rmL(t_2)}(\bx)\|\\\nonumber&+
\|P_{\rmL(t_1)^\perp}
(\bx) \dist(\bx,\rmL(t_1))^{(p-2)}-P_{\rmL(t_2)^\perp}
(\bx) \dist(\bx,\rmL(t_2))^{(p-2)}\|
\\\nonumber\leq&
\begin{cases} (t_2-t_1)+(p-1)\,(t_2-t_1),&\text{if $p\geq 2$};\\
(t_2-t_1)+2^{3-p}\,(t_2-t_1)^{p-1},&\text{if $1<p<2$}
\end{cases}
\\\leq&
\begin{cases} p\,(t_2-t_1),&\text{if $p\geq 2$};\\
2^{4-p}\,\max((t_2-t_1)^{p-1},t_2-t_1),&\text{if $1<p<2$.}
\end{cases}
\label{eq:derivative_diff4}
\end{align}

In view of \eqref{eq:derivative_diff}, \eqref{eq:derivative_diff3}, \eqref{eq:derivative_diff4} and our choice of $f$, we define:
\begin{equation}\label{eq:derivative_diff5}
g(t)=\begin{cases} p\,t,&\text{if $p\geq 2$};\\
2^{4-p}\,\max(t^{p-1},t),&\text{if $1<p<2$.}
\end{cases}
\end{equation}
We note that its inverse function is
\begin{equation}\label{eq:derivative_diff6}
g^{-1}(t)=\begin{cases} \frac{1}{p}\,t,&\text{if $p\geq 2$};\\ \min(2^{p-4}\,t,(2^{p-4}\,t)^{\frac{1}{p-1}}),&\text{if $1<p<2$.}
\end{cases}
\end{equation}

Applying Lemma~\ref{lemma:deriative} with $f$ and $g$ as above and $x_0=0$, we prove \eqref{eq:gamma1} as follows.
We denote $c_1=\|E_{\mu}\left(\tr(\bC\bV\mathbf{D}_{\rmL_1^*,\bx,p}\bU^T)\right)\|_F$. When $p\geq 2$, $f'(x_0)=pc_1$ and
\begin{align*}
&\zeta_1\geq p c_1\cdot   \frac{p c_1}{p} -\int_{0}^\frac{p c_1}{p} p\,x\di x
= \frac{p^2c_1^2}{p}-\frac{p^2c_1^2}{2p}\\=&\frac{p^2c_1^2}{2p}=\frac{p}{2}\|E_{\mu}\left(\tr(\bC\bV\mathbf{D}_{\rmL_1^*,\bx,p}\bU^T)\right)\|_F^2.
\end{align*}
When $1<p<2$, applying
\begin{align*}
&\tr(\bC\bV\mathbf{D}_{\rmL_1^*,\bx,p}\bU^T)
\leq \|\bC\|_F\|\bV\mathbf{D}_{\rmL_1^*,\bx,p}\bU^T\|_F
= \|\bC\|_F\|\mathbf{D}_{\rmL_1^*,\bx,p}\|_F\\\leq&
\|\bC\|_F\|\|P_{\rmL_1^*}(\bx)\|\,\|P_{\rmL_1^{*\perp}}
(\bx)^T \dist(\bx,\rmL(t_1))^{(p-2)}\|\leq 1,
\end{align*}
we conclude that $c_1\leq 1$ and $2^{p-4}\,pc_1\leq 2^{p-3}c_1< 1$. Therefore, $g^{-1}(t)=(2^{p-4}\,t)^{\frac{1}{p-1}}=2^{\frac{p-4}{p-1}} \,t^{\frac{1}{p-1}}$ for $0\leq t \leq pc_1$ and
\begin{align*}
&\zeta_1\geq p c_1\cdot  2^{\frac{p-4}{p-1}} \,(p c_1)^{\frac{1}{p-1}} -\int_{0}^{2^{\frac{p-4}{p-1}} \,(p c_1)^{\frac{1}{p-1}}} 2^{4-p}\,x^{p-1}\di x
= 2^{\frac{p-4}{p-1}} \,(p c_1)^{\frac{p}{p-1}}\\-&p^\frac{1}{p-1} 2^{\frac{p-4}{p-1}}c_1^{\frac{p}{p-1}}=(p-1)p^\frac{1}{p-1} 2^{\frac{p-4}{p-1}}\|E_{\mu}\left(\tr(\bC\bV\mathbf{D}_{\rmL_1^*,\bx,p}\bU^T)\right)\|_F^\frac{p}{p-1}.
\end{align*}

\bibliographystyle{abbrv}
\bibliography{refs_6_19_13}

\begin{thebibliography}{10}

\bibitem{higher-order}
E.~Arias-Castro, G.~Chen, and G.~Lerman.
\newblock Spectral clustering based on local linear approximations.
\newblock {\em Electron. J. Statist.}, 5:1537--1587, 2011.

\bibitem{Arias-Castro05connect}
E.~Arias-Castro, D.~L. Donoho, X.~Huo, and C.~A. Tovey.
\newblock Connect the dots: how many random points can a regular curve pass
  through?
\newblock {\em Adv. in Appl. Probab.}, 37(3):571--603, 2005.

\bibitem{Bargiela_Hartley93}
A.~Bargiela and J.~K. Hartley.
\newblock Orthogonal linear regression algorithm based on augmented matrix
  formulation.
\newblock {\em Comput. Oper. Res.}, 20:829--836, October 1993.

\bibitem{wright_robust_pca09}
E.~J. Cand{\`e}s, X.~Li, Y.~Ma, and J.~Wright.
\newblock Robust principal component analysis?
\newblock {\em J. ACM}, 58(3):11, 2011.

\bibitem{candes_romberg_tao06}
E.~J. Cand{\`e}s, J.~Romberg, and T.~Tao.
\newblock Robust uncertainty principles: exact signal reconstruction from
  highly incomplete frequency information.
\newblock {\em Information Theory, IEEE Transactions on}, 52(2):489--509, 2006.

\bibitem{candes_romberg_tao_cpam06}
E.~J. Cand\`{e}s, J.~Romberg, and T.~Tao.
\newblock Stable signal recovery from incomplete and inaccurate measurements.
\newblock {\em Communications on Pure and Applied Mathematics},
  59(8):1207--1223, 2006.

\bibitem{DS91}
G.~David and S.~Semmes.
\newblock Singular integrals and rectifiable sets in {$\reals^n${\rm:}}
  au-del\`a des graphes {L}ipschitziens.
\newblock {\em Ast\'erisque}, 193:1--145, 1991.

\bibitem{Edwards1935}
W.~E. Deming and C.~G. Colcord.
\newblock The minimum in the gamma function.
\newblock {\em Nature}, 135(3422):pp. 917, 1935.

\bibitem{Ding+06}
C.~Ding, D.~Zhou, X.~He, and H.~Zha.
\newblock {R}1-{PCA}: rotational invariant ${L}_1$-norm principal component
  analysis for robust subspace factorization.
\newblock In {\em ICML '06: Proceedings of the 23rd international conference on
  Machine learning}, pages 281--288, New York, NY, USA, 2006. ACM.

\bibitem{dodge_L1_87}
Y.~Dodge.
\newblock An introduction to {$L_1$}-norm based statistical data analysis.
\newblock {\em Comput. Statist. Data Anal.}, 5(4):239--253, 1987.

\bibitem{donoho_most_large2}
D.~L. Donoho.
\newblock For most large underdetermined systems of equations, the minimal
  {$l\sb 1$}-norm near-solution approximates the sparsest near-solution.
\newblock {\em Comm. Pure Appl. Math.}, 59(7):907--934, 2006.

\bibitem{donoho_most_large1}
D.~L. Donoho.
\newblock For most large underdetermined systems of linear equations the
  minimal {$l\sb 1$}-norm solution is also the sparsest solution.
\newblock {\em Comm. Pure Appl. Math.}, 59(6):797--829, 2006.

\bibitem{Edelman98thegeometry}
A.~Edelman, T.~A. Arias, and S.~T. Smith.
\newblock The geometry of algorithms with orthogonality constraints.
\newblock {\em SIAM J. Matrix Anal. Appl.}, 20(2):303--353 (electronic), 1999.

\bibitem{ssc13}
E.~Elhamifar and R.~Vidal.
\newblock Sparse subspace clustering: Algorithm, theory, and applications.
\newblock {\em Pattern Analysis and Machine Intelligence, IEEE Transactions
  on}, PP(99):1--15, 2013.

\bibitem{Fischler81RANSAC}
M.~Fischler and R.~Bolles.
\newblock Random sample consensus: A paradigm for model fitting with
  applications to image analysis and automated cartography.
\newblock {\em Comm. of the ACM}, 24(6):381--395, June 1981.

\bibitem{Golub96}
G.~Golub and C.~V. Loan.
\newblock {\em Matrix Computations}.
\newblock John Hopkins University Press, Baltimore, Maryland, 1996.

\bibitem{moitra_pca2012}
M.~Hardt and A.~Moitra.
\newblock Can we reconcile robustness and efficiency in unsupervised learning?
\newblock In {\em Proceedings of the Twenty-sixth Annual Conference on Learning
  Theory (COLT 2013)}, 2013.

\bibitem{Harter_review1}
H.~L. Harter.
\newblock The method of least squares and some alternatives. {I}.
\newblock {\em Internat. Statist. Rev.}, 42:147--174, 1974.

\bibitem{Harter_review2}
H.~L. Harter.
\newblock The method of least squares and some alternatives: Part ii.
\newblock {\em International Statistical Review / Revue Internationale de
  Statistique}, 42(3):pp. 235--264+282, 1974.

\bibitem{huber_book}
P.~J. Huber and E.~Ronchetti.
\newblock {\em Robust statistics}.
\newblock Wiley series in probability and mathematical statistics. Probability
  and mathematical statistics. Wiley, 2009.

\bibitem{LMTZ2012}
G.~{Lerman}, M.~{McCoy}, J.~A. {Tropp}, and T.~{Zhang}.
\newblock {Robust computation of linear models, or How to find a needle in a
  haystack}.
\newblock {\em ArXiv e-prints}, Feb. 2012.

\bibitem{lp_recovery_part2_11}
G.~Lerman and T.~Zhang.
\newblock Robust recovery of multiple subspaces by geometric ${{l_p}}$
  minimization.
\newblock {\em Ann. Statist.}, 39(5):2686--2715, 2011.

\bibitem{sphericalPCA99}
N.~Locantore, J.~Marron, D.~Simpson, N.~Tripoli, J.~Zhang, K.~Cohen, G.~Boente,
  R.~Fraiman, B.~Brumback, C.~Croux, J.~Fan, A.~Kneip, J.~Marden, and D.~P.
\newblock Robust principal component analysis for functional data.
\newblock {\em TEST: An Official Journal of the Spanish Society of Statistics
  and Operations Research}, 8(1):1--73, June 1999.

\bibitem{robust_stat_book2006}
R.~A. Maronna, R.~D. Martin, and V.~J. Yohai.
\newblock {\em Robust statistics: Theory and methods}.
\newblock Wiley Series in Probability and Statistics. John Wiley \& Sons Ltd.,
  Chichester, 2006.

\bibitem{Mat95}
P.~Mattila.
\newblock {\em Geometry of Sets and Measures in Euclidean Spaces}.
\newblock Cambridge University Press, 1995.

\bibitem{robust_mccoy}
M.~McCoy and J.~Tropp.
\newblock Two proposals for robust {PCA} using semidefinite programming.
\newblock {\em Elec. J. Stat.}, 5:1123--1160, 2011.

\bibitem{Nyquist_l1_88}
H.~Nyquist.
\newblock Least orthogonal absolute deviations.
\newblock {\em Computational Statistics \& Data Analysis}, 6(4):361 -- 367,
  1988.

\bibitem{osborne_watson85}
M.~R. Osborne and G.~A. Watson.
\newblock An analysis of the total approximation problem in separable norms,
  and an algorithm for the total $l_1 $ problem.
\newblock {\em SIAM Journal on Scientific and Statistical Computing},
  6(2):410--424, 1985.

\bibitem{gautschi_kershaw06}
F.~Qi, B.-N. Guo, and C.-P. Chen.
\newblock The best bounds in {G}autschi-{K}ershaw inequalities.
\newblock {\em Math. Inequal. Appl.}, 9(3):427--436, 2006.

\bibitem{Rousseeuw_stat_book2}
P.~J. Rousseeuw and A.~M. Leroy.
\newblock {\em Robust regression and outlier detection}.
\newblock Wiley Series in Probability and Mathematical Statistics: Applied
  Probability and Statistics. John Wiley \& Sons Inc., New York, 1987.

\bibitem{Soltanolkotabi2012}
M.~Soltanolkotabi and E.~J. Cand\'es.
\newblock {A geometric analysis of subspace clustering with outliers.}
\newblock {\em Ann. Stat.}, 40(4):2195--2238, 2012.

\bibitem{Soltanolkotabi2013}
M.~Soltanolkotabi, E.~Elhamifar, and E.~J. Cand{\`e}s.
\newblock Robust subspace clustering.
\newblock {\em CoRR}, abs/1301.2603, 2013.

\bibitem{spath_watson1987}
H.~Sp\"{a}th and G.~A. Watson.
\newblock On orthogonal linear approximation.
\newblock {\em Numer. Math.}, 51:531--543, October 1987.

\bibitem{szarek83}
S.~J. Szarek.
\newblock The finite-dimensional basis problem with an appendix on nets of
  {G}rassmann manifolds.
\newblock {\em Acta Math.}, 151(3-4):153--179, 1983.

\bibitem{Szarek97metricentropy}
S.~J. Szarek.
\newblock Metric entropy of homogeneous spaces.
\newblock In {\em Quantum probability ({G}da\'nsk, 1997)}, volume~43 of {\em
  Banach Center Publ.}, pages 395--410. Polish Acad. Sci., Warsaw, 1998.

\bibitem{lp_ransac98}
P.~H.~S. Torr and A.~Zisserman.
\newblock Robust computation and parametrization of multiple view relations.
\newblock In {\em ICCV '98: Proceedings of the Sixth International Conference
  on Computer Vision}, page 727, Washington, DC, USA, 1998. IEEE Computer
  Society.

\bibitem{lp_ransac00}
P.~H.~S. Torr and A.~Zisserman.
\newblock {MLESAC}: A new robust estimator with application to estimating image
  geometry.
\newblock {\em Computer Vision and Image Understanding}, 78(1):138--156, 2000.

\bibitem{vershynin_book}
R.~Vershynin.
\newblock Introduction to the non-asymptotic analysis of random matrices.
\newblock In Y.~C. Eldar and G.~Kutyniok, editors, {\em {C}ompressed {S}ensing:
  {T}heory and {A}pplications}. Cambridge Univ Press, to appear.

\bibitem{watson2001orth_l1}
G.~A. Watson.
\newblock {\em Some Problems in Orthogonal Distance and Non-Orthogonal Distance
  Regression}.
\newblock Defense Technical Information Center, 2001.

\bibitem{Watson02}
G.~A. Watson.
\newblock On the gauss-newton method for $l_1$ orthogonal distance regression.
\newblock {\em IMA Journal of Numerical Analysis}, 22(3):345--357, 2002.

\bibitem{Grassmannian}
Y.-C. Wong.
\newblock Differential geometry of {G}rassmann manifolds.
\newblock {\em Proc. Nat. Acad. Sci. U.S.A.}, 57:589--594, 1967.

\bibitem{Xu2010}
H.~Xu, C.~Caramanis, and S.~Sanghavi.
\newblock Robust pca via outlier pursuit.
\newblock In {\em NIPS}, pages 2496--2504, 2010.

\bibitem{Xu2012}
H.~Xu, C.~Caramanis, and S.~Sanghavi.
\newblock Robust pca via outlier pursuit.
\newblock {\em Information Theory, IEEE Transactions on}, PP(99):1, 2012.

\bibitem{Yan06LSA}
J.~Yan and M.~Pollefeys.
\newblock A general framework for motion segmentation: Independent,
  articulated, rigid, non-rigid, degenerate and nondegenerate.
\newblock In {\em ECCV}, volume~4, pages 94--106, 2006.

\bibitem{robust_pca_ZL}
T.~Zhang and G.~Lerman.
\newblock A novel m-estimator for robust pca.
\newblock To appear in Journal of Machine Learning Research, available at
  arXiv:1112.4863.

\bibitem{MKF_workshop09}
T.~Zhang, A.~Szlam, and G.~Lerman.
\newblock Median {$K$}-flats for hybrid linear modeling with many outliers.
\newblock In {\em Computer Vision Workshops ({ICCV} Workshops), 2009 {IEEE}
  12th International Conference on Computer Vision}, pages 234--241, Kyoto,
  Japan, 2009.

\bibitem{LBF_cvpr10}
T.~Zhang, A.~Szlam, Y.~Wang, and G.~Lerman.
\newblock Randomized hybrid linear modeling by local best-fit flats.
\newblock In {\em Computer Vision and Pattern Recognition (CVPR), 2010 IEEE
  Conference on}, pages 1927 --1934, jun. 2010.

\bibitem{LBF_journal12}
T.~Zhang, A.~Szlam, Y.~Wang, and G.~Lerman.
\newblock Hybrid linear modeling via local best-fit flats.
\newblock {\em International Journal of Computer Vision}, 100:217--240, 2012.

\end{thebibliography}

\end{document}